\newcommand\E{\mathbbm{E}}
\newcommand\Var{\mathrm{Var}}
\newcommand\KL{\mathrm{KL}}
\newcommand\1{\mathbbm{1}}
\newcommand\Mult{\mathrm{Mult}}
\newcommand\Ber{\mathrm{Ber}}
\newcommand\F{\mathcal{F}}
\newcommand\Reg{\textup{\textrm{Reg}}_F}
\newcommand\Rew{\textup{\textrm{Rew}}_T}
\newcommand\SReg{\textup{\textrm{S-Reg}}_T}
\newcommand\eqineq{\stackrel{\mathclap{(*)}}{\leq}}
\newcommand\eqstar{\stackrel{\mathclap{(*)}}{=}}
\DeclareMathOperator*{\argmax}{arg\,max}
\DeclareMathOperator*{\argmin}{arg\,min}
\newtheorem{assumption}{Assumption}
\newtheorem{convention}{Convention}
\begin{document}

\title{The Survival Bandit Problem}

\author{\name Charles Riou \email charles@ms.k.u-tokyo.ac.jp \\
       \addr The University of Tokyo \& RIKEN Center for AIP \\
       Tokyo, Japan
       \AND
       \name Junya Honda \email honda@i.kyoto-u.ac.jp \\
       \addr Kyoto University \& RIKEN Center for AIP \\
       Kyoto, Japan
       \AND
       \name Masashi Sugiyama \email sugi@k.u-tokyo.ac.jp \\
     \addr RIKEN Center for AIP \& The University of Tokyo \\
       Tokyo, Japan}
\editor{My editor}

\maketitle

\begin{abstract}%
    In this paper, we introduce and study a new variant of the multi-armed bandit problem (MAB), called the survival bandit problem (S-MAB). While in both problems, the objective is to maximize the so-called cumulative reward, in this new variant, the procedure is interrupted if the cumulative reward falls below a preset threshold. This simple yet unexplored extension of the MAB follows from many practical applications. For example, when testing two medicines against each other on voluntary patients, people's lives and health are at stake, and it is necessary to be able to interrupt experiments if serious side effects occur or if the disease syndromes are not dissipated by the treatment. From a theoretical perspective, the S-MAB is the first variant of the MAB where the procedure may or may not be interrupted.
    We start by formalizing the S-MAB and we define its objective as the minimization of the so-called survival regret, which naturally generalizes the regret of the MAB. Then, we show that the objective of the S-MAB is considerably more difficult than the MAB, in the sense that contrary to the MAB, no policy can achieve a reasonably small (i.e., sublinear) survival regret. Instead, we minimize the survival regret in the sense of Pareto, i.e., we seek a policy whose cumulative reward cannot be improved for some problem instance without being sacrificed for another one. For that purpose, we identify two key components in the survival regret: the regret given no ruin (which corresponds to the regret in the MAB), and the probability that the procedure is interrupted, called the probability of ruin. We derive a lower bound on the probability of ruin, as well as policies whose probability of ruin matches the lower bound. Finally, based on a doubling trick on those policies, we derive a policy which minimizes the survival regret in the sense of Pareto, providing an answer to the open problem by \cite{perotto}.
\end{abstract}

\begin{keywords}%
multi-armed bandits, risk of ruin, probability of survival, probability of ruin.
\end{keywords}

\section{Introduction}

Many real life scenarios involve decision-making with partial information feedback, and it thus comes as no surprise that it has been a major field of study over the recent years. The historical motivating example for decision-making under partial information feedback pertains to medicine testing (see, e.g., \citealp{villar}), and is described as follows. Consider two medicines A and B designed to cure a specific disease. As pointed out by the US Food and Drug Administration\footnote{\url{https://www.fda.gov/patients/learn-about-drug-and-device-approvals/drug-development-process}}, before being made available for use by the public, those medicines undergo a very strict procedure composed of four pre-market safety monitoring stages, including a clinical research stage, where the medicines are being tested on people. In this stage, a large number $T$ of patients suffering from the disease are administered either of those medicines sequentially, and the objective is to cure as many patients as possible. During this process, it is crucial to balance two factors in apparent opposition: administering both medicines a sufficient number of times so as to gather information on the efficacy of both medicines, while at the same time, administering in priority whichever of the two medicines seems more effective in order to cure as many patients as possible. The former factor is called exploration, the latter is called exploitation, and the right balance between the two is known as the exploration-exploitation dilemma in the literature. To describe the above, the multi-armed bandit problem (MAB) has arisen in the literature as the most popular model, because of its simplicity and its rich theoretical interest. 

In the basic setting of the MAB (see, e.g., \citealp{bubeck9} or \citealp{lattimore} for an introduction), there are $K$ unknown distributions $F_1, \dots, F_K$ bounded in $[-1, 1]$, called arms, and a horizon $T$. In our medicine testing example, each arm $k\in [K] \triangleq \{1, \dots, K\}$ corresponds to a medicine and the distribution $F_k$ corresponds to its (randomized) effect on a patient. While the horizon $T$ corresponds to the total number of patients and each round $t\leq T$ corresponds to a patient in our example, those rounds are usually interpreted as discrete time steps, or rounds, in the MAB. At each round $t\in [T]$, an agent selects an arm $\pi_t\in [K]$ and observes a reward, denoted by $X_t^{\pi_t}$ and drawn from the distribution $F_{\pi_t}$. In our previous example, the reward $X_t^{\pi_t}$ corresponds to the effect of medicine $\pi_t$ on patient $t$. It can obviously be positive (when the medicine $\pi_t$ cures the patient), but it can also be negative when the patient is not cured by the medicine $\pi_t$ and/or some side effects negatively impact the risk-benefit balance. The objective of the problem is to maximize the expected cumulative reward, defined as $\E\left[\sum_{t=1}^T X_t^{\pi_t}\right]$, where the expectation is taken w.r.t. the arm distributions and the (potential) randomness in the agent's policy $(\pi_t)_{t\geq 1}$. This is equivalent to minimizing the expected cumulative regret compared to an agent who selects the arm $k\in [K]$ with the highest expectation at every round $t\leq T$,
defined as $\max_{k\in [K]}\E\left[\sum_{t=1}^T X_t^{k}\right] - \E\left[\sum_{t=1}^T X_t^{\pi_t}\right]$. 

The MAB has been extensively studied over the past decades, and both a lower bound on the regret (see \citealp{lai_robbins} and \citealp{burnetas_katehakis}) as well as policies matching this lower bound (see, e.g., \citealp{cappe}, \citealp{korda} or more recently \citealp{riou}) have been derived. The MAB is not only a theoretically rich topic, it also has a broad range of applications, from the aforementioned medicine testing (see, e.g., \citealp{villar} or \citealp{aziz}) to advertising and recommender systems (see, e.g., \citealp{chapelle}), to name a few.

The most critical aspect of those applications is that you may want to interrupt the process when the cumulative reward $\sum_{t=1}^T X_t^{\pi_t}$ becomes too low. In the medicine testing example, it is necessary to be able to stop the trials early in order to ``reduce the number of patients exposed to the unnecessary risk of an ineffective investigational treatment and allow subjects the opportunity to explore more promising therapeutic alternatives'', as pointed out in \citetalias{food_drug_act}. In less specialized terms, a bad treatment not only exposes patients to health-threatening side effects, it also prevents them from receiving an efficient treatment to cure them. This poses an additional health threat, because early treatment improves outcomes in many cases, like rheumatoid arthritis, appendicitis, and bacterial pneumonia (see, e.g., \citealp{shmerling}). Formally, we set a threshold $B\in \mathbbm{R}^*_+$ and decide to interrupt the procedure whenever the cumulative reward becomes lower than or equal to $-B$, i.e., we interrupt the procedure at the first round $\tau$ such that $\sum_{t=1}^{\tau} X_t^{\pi_t} \leq -B$. In practice, $B$ is chosen prior to the experiment by, e.g., an ethics committee. We use the terminology ``survival bandits problem'' (S-MAB for short) to designate the MAB with this additional constraint.

While medicine testing was the original practical motivation for this work, portfolio selection in finance is another real-life application of S-MAB, where recent works have demonstrated the strong performance of classic MAB strategies (see, e.g., \citealp{hoffman}, \citealp{shen2} or \citealp{shen}), and in particular of risk-aware MAB strategies (see, e.g., \citealp{huo}). In the setting we consider, an investor has an initial budget $B>0$ to invest sequentially on one of $K$ securities (the arms). At every round $t\leq T$, the investor selects a security $\pi_t$, receives a payoff $X_t^{\pi_t}$ (which can be reinvested), and updates its budget by $X_t^{\pi_t}$. In this setting, a straightforward constraint is imposed on the unsuccessful investor who has to stop investing when it is ruined and has no more money to invest, i.e., $B + \sum_{s=1}^t X_s^{\pi_s}\leq 0$. Following this setting terminology, we will call the value $B$ ``budget'' and the first round $\tau$ such that $\sum_{t=1}^{\tau} X_t^{\pi_t} \leq -B$ if it exists ``time of ruin''.

Please note that both examples above embrace the regime $B\ll T$, since in medicine testing, we want the number of patients who suffer to be much smaller than the total number of voluntary patients, and in portfolio selection, the agent wants to make much more money than its initial budget $B$. In both cases, while the S-MAB procedure may stop after $O(B)$ rounds, it is desirable to achieve a cumulative reward of the order $\Theta(T)$ to fully benefit from the procedure. Consequently, in the S-MAB, the choice of arms $\pi_t$ during the earlier rounds $t$ is absolutely essential, because a bad choice of arms may break the constraint and stop the whole procedure. This is in stark contrast with the standard MAB, where the earlier rounds are used as exploration rounds in order to gather information on the arm distributions, and then to perform efficient exploitation in the later rounds. Precisely, this is the main technicality induced by the new constraint: any successful policy must exploit seemingly good arms from early rounds in order to avoid breaking the constraint and continue the procedure as long as possible. In that sense, rather than the exploration-exploitation dilemma of the MAB, the S-MAB illustrates an exploitation-exploration-exploitation dilemma which, to our knowledge, has never been explored in the literature. It is actually an open problem from the Conference on Learning Theory 2019 to define the problem, establish a (tight) bound on the best achievable performance, and provide policies which achieve that bound (see \citealp{perotto}). To our knowledge, this paper is the first one to provide answers to this open problem.

\section{Review of the Literature}

At the time of submission of this paper, there are only two works which, to our knowledge, pertain to the S-MAB, and they both focus on the case of rewards in $\{-1, 1\}$. The first one is \cite{perotto2}. That work introduced a modification of UCB (\cite{auer}) which, empirically, seems to have a low risk of ruin. The second one is \cite{manome}. That work derived a generalization of UCB (\citealp{auer}) and studied its experimental performance in the S-MAB setting, in the case of Bernoulli rewards in $\{-1, 1\}$. 


Nevertheless, the S-MAB involves a budget $B$, as well as a conservative exploitation at the early rounds, which can also be interpreted as some risk aversion from the agent's viewpoint. It is therefore reasonable to explore the MAB literature related to those topics to see if some existing results can be applied to our setting or not. This may also give some ideas to the interested reader who may want to explore paths beyond the scope of this paper. The list of works cited in this section is by no means an exhaustive list of the existing literature of the aforementioned topics.

The first line of work we introduce here is the budgeted bandits. In this variant of the MAB, the agent is initially given a budget $B$ and at each pull of an arm $\pi_t$, the agent receives a reward $X_t^{\pi_t}\in [0, 1]$ and incurs a positive cost $c_t^{\pi_t}>0$, such that the cumulative reward at round $t$ is $\sum_{s=1}^t X_t^{\pi_t}$ and the remaining budget is $B - \sum_{s=1}^t c_t^{\pi_t}$. The procedure stops when the agent's budget becomes negative or zero. Please note that in this variant of the MAB, there is no more horizon $T$ and instead, the objective of the agent is to maximize its expected cumulative reward as a function of the initial budget $B$. \cite{tran} first introduced the problem where the costs are deterministic, called the budget-limited bandits, and they introduced a policy of the type ETC (Explore Then Commit, see \citealp{garivier4}). Later, \cite{tran2} provided a lower bound on the regret as $\Omega(\log B)$, as well as knapsack-based algorithms matching this bound. \cite{ding} generalized that setting to the case of variable costs, called the MAB-BV, and introduced two algorithms which achieve the regret lower bound $\Theta(\log B)$ in the case of multinomial rewards in $\left\{0, \frac{1}{m}, \frac{2}{m}, \dots, 1\right\}$. Those results were generalized to the case of continuous costs in $[0, 1]$ by \cite{xia2}. To solve that problem, an algorithm based on Thompson Sampling (see, e.g., \citealp{agrawal2}) was proposed by \cite{xia3}, and algorithms based on UCB (see, e.g., \citealp{auer}) and $\epsilon$-greedy were proposed by \cite{xia}. \cite{cayci} further generalized \cite{xia} to non-negatively correlated costs and rewards, whose $(2+\gamma)$-moment is finite for some $\gamma>0$, but such that the expectation of the cost is positive (this ensures that the procedure stops). To wrap up the review of the budgeted bandits literature, we note that several works have considered the generalization of the MAB-BV to the multiple-play setting, where at each round, the agent pulls $L\geq 2$ of the $K$ arms, including \cite{xia4}, \cite{zhou} and \cite{rangi}. 

While both the budgeted bandits and the S-MAB have a budget constraint, in the former, the costs are positive (or have a positive expectation) and the procedure stops when the budget is totally consumed. This is in stark contrast to the S-MAB, whose procedure stops either when the budget is totally consumed, or at the horizon $T$. We can still tackle that issue by increasing the dimension of the budget constraint to $2$. Precisely, if pulling arm $\pi_t$ at round $t\leq T$ induces reward $X_t^{\pi_t}$ and decreases the budget by cost $c_t^{\pi_t}$, we can define the initial bidimensional budget as $\tilde{B} \triangleq (B, B)^\top \in \mathbbm{R}^2$ such that at each round $t\geq 1$, pulling arm $\pi_t$ induces reward $X_t^{\pi_t}$ and decreases the bidimensional budget by cost $\tilde{c}_t^{\pi_t} \triangleq \left(c_t^{\pi_t}, \frac{B}{T}\right)^\top \in \mathbbm{R}^2$. Following that formulation, the procedure stops when one of the two budget components becomes negative or zero. This formulation is actually a particular instance of the bandits with knapsacks (BwK). Formally, the agent is given a multidimensional budget $\Tilde{B} = (B, \dots, B)^\top \in \mathbbm{R}^d$ whose components are called resources, and at each round $t$, it incurs a reward $X_t^{\pi_t}\in [0, 1]$ and a cost $\Tilde{c}_t^{\pi_t}\in [0, 1]^d$ which decreases all the resources. The objective of the agent is to maximize its cumulative reward before one of the resources run out.

The BwK is the second line of work we present here. It was first introduced in the setting of stochastic costs and rewards by \cite{badanidiyuru}. Based on applications in the field of advertising, \cite{combes} studied a particular instance of the BwK where $B = \Omega(T)$, $d = K$, and pulling arm $k$ only affects resource $k$. Later, \cite{sankararaman2} derived a problem-dependent regret bound to the knapsack problem, under the assumptions that there are only two resources including time, and that the best distribution over arms is a single arm (best-arm-optimality assumption). Several works in the literature of the BwK have extended the basic stochastic setting to the combinatorial bandits (see \citealp{cesa_bianchi2}), including \cite{sankararaman} and \cite{liu2}. A large part of the literature on the BwK is related to the contextual bandit setting (see \citealp{wang2}). We mention here the works of \cite{wu2} which studied the case of fixed and deterministic costs, \cite{li3} which studied a conversion model and its applications to sales discounting, or more recently \cite{han}, which provided a black-box algorithm of two online regression algorithms (see \citealp{cesa_bianchi3}) in the case of a large budget $B = \Omega(T)$. In the particular case where the rewards and costs depend linearly on the contexts, called linear BwK, \cite{agrawal5} derived an algorithm based on OFUL (see \citealp{abbasi}) and OMD (Online Mirror Descent, see \citealp{cesa_bianchi3}). \cite{sivakumar} further studied the linear BwK both in the stochastic and the adversarial settings, in the context of possible sparsity. The BwK in the setting of adversarial costs and rewards (see \citealp{auer2}) was first introduced in the seminal work of \cite{immorlica}. Some of their results were later improved by \cite{kesselheim}, which studied an $\ell_p$-relaxation of the adversarial BwK. Finally, \cite{liu} studied the BwK in a non-stationary environment, where the rewards and costs are generated i.i.d. from a distribution $\mathcal{P}_t$ evolving over time, and which can be considered as a middle ground between the stochastic and adversarial settings.

The BwK is a very rich topic as introduced above, and thus an extremely large pan of the literature is dedicated to that topic. Yet, the costs are always assumed to be positive or to have a positive expectation. In that sense, any of the $d+1$ constraint is enough to ensure the termination of the procedure. This hypothesis is of course not necessary, because the procedure stops automatically at the horizon $T$. Another fundamental difference between the settings considered there and the S-MAB is that the budget $B$ is assumed to scale with $T$, and this ensures that the cumulative reward also scales with $T$. This is in stark contrast to this work, where we consider any value of the budget $B$. 

We should still note that, among the many references on the BwK, \cite{kumar} addressed the case where the costs may be non-positive. In their paper, they derive algorithm ExploreThenControlBudget which achieves a $O(\log T)$ regret bound, yet there is a caveat. They assumed the existence of a ``zero arm'' (equivalent of the ``positive arm'' defined in Definition~\ref{positive_zero_definition} in this paper) which has zero reward and simply increases the budget of each of the resources. This arm removes the risk of exhausting a resource, and in practice, cannot be applied to most of the scenarios motivating this paper and mentioned in the introduction. A similar assumption was made in the setting of the conservative bandits of \cite{wu}, making it unapplicable to our setting.

Finally, we conclude this literature review section by mentioning the large body of work related to risk-averse bandits, which do not seek to pull the arm with the largest expectation, but instead to pull the arm maximizing some risk-averse measure. Those include the mean-variance (\citealp{sani}, \citealp{vakili}, \citealp{zhu}), functions of the expectation and the variance (\citealp{zimin}), the moment-generating function at some parameter $\lambda$ (\citealp{maillard}), the value at risk (\citealp{galichet}), or other general measures encompassing the value at risk (\citealp{cassel}). In the same vein, \cite{sinha} introduced a model with costs and rewards, and they sought to pull the arm with the lowest cost among those with a reasonably large expectation. We also mention \cite{chen}, where each arm pull yields a reward and a risk level, and they aimed to pull the arm which has the largest expectation among those whose risk level is lower than a preset level. Those works are not directly related to the S-MAB, however, risk-averse strategies can be good candidates of strategies (or ideas of strategies) to solve the S-MAB in future work.

\section{Problem Setup and Definitions}\label{setup_section}

Let $T$ be the maximum round, called the horizon, and let $K\geq 1$ be the number of arms, whose indices belong to the set $[K] := \{1, \dots, K\}$. We denote by $F_1, \ldots, F_K$ the arm distributions, and by $\mu_1, \ldots, \mu_K$ their respective expectations. We assume that $F_1, \ldots, F_K$ are bounded in $[-1, 1]$. We denote by $P_F$ the probability under $F = (F_1, \dots, F_K)$, and by $P$ in the absence of ambiguity. Let $B>0$ be the initial budget. Please note that in the whole paper, we will study the problem in the asymptotics of $T$, and we will give a discussion that is non-asymptotic in $B$ and $K$. The S-MAB procedure is defined as follows. 

\begin{framed}
\noindent An agent starts with a budget $B_0 = B$. Then, at every round $t\in \{1, \dots, T\}$ while the agent's budget satisfies $B_{t-1}>0$,
\begin{enumerate}
    \item the agent selects an arm $\pi_t \in [K]$;
    \item the agent observes a reward $X_t^{\pi_t}$ drawn from $F_{\pi_t}$;
    \item the agent updates its budget as $B_t := B_{t-1} + X_t^{\pi_t}$.
\end{enumerate}
\end{framed}

\noindent In the above, $(\pi_t)$ is a policy that determines the arm to pull based on the past observations. Please note that, for any $t\in \{0, \dots, T\}$ such that $B_1, \dots, B_{t-1}>0$,
\begin{equation*}
    B_t = B + \sum_{s=1}^t X_s^{\pi_s}.
\end{equation*}

\noindent As a result, if $B> T$, then the boundedness of the distributions $F_1, \dots, F_K$ in $[-1, 1]$ implies that for any $t\geq 0, X_t^{\pi_t}\in [-1, 1]$ and therefore, for any $t\in \{0, \dots, T\}, B_t>0$. In that case, we can remove the constraint $B_t>0$ on the budget and this boils down to a standard MAB. In that sense, the S-MAB is an extension of the standard MAB. 

Conversely, if $B$ is small, the problem becomes much harder. For example, consider the case of Bernoulli arm distributions $F_1, \dots, F_K$ (of support $\{-1, 1\}$) of respective parameters $p_1, \dots, p_K \in (0, 1)$. Then no matter the arms $(\pi_t)_{t\geq 1}$ chosen by the agent, it will incur the rewards $-1, \dots, -1$ ($B$ times) for the first $B$ rounds with probability at least $\min_{1\leq k\leq K}(1-p_k)^B>0$. Consequently, the procedure will stop after $B$ rounds with positive probability.

In a nutshell, the initial budget $B>0$ is a parameter of the difficulty of the problem. Given some arm distributions fixed, the problem difficulty increases as $B$ decreases. In this paper, we focus on the most difficult case, i.e., when $B>0$ is small and of constant order in the asymptotic regime $T\to+\infty$.

\subsection{Definition of the Ruin}

The key difference between the S-MAB procedure defined above and the standard MAB is the budget constraint, which states that if the agent's budget $B_t$ becomes negative or zero at some round $t\leq T$, the agent has to stop playing immediately. W.l.o.g., in this paper, we define policies $(\pi_t)$ for any $t\geq 1$ (also beyond $T$) to ensure that the time of ruin of $(\pi_t)$ below is well-defined. 
\begin{definition}
    For any policy $\pi$ and any initial budget $B>0$, the time of ruin is defined as
    \begin{equation*}
    \tau(B, \pi) := \inf\left\{t\geq 1: B + \sum_{s=1}^t X_s^{\pi_s}\leq 0\right\},
    \end{equation*}
    where the infimum above is equal to $+\infty$ if the above set is empty. Furthermore, for any $k\in [K]$, we denote by $\tau(B, k)$ the time of ruin of the constant policy $\pi_t = k$ for any $t\geq 1$.
\end{definition}

\noindent Using the vocabulary of the time of ruin, the budget constraint simply states that the agent plays until round $\min\left(T, \tau(B, \pi)\right)$. We can then translate the S-MAB procedure as a standard MAB with horizon $\min\left(T, \tau(B, \pi)\right)$. 

It might thus be tempting to simply use an existing bandit strategy and try to derive a regret bound with the new horizon $\min\left(T, \tau(B, \pi)\right)$. However, the probability of ruin of a stochastic process until a finite horizon (as opposed to $+\infty$) is in general very complicated. This difficulty is exacerbated when this horizon is random, and even depends on the procedure $\pi$ itself, as is the case with $\min\left(T, \tau(B, \pi)\right)$. In our setting where the asymptotics $T\to+\infty$ is considered, $P(\tau(B, \pi)<\infty)$ is a reasonable approximation of the probability $P(\tau(B, \pi)\leq T)$ that the ruin occurs before the horizon $T$.
\begin{definition}
Given a policy $\pi = (\pi_t)_{t\geq 1}$, 
\begin{itemize}
    \item the probability of ruin is defined as $P(\tau(B, \pi) < \infty)$;
    \item the probability of survival is defined as $P(\tau(B, \pi) = \infty) = 1 - P(\tau(B, \pi)<\infty)$.
\end{itemize}
\end{definition}

\noindent Please note that, if all the arms $k\in [K]$ have the probability of survival $P(\tau(B, k)=\infty) = 0$, then any policy $\pi$ also has the probability of survival $P(\tau(B, \pi)=\infty) = 0$, and its cumulative reward will be smaller than $-B$. Therefore, in the rest of the paper, we make the following assumption.
\begin{assumption}
    There exists an arm $k\in [K]$ with a positive probability of survival: $P(\tau(B, k)=\infty)>0$.
\end{assumption}

\subsection{Definition of the Objective}

In this subsection, we define the objective of the agent performing the S-MAB procedure. Recall from the previous subsection that the S-MAB can be seen as an extension of the standard MAB with random horizon $\min(T, \tau(B, \pi))$. The objective of the standard MAB is to maximize the expected cumulative reward $\E\left[\sum_{t=1}^T X_t^{\pi_t}\right]$, and we naturally extend this definition to the S-MAB as follows:
\begin{equation*}
    \Rew(\pi) := \E\left[S_T\right] \quad \text{ where } \ S_T \triangleq \sum_{t=1}^{\min(T, \tau(B, \pi))} X_t^{\pi_t} = \sum_{t=1}^T X_t^{\pi_t}\1_{\tau(B, \pi)\geq t-1}.
\end{equation*}

\noindent In the S-MAB setting, the following remark is fundamental. 

\begin{remark}
    Let an agent perform a policy $\pi$. Then, assume that the agent plays until some round $t_0\geq 1$ and incurs the rewards $X_1^{\pi_1}, \dots, X_{t_0}^{\pi_{t_0}}$ without being ruined. Further assume that at this precise round $t_0$, the agent realizes that $B_{t_0} > T-t_0$. Then, since the rewards are bounded in $[-1, 1]$, it is clear that, for any $t\in \{t_0+1, \dots, T\}$,
    \begin{equation*}
        B_{t} = B_{t_0} + \sum_{s=t_0+1}^{t} X_s^{\pi_s} \geq B_{t_0} - (t-t_0) \geq B_{t_0} - (T-t_0) > 0,
    \end{equation*}
    in other words, the agent cannot be ruined anymore. Then, the remaining procedure from round $t_0+1$ is a standard MAB (without the risk of ruin), and our agent can perform any standard MAB policy for the remaining rounds $\{t_0+1, \dots, T\}$ and enjoy the cumulative reward of such a policy. On the other hand, if our agent was not aware of the horizon $T$, then it would not be able to verify whether or not the condition $B_{t_0} > T-t_0$ is satisfied. As a result, such an agent would have to care about the risk of ruin until the horizon $T$, and play more conservatively until horizon $T$. For that reason, a policy aware of the horizon $T$ has a significant advantage over one which is not, and hence can achieve a higher cumulative reward.
\end{remark}

\noindent In this paper, our focus is on maximizing the expected cumulative reward among all policies which may be aware of the horizon $T$. As a result, instead of focusing on policies $\pi = (\pi_t)_{t\geq 1}$ which are not aware of the horizon $T$, we need to study and formalize the general framework of policies $\pi^T = (\pi^T_t)_{t\geq 1}$ which may depend on the horizon $T$.

In this paper (as in a large part of the MAB literature), we are interested in the asymptotics in $T$. For this reason, we will study the asymptotics of $\Rew(\pi^T)$ in $T \to +\infty$, where the policy $\pi^T$ can depend on $T$.

A sequence of policies $\pi = (\pi^T)_{T\geq 1}$ will be called {\it anytime} if there exists a policy $(\tilde{\pi}_t)_{t\geq 1}$ such that, for any $t\geq 1$ and any $T\geq t$, $\pi^T_t = \Tilde{\pi}_t$. We will often identify such a sequence of policies with policy $\tilde{\pi}$, and when we say that a policy $\tilde{\pi}$ is anytime, it formally means that the sequence of policies is written in the above way.


\begin{remark}
    Please note that, in the world of sequences of policies, anytime sequences of policies are the analog of anytime policies. Indeed, given a policy $\pi = (\pi_t)_{t\geq 1}$, let $\pi^T_t \triangleq \pi_t$ for any $T\geq t$. The sequence of policies $(\pi^T)_{T\geq 1}$ where $\pi^T = (\pi^T_t)_{1\leq t\leq T}$ is anytime and is the equivalent of the anytime policy $\pi$ in the world of sequential policies. 
\end{remark}

\noindent Now, similarly to the standard MAB, given a sequence of policies $\pi = (\pi^T)_{T\geq 1}$, we want to compare its expected cumulative reward to other sequences of policies $\Tilde{\pi} = (\Tilde{\pi}^T)_{T\geq 1}$ in order to understand how well it performs. For that purpose, we introduce the regret in the following definition.
\begin{definition}\label{regret_definition}
Given any sequences of policies $\pi = (\pi^T)_{T\geq 1}$ and $\tilde{\pi} = (\Tilde{\pi}^T)_{T\geq 1}$, the relative regret rate of $\pi$ with respect to $\tilde{\pi}$ is defined as
\begin{equation*}
    \Reg(\pi\|\tilde{\pi}) := \limsup_{T\to +\infty} \frac{\Rew(\tilde{\pi}^T) - \Rew(\pi^T)}{T}.
\end{equation*}
\end{definition}

\noindent First, please note that this regret is asymptotic in $T$, because we want to capture the main term in $T$ in the reward. Secondly, in Definition~\ref{regret_definition}, we compare the reward of the sequence of policies $\pi$ with the reward of some sequence of policies $\tilde{\pi}$. Ideally, our objective would be to find a sequence of policies $\pi$ which has a higher reward than any other sequence of policies $\Tilde{\pi}$, in other words, which would satisfy $\Reg(\pi\Vert \Tilde{\pi}) \leq 0$ for any arm distributions $F = (F_1, \dots, F_K)$ and any sequence of policy $\tilde{\pi}$.

In the standard MAB (without the risk of ruin), we know that such policies $\pi$ exist and even second-order terms (in $T$) in the reward decomposition have been derived such that $\Rew(\pi) = \max_{k\in [K]} \mu_k T + O(\log T)$ (see, e.g., \citealp{auer}). In contrast, in the S-MAB setting (with the risk of ruin), the policy which achieves the best bound is not known. Actually, we do not even know (yet) in what sense there exists a ``best'' bound.

As a matter of fact, contrary to the MAB, in the S-MAB, no sequence of policies $\pi$ dominates all the other sequences of policies $\tilde{\pi}$, as stated in the next proposition, whose proof is given in Appendix~\ref{classic_regret_linear_proof}.
\begin{proposition}\label{linear_regret}
For any sequence of policies $\pi, \ \sup_F \sup_{\pi'} \Reg(\pi\|\pi') >0$.
\end{proposition}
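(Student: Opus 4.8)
The plan is to exhibit, for an arbitrary fixed policy $\pi$, a pair of hard instances on which $\pi$ cannot be simultaneously near-optimal, so that its regret rate is strictly positive on at least one of them. I would take $K=2$ arms and two mirror-image instances $F$ and $G$: both arms are supported on $\{-1,+1\}$, under $F$ arm $1$ has mean $+\mu$ and arm $2$ has mean $-\mu$, and under $G$ the two means are swapped, where $\mu\in(0,1)$ is a small constant to be fixed as a function of $B$. The crucial design choice is that \emph{neither} instance contains a risk-free arm: every arm that has positive drift on one instance has negative drift on the other. This is essential, because if a deterministic nonnegative arm were available, an agent could first use it to accumulate an arbitrarily large safety buffer and only then explore the risky arm, identify it, and exploit it, thereby achieving sublinear regret. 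Removing any safe arm forces every exploratory pull to carry ruin risk and defeats such buffer-building strategies.

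Next I would pin down the benchmark $\sup_{\pi'}\Rew(\pi'^{\,T})$. On each instance the oracle knows which arm is good and pulls it throughout; starting from budget $B$, its wealth performs a $\pm 1$ random walk with upward drift, so by the classical gambler's-ruin identity its probability of surviving forever is $1-\left(\frac{1-\mu}{1+\mu}\right)^{B}=\Theta(\mu)$ for small $\mu$, and conditionally on survival it collects reward of order $\mu T$. Hence the benchmark is of order $\mu^2 T$, giving a strictly positive reward rate $\Theta(\mu^2)$ on each instance; note that the oracle itself is ruined with probability tending to $1$, so the entire benchmark is carried by the rare survival event.

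The heart of the argument is to show that $\pi$ falls short of this benchmark by a fixed positive rate on at least one of $F,G$. I would argue by contradiction, assuming the regret rate $\Reg(\pi\|\pi')$ is $0$ on both. Matching the oracle on $F$ forces $\pi$'s survival probability and good-arm usage under $F$ to match the oracle's $\Theta(\mu)$-order survival, and the mirror statement must hold on $G$ with the other arm. These two requirements are incompatible under indistinguishability: the per-pull Kullback--Leibler divergence between $F$ and $G$ is $\Theta(\mu^2)$, so telling the instances apart with constant confidence needs $\Omega(1/\mu^2)$ pulls, whereas any run of $\Omega(1/\mu^2)$ pulls drives the $\pm 1$ budget walk through fluctuations of order $1/\mu \gg B$, and without prior knowledge of which arm carries the positive drift the agent cannot keep this walk above $0$. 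A change-of-measure (Le Cam two-point) comparison between $P_F$ and $P_G$, applied up to the ruin time, would then show that $\pi$ cannot make the favorable survive-and-exploit event likely under both measures, so for a suitably small fixed $\mu=\mu(B)$ its reward rate is below the oracle's by at least some $c\mu^2>0$ on one instance. This contradicts the assumption and yields $\Reg(\pi\|\pi')\geq c\mu^2>0$ there.

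The main obstacle is precisely this last change-of-measure step in the vanishing-survival regime $\mu\to 0$. The likelihood ratio must be controlled up to the policy-dependent ruin time rather than a fixed horizon; the survival events live at the $\Theta(\mu)$ scale, so Pinsker-type bounds have to be applied with care; and one must rule out adaptive strategies that hedge by interleaving the two arms (which merely produce a near-driftless walk that is still recurrent and so ruins). Converting the informal principle ``identification needs $\Omega(1/\mu^2)$ pulls, but with no safe arm the agent cannot survive that many exploratory pulls with more than oracle-order probability'' into a clean inequality on $\Rew(\pi^{\,T})$ is the step I expect to require the most work.
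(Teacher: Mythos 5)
There is a genuine gap: the entire proposal hinges on the final change-of-measure step (``identification needs $\Omega(1/\mu^2)$ pulls, but the agent cannot survive that many exploratory pulls''), and you explicitly leave that step unproved. It is not a routine omission. The favorable events you need to separate --- ``survive and mostly pull arm $1$'' under $F$ versus ``survive and mostly pull arm $2$'' under $G$ --- live at probability scale $\Theta(\mu)$, while the log-likelihood ratio between $F$ and $G$ accumulated over the $\Theta(1/\mu^2)$ (let alone $\Theta(T)$) pulls that these events involve is of constant or even linear-in-$T$ order; a Pinsker/Le Cam comparison at a fixed horizon therefore says nothing at the $\Theta(\mu)$ scale, and truncating at the policy-dependent ruin time reintroduces exactly the adaptivity you are trying to control. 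Moreover, symmetric hedging policies (e.g.\ any EXPLOIT-type rule that splits the budget) \emph{do} achieve survival probability $\Theta(\mu)$ under both $F$ and $G$ simultaneously, so the quantitative claim you need is not ``survival under both is impossible'' but the finer statement that no policy matches the oracle's constant in front of $\mu$ on both instances --- and that constant-factor separation is precisely what your sketch does not deliver. As written, the argument establishes the benchmark and the intuition but not the inequality $\Rew(\pi^T)\le \Rew(\pi'^T)-c\mu^2T$ on some instance, which is the whole content of the proposition.

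The paper's proof shows that none of this machinery is needed. Since $\pi_1^T$ is chosen before any observation, by pigeonhole there is an arm $k_0$ pulled first for infinitely many horizons $T$; the adversary then picks Bernoulli $\{-1,+1\}$ arms with $1>p_1>\tfrac12>p_2>\dots>p_K$ and makes $k_0$ a suboptimal (negative-mean) arm. Writing $\Delta=\frac{1-p_1}{p_1}$, the oracle that always pulls arm $1$ earns $(2p_1-1)(1-\Delta^{\lceil B\rceil})T+o(T)$, whereas after the forced first pull of arm $k_0$ the best continuation yields at most $(2p_1-1)\bigl(p_2(1-\Delta^{\lceil B\rceil+1})+(1-p_2)(1-\Delta^{\lceil B\rceil-1})\bigr)T+o(T)$; the strict convexity of $b\mapsto\Delta^{b}$ together with $p_2<p_1$ makes the difference a strictly positive multiple of $T$. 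In other words, a \emph{single} pull of a bad arm already shifts the ruin probability by a constant, which costs $\Omega(T)$ in reward --- no two-point construction, no information-theoretic lower bound, and no control of likelihood ratios up to a stopping time. If you want to salvage your route, you would need to prove a genuinely new quantitative statement about simultaneous survival under two drift-reversed instances; the paper's single-instance argument is both simpler and already sufficient.
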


\noindent The key message is that whatever the sequence of policies $\pi$ that you choose, there exist some arm distributions $F = (F_1, \dots, F_K)$ such that another sequence of policies $\tilde{\pi}$ has a significantly higher reward (by a term of order $\Omega(T)$) than $\pi$. It is now hopeless to look for a policy which is ``absolutely better'' than any other one and instead, we look for a Pareto-optimal policy, as defined below.
\begin{definition}\label{pareto_optimal_definition}
A sequence of policies $\pi$ is said to be (regret-wise) Pareto-optimal if, for any sequence of policies $\pi'$,
\begin{equation*}
    \sup_F \Reg(\pi\|\pi') >0 \implies \inf_F \Reg(\pi\|\pi') <0.
\end{equation*}
\end{definition}

\noindent The notion of Pareto-optimal sequences of policies is related to the concept of strictly dominated strategies in game theory. Assume that an agent performs a sequence of policies $\pi$ and that there exists another sequence of policies $\pi'$ such that:
\begin{itemize}
    \item[-] for any arm distributions $F', \text{Reg}_{F'}(\pi\|\pi') \geq 0$, and
    \item[-] for some arm distributions $F$, it even holds that $\Reg(\pi\|\pi')>0$.
\end{itemize}

\noindent Then, the agent should simply not use the sequence of policies $\pi$ and instead use $\pi'$, because it is:
\begin{itemize}
    \item[-] always at least as good as $\pi$, and
    \item[-] in some cases, even strictly better than $\pi$.
\end{itemize}

\noindent In the language of game theory, $\pi$ is called a strictly dominated strategy, and our goal is to find a strategy which is not strictly dominated, which we call here a (regret-wise) Pareto-optimal sequence of policies. This challenging problem is an open question from COLT 2019 \citep{perotto}, and our solution is based on proof techniques from various fields including information theory, stochastic processes theory or even predictions. As such, the intermediate results of this paper can be of interest for the reader interested in any of those fields and their applications.


\section{Strategy and Structure of the paper}

In this section, we provide an informal intuition on the analysis of this paper. This means that this section will show some (light) computations which are not mathematically correct, but the idea behind them is. The purpose here is simply to expose the spirit of this paper to the reader. As a result, this section is for explanatory purposes only, and no result in this paper is based on any of the derivations herein.

\subsection{Strategy of the Paper}\label{strategy_subsection}

We first decompose, for $T$ large, the regret of the policy $\pi^T$
\begin{align*}
    \Rew(\pi^T) &\triangleq \E\left[\sum_{t=1}^T X_t^{\pi^T_t}\1_{\tau(B, \pi)\geq t-1}\right] \\
    &= P(\tau(B, \pi^T)> T) \E\left[\sum_{t=1}^T X_t^{\pi^T_t}\1_{\tau(B, \pi)\geq t-1} \bigg| \tau(B, \pi)> T\right] \\
    &\quad \quad \quad \quad \quad + P(\tau(B, \pi^T)\leq T) \E\left[\sum_{t=1}^T X_t^{\pi^T_t}\1_{\tau(B, \pi)\geq t-1} \bigg| \tau(B, \pi^T)\leq T\right].
\end{align*}

\noindent If $\tau(B, \pi^T)\leq T$, then it means that the cumulative reward of the policy $\pi^T$ reaches $-B$ at some round $t_0\leq T$ and therefore,
\begin{equation}
    \E\left[\sum_{t=1}^T X_t^{\pi^T_t}\1_{\tau(B, \pi)\geq t-1} \bigg| \tau(B, \pi^T)\leq T\right] \in (-B-1, -B], \label{reward_at_ruin_general_eq}
\end{equation}
because the rewards are in $[-1, 1]$. Hence,
\begin{equation*}
    \Rew(\pi^T) = P(\tau(B, \pi^T) > T) \E\left[\sum_{t=1}^T X_t^{\pi^T_t} \bigg| \tau(B, \pi^T)> T\right] + O(1).
\end{equation*}

\noindent Then, it holds that
\begin{equation*}
    P(\tau(B, \pi^T) > T) \approx P(\tau(B, \pi^T) = \infty)
\end{equation*}
provided the policy $\pi^T = (\pi^T_t)_{1\leq t\leq T}$ is not too stupid. By ``not too stupid'', we mean that we are avoiding policies which, e.g., try to maximize $P(\tau(B, \pi^T) > T)$ until they know for sure that they will survive, and then try to minimize their reward. Thus,
\begin{equation}
    \Rew(\pi^T) \approx P(\tau(B, \pi^T) = \infty) \E\left[\sum_{t=1}^T X_t^{\pi^T_t} \bigg| \tau(B, \pi^T)> T\right]. \label{regret_as_product_survival_reward}
\end{equation}

\noindent Furthermore, for a policy $\pi^T$ ``not too stupid'', if the policy has survived long enough, e.g., $\sqrt{T}$ rounds, then it means that the policy has mostly incurred positive rewards for the past $\sqrt{T}$ rounds and consequently, its budget $B_{\sqrt{T}} = \Theta(\sqrt{T})$ (i.e., is of the order of $\sqrt{T}$). As a result, the risk of ruin of $\pi^T$ becomes very small after $\sqrt{T}$, and we deduce that
\begin{equation*}
    \E\left[\sum_{t=\sqrt{T}+1}^T X_t^{\pi^T_t} \bigg| \tau(B, \pi^T)> T\right] \approx \E\left[\sum_{t=\sqrt{T}+1}^T X_t^{\pi^T_t}\right].
\end{equation*}

\noindent Hence,
\begin{align*}
    \E\left[\sum_{t=1}^T X_t^{\pi^T_t} \bigg| \tau(B, \pi^T)> T\right] &\approx \E\left[\sum_{t=1}^{\sqrt{T}} X_t^{\pi^T_t} \bigg| \tau(B, \pi^T)> T\right] + \E\left[\sum_{t=\sqrt{T}+1}^T X_t^{\pi^T_t}\right] \\
    &= \E\left[\sum_{t=\sqrt{T}+1}^T X_t^{\pi^T_t}\right] + O(\sqrt{T}) \\
    &= \E\left[\sum_{t=1}^T X_t^{\pi^T_t}\right] + O(\sqrt{T}).
\end{align*}

\noindent We finally deduce that, for a policy $\pi^T$ ``not too stupid'', it holds that
\begin{equation}
    \Rew(\pi^T) \approx P(\tau(B, \pi^T) = \infty)\E\left[\sum_{t=1}^T X_t^{\pi^T_t}\right]. \label{approx_decompisition_reward}
\end{equation}

\noindent Then, it is clear that a sequence of policies $\pi = (\pi^T)_{T\geq 1}$ maximizing $\Rew(\pi^T)$ must have a large probability of survival $P(\tau(B, \pi^T) = \infty)$ and at the same time, a large expected sum of rewards without the budget constraint $\E\left[\sum_{t=1}^T X_t^{\pi^T_t}\right]$. This decomposition suggests two different types of approaches to find a (regret-wise) Pareto optimal policy $\pi^T$:
\begin{itemize}
    \item start from the set of policies $\pi^T$ which have a large $\E\left[\sum_{t=1}^T X_t^{\pi^T_t}\right]$: those are standard MAB policies. Take one of them, and adapt it so that it also has a large $P(\tau(B, \pi^T) = \infty)$ while maintaining a large $\E\left[\sum_{t=1}^T X_t^{\pi^T_t}\right]$.
    \item start by finding policies $\pi^T$ which have a large $P(\tau(B, \pi^T) = \infty)$. Take one such policy and adapt it so that it also has a large $\E\left[\sum_{t=1}^T X_t^{\pi^T_t}\right]$ while maintaining a large $P(\tau(B, \pi^T) = \infty)$.
\end{itemize}

\noindent The former type of approach consists in adapting existing MAB strategies to a new variant of the MAB. This type of approach, which uses existing results that we already know well to tackle new problems, is mathematically valid and reasonable, and for that reason, is rather common in the existing literature. The latter type of approach is much more difficult, however, from an innovation perspective, it is more appealing. But there is one more reason why this approach may be preferable. As explained earlier in this subsection, if the policy $\pi^T$ has managed to survive after, e.g., $\sqrt{T}$ rounds, then the risk of ruin almost disappears. Therefore, a good policy $\pi^T$ will:
\begin{itemize}
    \item first maximize $P(\tau(B, \pi^T) = \infty)$ until the risk of ruin disappears;
    \item then, maximize $\E\left[\sum_{t=1}^T X_t^{\pi^T_t}\right]$.
\end{itemize}

\noindent This seems to suggest that the second type of approach is more natural for the S-MAB, and this is the approach taken in this paper.

\subsection{Reward Models}

In this paper, we will mostly focus on multinomial arm distributions of support $\{-1, 0, 1\}$ (simply referred to as multinomial arm distributions in the rest of this paper). This subsection discusses why this setting is of particular interest.

\subsubsection{Why it is reasonable to study integer rewards}

Let $\pi^T$ be a policy, and assume that $\tau(B, \pi^T)<\infty$. We are going to quickly study $\sum_{t=1}^{\tau(B, \pi^T)} X_t^{\pi^T_t}$. On the one hand, if the arm distributions are multinomial, then the rewards are in $\{-1, 0, 1\}$ and 
\begin{align*}
    \tau(B, \pi^T) &\triangleq \inf\left\{t\geq 1: \sum_{s=1}^t X_s^{\pi^T_s} \leq -B \right\} \\
    &= \inf\left\{t\geq 1: \sum_{s=1}^t X_s^{\pi^T_s} \leq - \lceil B\rceil \right\} \\
    &= \inf\left\{t\geq 1: \sum_{s=1}^t X_s^{\pi^T_s} = - \lceil B\rceil \right\}.
\end{align*}

\noindent Hence, whatever the policy $\pi^T$, if $\tau(B, \pi^T)<\infty$, then
\begin{equation*}
    \sum_{t=1}^{\tau(B, \pi^T)} X_t^{\pi^T_t} = -\lceil B\rceil.
\end{equation*}

\noindent On the other hand, in the general case of arm distributions bounded in $[-1, 1]$, if $\tau(B, \pi^T)<\infty$, we cannot deduce anything further than \eqref{reward_at_ruin_general_eq}, i.e.,
\begin{equation*}
    \sum_{t=1}^{\tau(B, \pi^T)} X_t^{\pi^T_t} \in (-B-1, -B]
\end{equation*}
without any further information on the $\pi^T$ or the arm distributions. In fact, $\sum_{t=1}^{\tau(B, \pi^T)} X_t^{\pi^T_t}$ is in general not deterministic and depends on the arm distributions, as well as on the (possible) randomization on $\pi^T$. This hints at the major technicality of the general case compared to the multinomial case. Without proof techniques involving the choice of the policy $\pi^T$, the value of $\sum_{t=1}^{\tau(B, \pi^T)} X_t^{\pi^T_t}$ cannot be controlled in expectation or in probability. As a result, it seems difficult to achieve a tight lower bound on the probability of ruin {\it of all policies} without first ``finding'' a ``best'' policy $\pi^T$.

Now, should you guess a ``best'' policy $\pi^T$, this policy would be extremely sensitive to the arm distributions. Indeed, it might easily get ``trapped'' such that if $\tau(B, \pi^T)<\infty$,
\begin{equation*}
    \sum_{t=1}^{\tau(B, \pi^T)} X_t^{\pi^T_t} \in [-B-\epsilon, -B]
\end{equation*}
for some small $0<\epsilon\ll 1$, unless this policy tries to learn the arm distributions support (this is difficult when the budget $B$ is small, as in the case of this article, and it is left for future work). Overall, we so far expect a probability of ruin as a function of the budget: $\text{Bound}(B)$.

On the other hand, it is intuitive that a good policy $\pi^T$ should use its budget as much as possible, and hopefully, at the time of ruin, $\sum_{t=1}^{\tau(B, \pi^T)} X_t^{\pi^T_t}$ should be rather close to $-B-1$. Therefore, we expect a lower bound on the probability of ruin of all policies to be a function of $B+1$, potentially $\text{Bound}(B+1)$, which does not quite match the above. But in our case, a suboptimal constant factor in the probability of ruin $P(\tau(B, \pi^T)<\infty)$ creates a difference of $\Omega(T)$ in the cumulative reward, as it can be seen on~\eqref{approx_decompisition_reward}.

For the above reasons, we will only consider the case of multinomial arm distributions in Sections~\ref{survival_proba_study} to \ref{experiment_section}, and generalize our results to the case of rewards in $[-1, 1]$ in Section~\ref{generalization_section}.

\subsubsection{Why Bernoulli rewards are not sufficient}\label{bernoulli_vs_multinomial_subsection}

As this paper is the first one to tackle the S-MAB, it may seem reasonable to start with the easiest setting of Bernoulli distributions of support $\{-1, 1\}$, referred to as Bernoulli distributions in the rest of this paper. In this subsection, we explain why we consider the more complex case of multinomial arm distributions, with $0$ included in the support. As explained in Section~\ref{strategy_subsection}, the optimal policy $\pi^T$ should optimize two objectives at the same time:
\begin{itemize}
    \item maximize the probability of survival;
    \item maximize the cumulative reward $\E\left[\sum_{t=1}^T X_t^{\pi^T_t}\right]$.
\end{itemize}

\noindent While the S-MAB requires to be more conservative than the traditional MAB, in the case of Bernoulli arm distributions, this boils down to finding and pulling as often as possible the ``best arm'', defined here as the (Bernoulli) arm with the largest parameter $P(X=1)$. This intuitively makes sense, because at each round $t\geq 1$, the best arm maximizes both of the objectives above.

On the other hand, in the case of multinomial arm distributions, those two objectives may be maximized by two different arms:
\begin{itemize}
    \item the arm which has the largest probability of survival $k^P \triangleq \argmax_{k\in [K]} \frac{P_{F_k}(X = +1)}{P_{F_k}(X = -1)}$ (see Lemma~\ref{constant_survival});
    \item and the arm which has the largest expectation $k^E \triangleq \argmax_{k\in [K]} \big\{P_{F_k}(X = +1) - P_{F_k}(X = -1)\big\}$.
\end{itemize}

\noindent Therefore, a good policy $\pi^T$ should:
\begin{itemize}
    \item first find arm $k^P$ with pure exploitation, and pull it many times;
    \item then, find arm $k^E$ with a conservative exploration-exploitation, and pull it many times.
\end{itemize}

\noindent In the case of multinomial arms, the learning paradigm of the S-MAB is hence different from the one of the standard MAB, and could be described as an exploitation-exploration-exploitation dilemma, as a more complex version of the standard exploration-exploitation dilemma of the standard MAB. Such a paradigm was the initial motivation behind this work. For that reason, we chose the model of multinomial distributions, which brings considerable interest while much more difficulty than the Bernoulli model.

\subsection{Structure of the Paper}

We start by studying the case of multinomial arm distributions of support $\{-1, 0, 1\}$ (referred to as multinomial distributions in the sequel) in Sections~\ref{survival_proba_study} to \ref{experiment_section}, and then we will extend some of our results to the general case of arm distributions bounded in $[-1, 1]$ in Section~\ref{generalization_section}. The outline of the paper is as follows:
\begin{itemize}
    \item in Section~\ref{survival_proba_study}, we study the probability of ruin and derive the first main result of this paper: a tight non-asymptotic lower bound (in the sense of Pareto) on the probability of ruin in Theorem~\ref{non_asymptotic_lower_bound_theorem}. 
    \item in Section~\ref{exploit_framework_study}, we introduce EXPLOIT, a framework (or set) of policies whose probability of ruin matches the lower bound of Section~\ref{survival_proba_study}. We further show that those policies cannot be regret-wise Pareto-optimal, and hence the need to adapt them to achieve our desired objective.
    \item in Section~\ref{survival_regret_study}, we introduce the policy EXPLOIT-UCB-DOUBLE, which performs a doubling trick over an EXPLOIT policy. We show the second main result of this paper in Theorem~\ref{teaser_theorem}: EXPLOIT-UCB-DOUBLE is regret-wise Pareto-optimal. We corroborate this theoretical result with experiments showing the experimental performance of EXPLOIT-UCB-DOUBLE. This section provides an answer to an open problem from COLT 2019 (\citealp{perotto}). 
    \item in Section~\ref{generalization_section}, we generalize the results of Sections~\ref{survival_proba_study}--\ref{survival_regret_study} to the general case of arm distributions bounded in $[-1, 1]$. In particular, we generalize the result of Theorem~\ref{non_asymptotic_lower_bound_theorem} on the probability of ruin, which is the third main result of this paper, and further relate it to the probability of ruin of i.i.d. stochastic processes, which is a result of independent interest. Finally, we discuss the challenges of extending Theorem~\ref{teaser_theorem} to that setting.
\end{itemize}

\section{Study of the Probability of Ruin}\label{survival_proba_study}

In this section, we study the probability of ruin of anytime policies in the case of multinomial arm distributions (of support $\{-1, 0, 1\}$). We first derive one of the main results of this paper: a tight lower bound on the probability of ruin (Theorem~\ref{non_asymptotic_lower_bound_theorem}), which will be generalized to distributions bounded in $[-1, 1]$ in Proposition~\ref{non_asymptotic_lower_bound_corollary} of Section~\ref{generalization_section}. We further relate this bound to the probability of ruin of i.i.d.~random walks on $\{-1, 0, 1\}$ in Lemma~\ref{proba_lemma}, which is a result of independent interest.

\subsection{Trivial Case and Assumption}

Consider the following example.
\begin{example}
    Assume that there exists an arm $k\in [K]$ whose distribution $F_k$ is such that $P_{X\sim F_k}(X\geq 0) = 1$. Then, if the initial budget $B$ satisfies $B> K-1$, the policy $\pi$ which chooses the arm with the highest empirical average of rewards, i.e., defined by
    \begin{equation*}
        \forall t\geq 1, \ \pi_t \triangleq \argmax_{k\in [K]} \frac{\sum_{s=1}^{t-1} X_s^{\pi_t} \1_{\pi_s = k}}{\sum_{s=1}^{t-1} \1_{\pi_s = k}},
    \end{equation*}
    has no risk of ruin: $P(\tau(B, \pi) <\infty) = 0$.
\end{example}

\noindent The above case is simple, because the distribution $F_k$ only yields non-negative rewards. In this case, the trivial bound on the probability of ruin
\begin{equation*}
    P(\tau(B, \pi)<\infty) \geq 0
\end{equation*}
is tight, and we eliminate this case by assuming that there is no positive or zero arm distribution, as defined below.
\begin{definition}\label{positive_zero_definition}
A distribution $F$ is called a zero distribution (resp. a positive distribution) if $P_{X\sim F}(X = 0) = 1$ (resp. if $P_{X\sim F}(X\geq 0) = 1$ and $P_{X\sim F}(X> 0) >0$). We say that an arm $k$ is a zero arm (resp. a positive arm) if its distribution $F_k$ is zero (resp. positive).
\end{definition}

\noindent In this section, we make the following assumption.
\begin{assumption}
    There is no positive or zero arm.
\end{assumption} 

\noindent Please note that the policies introduced in Sections~\ref{exploit_framework_study} to \ref{survival_regret_study} will achieve $P(\tau(B, \pi)<\infty) = 0$ in the case where there is a positive or zero arm.

\subsection{Main Results on the Probability of Ruin}\label{general_results_proba_ruin_study}

Let $\mathcal{F}_{\{-1,0,1\}}$ be the set of multinomial distributions of support $\{-1, 0, 1\}$ which are not positive or zero (see Definition~\ref{positive_zero_definition}). Similarly to the cumulative reward, for any $F = (F_1, \dots, F_K)\in \mathcal{F}_{\{-1,0,1\}}^K$ and any policies $\pi, \pi'$, we define the {\it relative risk of ruin} of $\pi$ with respect to $\pi'$ as
\begin{equation*}
    P_{\mathrm{ruin}}(\pi\|\pi') := P_F(\tau(B, \pi)<\infty) - P_F(\tau(B, \pi')<\infty),
\end{equation*}
where the dependency on $F$ is omitted in the notation. Please note that, since we study this problem for fixed $B$ small with regards to $T$, there is no limit in the definition of $P_{\mathrm{ruin}}(\pi\|\pi')$. Yet, similarly to Proposition~\ref{linear_regret}, we can prove that no policy $\pi$ achieves $\sup_{\pi'} \sup_{F} P_{\mathrm{ruin}}(\pi\|\pi') \leq 0$, and for that reason, we focus on a policy which is Pareto-optimal in the sense of the probability of ruin, which is formalized as follows.
\begin{definition}\label{ruin_pareto_optimal_definition}
A policy $\pi$ is said to be ruin-wise Pareto-optimal if, for any policy $\pi'$,
\begin{equation*}
    \sup_F P_{\mathrm{ruin}}(\pi\|\pi') >0 \implies \inf_F P_{\mathrm{ruin}}(\pi\|\pi') <0.
\end{equation*}
\end{definition}

\noindent Before stating the main result of this section, we need to define, for any arm distributions $F = (F_1, \dots, F_K)\in \mathcal{F}_{\{-1,0,1\}}^K$,
\begin{equation}
    \gamma(F_k) := \inf_{Q: \E_{X\sim Q}[X]<0}\frac{\KL(Q \Vert F_k)}{\E_{X\sim Q}[-X]} \geq 0.\label{gamma_definition}
\end{equation}

\noindent The main result of this section is a Pareto-type lower bound on the probability of ruin.
\begin{theorem}\label{non_asymptotic_lower_bound_theorem}
Let $(\alpha_k)_{k\in [K]}$ be such that for any $k, \ \alpha_k > 0$ and $\sum_{k=1}^K \alpha_k = 1$. For any policy $\pi$,
\begin{multline}
    \inf_{F\in \mathcal{F}_{\{-1,0,1\}}^K} \left\{P_{F}(\tau(B, \pi)<\infty) - \exp\left(- B\sum_{k=1}^K \alpha_k \gamma(F_k)\right)\right\} < 0 \\
    \implies \sup_{F\in \mathcal{F}_{\{-1,0,1\}}^K} \left\{P_{F}(\tau(B, \pi)<\infty) - \exp\left(- B\sum_{k=1}^K \alpha_k \gamma(F_k)\right)\right\} >0. \label{lower_multinomial}
\end{multline}
\end{theorem}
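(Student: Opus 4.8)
The plan is to prove the contrapositive: assuming a policy $\pi$ \emph{weakly} beats the benchmark at every instance, i.e. $P_F(\tau(B,\pi)<\infty)\le \exp\!\big(-B\sum_k\alpha_k\gamma(F_k)\big)$ for all $F\in\mathcal{F}_{\{-1,0,1\}}^K$, I would show it can never beat it strictly, so the premise $\inf_F\{\cdots\}<0$ is impossible. The core tool is a change-of-measure (Wald-type) identity, obtained after reducing the multi-armed ruin event to i.i.d.\ random walks via Lemma~\ref{proba_lemma}. First I would record the explicit form of the ingredients in (\ref{gamma_definition}): for a $\{-1,0,1\}$ arm with $p_k=P(X=-1)$, $q_k=P(X=1)$ and $q_k>p_k$, the minimiser $Q_k$ is the exponential tilt $\mathrm{d}Q_k/\mathrm{d}F_k(x)=(p_k/q_k)^{x}=e^{-\gamma(F_k)x}$ with $\gamma(F_k)=\log(q_k/p_k)$, which simply swaps $p_k\leftrightarrow q_k$ and hence has negative mean; for arms with $q_k\le p_k$ one has $\gamma(F_k)=0$ and ruin is already certain. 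Moreover $(p_k/q_k)^{X}$ is a mean-one martingale factor under $F_k$, since $p_k(q_k/p_k)+r_k+q_k(p_k/q_k)=1$.

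Next I would fix a candidate instance $F^0$ and pass to the alternative $Q=(Q_1,\dots,Q_K)$ in which every arm is reflected to nonpositive drift. Then ruin is certain, $P_Q(\tau(B,\pi)<\infty)=1$ for every anytime policy, and optional stopping applied to the product martingale $\prod_k(p_k^0/q_k^0)^{m_k}$, with $m_k:=\sum_{s:\pi_s=k}X_s$ the net contribution of arm $k$ at the ruin time, yields
\[
1 \;=\; \E_{F^0}\!\left[\1_{\tau(B,\pi)<\infty}\,\exp\!\left(-\sum_{k=1}^K \gamma(F^0_k)\,m_k\right)\right],
\]
where $\sum_k m_k=-B$ on the ruin event because the increments lie in $\{-1,0,1\}$ and the budget hits $0$ exactly (no overshoot). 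The diagonal case is instructive: when all arms share a common rate $\gamma$, the exponent collapses to $\gamma B$ and the identity forces $P_{F^0}(\tau<\infty)=e^{-\gamma B}$ exactly, for \emph{every} policy. This shows the benchmark is attained with equality whenever the rates coincide, and that any strict Pareto improvement must exploit instances with unequal rates.

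The substance of the argument, and the step I expect to be the main obstacle, is controlling the adaptively chosen per-arm allocation $(m_k)$ at ruin and pinning it to the prescribed weights $(\alpha_k)$. A uniform bound on $\exp(-\sum_k\gamma(F^0_k)m_k)$ is hopeless (the policy can in principle accumulate positive net on a low-rate arm and ruin through a high-rate one, driving the integrand arbitrarily large), and no single common tilt produces the exponent $-B\sum_k\alpha_k\gamma(F^0_k)$ because $\E_{F_k}[e^{-cX}]=1$ cannot hold simultaneously for unequal $\gamma_k$. Hence one genuinely must use the weakly-beats-everywhere hypothesis at more than one instance: a policy that strictly beats the benchmark at $F^0$ must make ruin occur predominantly through the arms that are most dangerous \emph{there}, committing to an effective allocation $\beta\ne\alpha$ concentrated away from the safe arms. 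I would then invoke the hypothesis at a companion instance $F^1$ obtained by perturbing/relabelling the rates so that the arms the policy rarely pulls become the safe ones --- a ruin-theoretic analogue of the classical bandit confusion argument. Since the policy cannot distinguish $F^0$ from $F^1$ through the seldom-pulled arms, the same change-of-measure bound applied at $F^1$ forces $P_{F^1}(\tau<\infty)$ strictly above $\exp(-B\sum_k\alpha_k\gamma(F^1_k))$, contradicting the hypothesis; the quantitative heart is that gain at $F^0$ and loss at $F^1$ are both governed by the identical weighted combination $\sum_k\alpha_k\gamma(\cdot)$.

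Finally I would clean up the measure-theoretic technicalities that the $\{-1,0,1\}$ support makes tractable: justifying $P_Q(\tau<\infty)=1$ under nonpositive drift, and the optional-stopping step at the unbounded stopping time $\tau$ by truncating at $\tau\wedge n$ together with a monotone-convergence/uniform-integrability argument (the integrand is bounded on $\{\tau\le n\}$), and confirming that exact hitting of $0$ yields $\sum_k m_k=-B$ with no correction term. These are precisely the points where Lemma~\ref{proba_lemma} supplies the reduction to i.i.d.\ random walks on $\{-1,0,1\}$.
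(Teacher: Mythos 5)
Your high-level skeleton --- a policy that strictly beats the benchmark at some instance must commit to an effective per-arm allocation different from $(\alpha_k)$, and should then be punished at a companion instance where the under-pulled arms are the safe ones --- is exactly the Pareto/confusion structure of the paper's argument. However, the quantitative engine you propose does not deliver it, and the step you yourself flag as ``the main obstacle'' is precisely where the proof lives; it is not resolved in your sketch. First, the optional-stopping \emph{identity} $1=\E_{F^0}\bigl[\1_{\tau<\infty}\exp\bigl(-\sum_k\gamma(F^0_k)m_k\bigr)\bigr]$ is generally false for an adaptive policy with unequal rates: on $\{\tau>n\}$ only the sum $\sum_k m_k(n)$ is bounded below by $-B$, while an individual $m_k(n)$ can be arbitrarily negative (compensated by other arms), so the martingale is unbounded on the survival event, $\E[\1_{\tau>n}M_n]$ need not vanish, and only the inequality $\E\bigl[\1_{\tau<\infty}\exp(-\sum_k\gamma_k m_k)\bigr]\le 1$ survives. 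That inequality gives \emph{upper} bounds on ruin probabilities, which is the wrong direction: the contradiction you need at the companion instance $F^1$ is a strict \emph{lower} bound $P_{F^1}(\tau<\infty)>\exp\bigl(-B\sum_k\alpha_k\gamma(F^1_k)\bigr)$.

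To obtain that lower bound the paper does three things that have no counterpart in your proposal. (i) It proves a large-deviations lower bound (Lemma~\ref{fundamental_lemma_proba_ruin}) by changing measure to an all-negative-drift alternative $Q$ and restricting to a typical set on which both the log-likelihood ratio and the empirical allocation $\tilde r$ are controlled; this is what rigorously extracts the policy's effective allocation $\beta(Q)$ and bounds $\liminf_B\frac1B\log P_F(\tau<3B/\Delta)\ge-\sum_k\beta_k(Q)\KL(Q_k\Vert F_k)/\E_{Q_k}[-X]$. (ii) It constructs the companion instance $\bar P^*$ arm by arm --- keeping $\bar Q_k$ (negative mean) on the over-allocated arms and invoking Lemma~\ref{lem_P_exist} to produce positive-mean distributions with near-optimal tilts on the under-allocated ones --- and carries out the bookkeeping showing the gain at $\bar P$ and the loss at $\bar P^*$ are both measured against $\sum_k\alpha_k\gamma(\cdot)$. (iii) Because the resulting contradiction is asymptotic in $B$, it converts it into the stated fixed-$B$ bound via a sub-additivity/restart argument (Lemma~\ref{sub_additivity_lemma}). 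Your martingale identity cannot substitute for (i), your description of (ii) remains at the level of intent, and (iii) is absent entirely; as written, the proposal establishes the diagonal case (all $\gamma_k$ equal) and nothing beyond it.
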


\noindent The main ingredient of the proof of Theorem~\ref{non_asymptotic_lower_bound_theorem} is given in Section \ref{lower_bound_proba_study}, and the rest of the proof is given in Appendix~\ref{lower_bound_proba_long_proof}. 

Please note that this theorem gives a lower bound on the probability of ruin. A weaker version of Theorem~\ref{non_asymptotic_lower_bound_theorem} is that there exist some arm distributions $F\in \mathcal{F}_{\{-1,0,1\}}^K$ such that
\begin{equation}
    P_{F}(\tau(B, \pi)<\infty) \geq \exp\left(- B\sum_{k=1}^K \alpha_k \gamma(F_k)\right). \label{weaker_easy_theorem_bound}
\end{equation}

\noindent But Theorem~\ref{non_asymptotic_lower_bound_theorem} is a little stronger than that. Actually, it states that there are two cases:
\begin{itemize}
    \item either the lower bound \eqref{weaker_easy_theorem_bound} holds for {\it all} arm distributions $F\in \mathcal{F}_{\{-1,0,1\}}^K$;
    \item or it holds with strict inequality for some $F\in \mathcal{F}_{\{-1,0,1\}}^K$.
\end{itemize}

\noindent We conclude this subsection by a lemma providing an insightful interpretation of the lower bound~\eqref{weaker_easy_theorem_bound}.
\begin{lemma}\label{proba_lemma}
For any distributions $F = (F_1, \dots, F_K) \in \mathcal{F}_{\{-1, 0, 1\}}^K$ and any arm $k\in [K]$,
\begin{equation*}
    \frac{1}{B} \log P_{F_k}(\tau(B, k)<\infty) = -\gamma(F_k).
\end{equation*}
\end{lemma}

\noindent Let $(\alpha_k)_{k\in [K]}$ be such that for any $k\in [K], \alpha_k B\in \mathbbm{N}$. An application of Lemma~\ref{proba_lemma} to the budget $\alpha_k B$ for any $k$ yields
\begin{equation}
    \exp\left(- B\sum_{k=1}^K \alpha_k \gamma(F_k)\right) = \prod_{k=1}^K P_{F_k}(\tau(\alpha_k B, k)<\infty).\label{link_part_exploit}
\end{equation}

\noindent This gives another expression of the lower bound in \eqref{weaker_easy_theorem_bound}, which is easier to interpret. This lower bound is the product on the arms $k\in [K]$ of the probabilities of ruin of the constant policy $\pi_t = k$ with budget $\alpha_k B$. Each of the factors in~\eqref{link_part_exploit} can further be computed explicitly, using Lemma~\ref{constant_survival}:
\begin{equation*}
    \exp\left(- B\sum_{k=1}^K \alpha_k \gamma(F_k)\right) = \prod_{k=1}^K \min\left(1, \left(\frac{P_{X\sim F_k}(X=-1)}{P_{X\sim F_k}(X=1)}\right)^{\alpha_k B}\right).
\end{equation*}

\noindent Whereas the statements of Lemma~\ref{proba_lemma} use the KL divergence through the definition of $\gamma(F_k)$ in \eqref{gamma_definition}, the probability of ruin of a stochastic process is usually analyzed using the moment-generating function, which is also found in the proof of this lemma given in Appendix~\ref{proba_lemma_appendix}. The relation between them is discussed in Lemma~\ref{lem_kl_lambda} in Appendix~\ref{append_kl}, and we interchangeably use both representations.

\subsection{Sketch and Main Ingredient of the Proof of Theorem~\ref{non_asymptotic_lower_bound_theorem}}\label{lower_bound_proba_study}

The proof of the non-asymptotic bound of Theorem~\ref{non_asymptotic_lower_bound_theorem} is given in Appendix~\ref{lower_bound_proba_long_proof}. This proof consists of (i) derivation of an asymptotic bound given in Lemma~\ref{fundamental_lemma_proba_ruin} below, and (ii) turning it into a non-asymptotic bound by using a sub-additivity argument on the probability of ruin. Please note that the proof of this lemma is conducted in the generality of distributions bounded in $[-1, 1]$ (not necessarily multinomial).
\begin{lemma}\label{fundamental_lemma_proba_ruin}
Fix an arbitrary policy $\pi$ and distributions $(Q_1, \dots, Q_K)$ such that $\E_{X\sim Q_k}[X]<0$ for all $k\in[K]$. Then, there exists a probability vector $\beta(Q) = (\beta_1(Q), \dots, \beta_K(Q))$ such that for any distributions $(F_1, \dots, F_K)$,
\begin{equation*}
    \liminf_{B\rightarrow +\infty} \frac{1}{B} \log P_{(F_{1}, \dots, F_{K})}\left(\tau(B, \pi)<\frac{3B}{\Delta_Q}\right) \geq - \sum_{k=1}^K \beta_k(Q)\frac{\KL(Q_k \Vert F_{k})}{\E_{X\sim Q_k}[-X]},
\end{equation*}
where $\Delta_Q =\min_{i\in[K]}\E_{X\sim Q_i}[-X]>0$.
\end{lemma}

\begin{proof}
Let $Q = (Q_1, \dots, Q_K)$ be a vector of distributions such that
$\E_{X\sim Q_k}[X]<0$ for all $k\in[K]$, and let $\Delta_Q := \min_{i\in[K]}\E_{X\sim Q_i}[-X]>0$. We denote by $N_k(\tau)$ the number of pulls of arm $k$ until $\tau(B, \pi)$, and by $n_k$ its realization. Denoting by $Y_k^n$ the reward of the $n$-th pull of arm $k$, let
\begin{equation*}
    \mathcal{H}_{\tau}=\left(\left(Y_1^1, Y_1^2, \dots, Y_1^{N_1(\tau)}\right), \left(Y_2^1, Y_2^2, \dots, Y_2^{N_2(\tau)}\right), \dots, \left(Y_K^1, Y_K^2, \dots, Y_K^{N_K(\tau)}\right)\right),
\end{equation*}
and $h_t$ be its realization. Please note that for any realization $h_t$,
$\left|B+\sum_{k\in[K]}\sum_{m\in[n_k]}y_k^m\right|\leq 1$. We further denote by $T(Q)$ the set of ``typical'' realizations $h_t$ satisfying
\begin{equation}
    \left\{
    \begin{array}{lll}
        &\left|\sum_{k=1}^K\left(n_k \KL(Q_k\Vert F_k)-\sum_{m=1}^{n_k} \log \frac{d Q_k}{d F_k}(y_k^m)\right) \right| \leq \frac{t}{B^{\frac{1}{4}}}, \\
        &\left|\sum_{k=1}^K\left(n_k \E_{X\sim Q_k}[X]-\sum_{m=1}^{n_k} y_k^m\right)\right|\leq \frac{t\Delta_Q}{B^{\frac{1}{4}}}, \\
        &\sum_{k=1}^K \sum_{m=1}^{n_k} y_k^m \leq -\frac{t\Delta_Q}{2}.
    \end{array}
\right. \label{typicality_hypothesis}
\end{equation}

\noindent Such realizations are ``typical'' under $Q$ in the sense that $\lim_{B\rightarrow +\infty}Q(\mathcal{H}_{\tau}\in T(Q))= 1$ (shown by, e.g., Hoeffding's inequality, see Appendix~\ref{justification_lim_q} for details). We can see from~\eqref{typicality_hypothesis} that any typical $h_t$ satisfies
\begin{equation}
    t\leq \frac{3B}{\Delta_Q} \ \text{ and } \ \left|\sum_{k=1}^K \frac{n_k}{B} \E_{X\sim Q_k}[-X] -1\right| \leq \frac{4}{B^{\frac{1}{4}}}. \label{r_expect_normalized}
\end{equation}

\noindent In particular, denoting $r(h_t) := \frac{(n_1, \dots, n_K)}{B}$, \eqref{r_expect_normalized} implies that $r(h_t)$ can take at most $O(\text{poly}(B))$ values, and hence there exists $\tilde{r}$ such that 
\begin{equation}
    \lim_{B\to +\infty}\frac{1}{B} \log Q\left( r(\mathcal{H}_{\tau}) = \tilde{r} | \mathcal{H}_{\tau} \in T(Q)\right) = 0. \label{lim_q_r_tilde}
\end{equation}

\noindent By performing a change of distribution and using \eqref{typicality_hypothesis}, we can bound
\allowdisplaybreaks
\begin{align*}
    &P_{(F_1, \dots, F_K)}\left(\tau(B, \pi)<\frac{3B}{\Delta_Q}\right) \\
    &\geq P_{(F_1, \dots, F_K)}\left(\mathcal{H}_{\tau} \in T(Q), r(\mathcal{H}_{\tau}) = \tilde{r}\right) \\
    &= \sum_{\substack{h_t \in T(Q): \\
    r(h_t) = \tilde{r}}} Q(h_t)\exp\left(- \sum_{k=1}^K \sum_{m=1}^{n_k} \log \frac{d Q_k}{d F_k}(y_k^m)\right) \\
    &\geq \sum_{\substack{h_t \in T(Q): \\
    r(h_t) = \tilde{r}}} Q(h_t) \exp\left(- \sum_{k=1}^K n_k \KL(Q_k \Vert F_k) - \frac{t}{B^{\frac{1}{4}}}\right) \\
    &\geq \sum_{\substack{h_t \in T(Q): \\
    r(h_t) = \tilde{r}}} Q(h_t) \exp\left\{-B \left(\sum_{k=1}^K \tilde{r}_k \KL(Q_k \Vert F_k) + \frac{3}{\Delta_Q B^{\frac{1}{4}}}\right)\right\} \\
    &= \exp\left\{-B \left(\sum_{k=1}^K \tilde{r}_k \E_{X\sim Q_k}[-X] \frac{\KL(Q_k \Vert F_k)}{\E_{X\sim Q_k}[-X]} + \frac{3}{\Delta_Q B^{\frac{1}{4}}}\right)\right\} Q\left(\mathcal{H}_{\tau} \in T(Q), r(\mathcal{H}_{\tau}) = \tilde{r}\right).
\end{align*}

\noindent For any $k\in [K]$, we introduce the normalized version of $\tilde{r}_k \E_{X\sim Q_k}[-X]$, which we denote by 
\begin{equation*}
    \beta_k(Q) := \frac{\tilde{r}_k \E_{X\sim Q_k}[-X]}{\sum_{j=1}^K \tilde{r}_j \E_{X\sim Q_j}[-X]}.
\end{equation*}

\noindent Eq.~\eqref{r_expect_normalized} implies that $\left|\sum_{j=1}^K \tilde{r}_j \E_{X\sim Q_j}[-X] - 1\right| \leq \frac{4}{B^{\frac{1}{4}}}$, and hence
{\allowdisplaybreaks[4]%
\begin{align*}
    &\frac{1}{B} \log P_{(F_1, \dots, F_K)}\left(\tau(B, \pi)<\frac{3B}{\Delta_Q}\right) \\
    &\geq -\left(1+\frac{4}{B^{\frac{1}{4}}}\right)\sum_{k=1}^K \beta_k(Q) \frac{\KL(Q_k \Vert F_k)}{\E_{X\sim Q_k}[-X]} - \frac{3}{\Delta_Q B^{\frac{1}{4}}} + \frac{1}{B} \log Q\left(\mathcal{H}_{\tau} \in T(Q), r(\mathcal{H}_{\tau}) = \tilde{r}\right) \\
    &\xrightarrow{B\to +\infty} -\sum_{k=1}^K \beta_k(Q) \frac{\KL(Q_k \Vert F_k)}{\E_{X\sim Q_k}[-X]},
\end{align*}}
by \eqref{lim_q_r_tilde}, concluding the proof of Lemma~\ref{fundamental_lemma_proba_ruin}.
\end{proof}

\section{The EXPLOIT framework}\label{exploit_framework_study}

In this section, we introduce a framework of anytime policies, called EXPLOIT, which achieve the lower bound on the probability of ruin given in Theorem~\ref{non_asymptotic_lower_bound_theorem}. We further study the regret of all such policies.

In Sections~\ref{exploit_framework_study} and \ref{survival_regret_study}, our study is conducted in the case of multinomial arm distributions, and therefore, it holds that
\begin{equation*}
    P(\tau(B, \pi)\leq T) = P(\tau(\lceil B\rceil, \pi)\leq T)
\end{equation*}
for any policy $\pi$. As a result, in these sections, we assume w.l.o.g. that $B\in \mathbbm{N}$.

\subsection{Definition of the EXPLOIT Framework}\label{exploit_introduction_study}

Let $B_1, \dots, B_K$ be arbitrary positive integers such that $B_1 + \dots + B_K = B$. Applying~\eqref{link_part_exploit} to $\alpha_k \triangleq \frac{B_k}{B}$ for any $k\in [K]$, the lower bound of Theorem~\ref{non_asymptotic_lower_bound_theorem} implies that 
\begin{equation*}
    P_F(\tau(B, \pi)<\infty) \geq \prod_{k=1}^K P_{F_k}(\tau(B_k, k)<\infty)
\end{equation*}
for some arm distributions $F\in \mathcal{F}_{\{-1, 0, 1\}}^K$. This right hand-side of this inequality corresponds to the probability of ruin of any policy $\pi$ which allocates budget $B_1$ to arm $1$, $B_2$ to arm $2$ and so on, and plays $\pi_t = k$ only if arm $k$ has not exceeded its budget $B_k$. We say that such policies belong to the EXPLOIT framework, which is formalized below.
\begin{definition}\label{exploit_definition}
Given some positive integers $B_1, \dots, B_K$ such that $B_1 + \dots + B_K = B$, we say that a policy $\pi = (\pi_t)_{t\geq 1}$ belongs to the framework EXPLOIT$(B_1, \dots, B_K)$ if, at any round $t\geq 1$,
\begin{equation*}
    \pi_t \in \left\{k\in [K]: B_k + \sum_{s=1}^{t-1} X_s^{\pi_s} \1_{\pi_s = k} > 0\right\}.
\end{equation*}
\end{definition}

\begin{remark}
    The initial choice of the budget shares $B_1, \dots, B_K$ may depend on some possible prior information we may have on the arms. Without any prior information on the arms, we have no reason to choose $B_1 \gg B_2$ e.g., and we will choose the budget shares as close as possible. If $B = n_K K + b$, with $0\leq b<K$ is the Euclidian division of $B$ by $K$, a possibility is to choose $B_1 = \dots = B_b = n_K+1$ and $B_{b+1} = \dots B_K = n_K$.
\end{remark}

\noindent All the policies in EXPLOIT$(B_1, \dots, B_K)$ have the same probability of ruin, given in the next proposition.
\begin{proposition}\label{exploit_proba_ruin_proposition}
    Given some positive integers $B_1, \dots, B_K$ such that $B_1 + \dots + B_K = B$, all the policies in EXPLOIT$(B_1, \dots, B_K)$ achieve the same probability of ruin, given by
    \begin{equation*}
        p^{\mathrm{EX}}(B_1, \dots, B_K) \triangleq \prod_{k=1}^K P_{F_k}(\tau(B_k, k)<\infty) = \exp\left(- \sum_{k=1}^K B_k \gamma(F_k)\right).
    \end{equation*}
\end{proposition}

\noindent Importantly, the probability of ruin of the policies in EXPLOIT match the lower bound of Theorem~\ref{non_asymptotic_lower_bound_theorem}, and therefore, all the policies in EXPLOIT are ruin-wise Pareto-optimal. This is a major strength of the EXPLOIT framework: given $B_1, \dots, B_K$, you can choose any policy in EXPLOIT$(B_1, \dots, B_K)$ to try to maximize the expected cumulative reward without sacrificing the probability of ruin. 

\begin{remark}
    When $B$ is a multiple of $K$, the policy $\pi$ which selects arm $\pi_t \in \argmax_{k\in [K]}\Big\{\sum_{s=1}^{t-1}$ $X_s^{\pi_s}\1_{\pi_s = k}\Big\}$ with the highest cumulative reward belongs to EXPLOIT$\left(\frac{B}{K}, \dots, \frac{B}{K}\right)$. While it is intuitive that such a policy is very conservative, it was {\it a priori} not obvious that many policies (all the policies in EXPLOIT$\left(\frac{B}{K}, \dots, \frac{B}{K}\right)$) achieve the same probability of ruin. Actually, this result becomes false in the general case of rewards in $[-1, 1]$ (not necessarily integers) studied in Section~\ref{generalization_section}.
\end{remark}


\begin{convention}
    From now on, given $B = n_K K + b$ with $0\leq b< K$, we consider the ``more symmetric'' case $B_1 = \dots = B_b = n_K+1$ and $B_{b+1} = \dots = B_K = n_K$. We use the shortcut notation EXPLOIT instead of EXPLOIT$(B_1, \dots, B_K)$ in that specific instance and denote $p^{\mathrm{EX}} \triangleq p^{\mathrm{EX}}(B_1, \dots, B_K)$. 
\end{convention}

\subsection{Expected Cumulative Reward of Policies in EXPLOIT}\label{exploit_cumulative_reward_study}

By nature, EXPLOIT policies are very conservative. Precisely, they allow a budget of $B_k$ for the exploration of each arm $k\in [K]$. In the previous subsection, we showed that, thanks to this limited exploration, they are ruin-wise Pareto-optimal, i.e., they achieve a small probability of ruin (in the sense of Pareto-optimality). In this subsection, we show that the downside of this limited exploration is that the expected cumulative reward of EXPLOIT policies is fairly low, upper-bounded as shown in the following proposition, whose proof is in Appendix~\ref{exploit_reward_upper_bound_proposition_proof}. 
\begin{proposition}\label{exploit_reward_upper_bound_proposition}
Assume that the budget $B$ is a multiple of $K$. W.l.o.g., assume that $\mu_1\geq \dots \geq \mu_{K}$. Then, for any policy $\pi$ in EXPLOIT,
\begin{equation}
    \E\left[\sum_{t=1}^T X_t^{\pi_t}\1_{\tau(B, \pi)\geq t-1}\right] \leq \left(1 - p^{\mathrm{EX}}\right) \sum_{k=1}^{K} w_k \mu_k \times T + o(T) \eqineq \left(1 - p^{\mathrm{EX}}\right) \max_{k\in[K]}\mu_k \times T + o(T), \label{exploit_reward_upper_bound}
\end{equation}
where for any $k\in [K], w_k = \frac{P\left(\tau\left(\frac{B}{K}, k\right)=\infty\right)\prod_{j=1}^{k-1} P\left(\tau\left(\frac{B}{K}, j\right)<\infty\right)}{1-p^{\mathrm{EX}}}$. Besides, when two arms have positive and different expectations, $(*)$ is a strict inequality.
\end{proposition}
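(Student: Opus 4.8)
The plan is to condition on which arms clear their budget share and to bound the reward arm by arm. Since every EXPLOIT policy is anytime, $(\pi_t)_{t\ge1}$ is one fixed trajectory and $n_k(T):=\sum_{t=1}^T\1_{\pi_t=k}$ is nondecreasing in $T$; write $Y_k^m$ for the $m$-th reward of arm $k$, $s_k:=P(\tau(\tfrac{B}{K},k)=\infty)$, $f_k:=1-s_k$, and let $S:=\{k:\tau_k^<=\infty\}$ be the random set of arms surviving their share $\tfrac{B}{K}$ (with $\tau_k^<$ as in Definition~\ref{exploit_definition}). Because $\tau_k^<$ depends only on arm $k$'s own rewards, the events $\{\tau_k^<=\infty\}$ are independent across $k$ and, by \eqref{link_part_exploit}, $p^{\mathrm{EX}}=\prod_k f_k$. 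Two structural facts drive the proof: first, the policy survives iff $S\neq\emptyset$, and $s_k>0$ forces $\mu_k>0$ (a $\{-1,0,1\}$-walk with $\mu_k\le 0$ reaches $-\tfrac{B}{K}$ almost surely), so a.s.\ on survival $S\subseteq\{\mu_k>0\}$ and $\mu_{\min S}=\max_{k\in S}\mu_k>0$; second, a ruined arm $k\notin S$ is pulled at most $\tau_k^<$ times, and by the integer structure its cumulative reward equals exactly $-\tfrac{B}{K}$ once exhausted.

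Next I would establish the pathwise inequality $\sum_{t=1}^T X_t^{\pi_t}\le \mu_{\min S}\,T+\text{error}$ on $\{S\neq\emptyset\}$. Decomposing $\sum_{t=1}^T X_t^{\pi_t}=\sum_k\sum_{m=1}^{n_k(T)}Y_k^m$, I bound each surviving arm by $\sum_{m\le n_k(T)}Y_k^m\le\mu_k n_k(T)+M_k'$, where $M_k':=\max_{n\le T}\big|\sum_{m\le n}(Y_k^m-\mu_k)\big|$; using $\mu_k\le\mu_{\min S}$ for $k\in S$ and $\sum_{k\in S}n_k(T)\le T$ gives $\sum_{k\in S}\big(\mu_k n_k(T)+M_k'\big)\le\mu_{\min S}T+\sum_k M_k'$. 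Taking expectations, the telescoping identity $\sum_k s_k\prod_{j<k}f_j=1-\prod_k f_k$ (which also yields $\sum_k w_k=1$) gives $\E[\1_{S\neq\emptyset}\,\mu_{\min S}]=\sum_k\mu_k\,s_k\prod_{j<k}f_j=(1-p^{\mathrm{EX}})\sum_k w_k\mu_k$, since $\mu_{\min S}=\sum_k\mu_k\1_{\min S=k}$ and $P(\min S=k)=s_k\prod_{j<k}f_j$ by independence; the reward collected on the ruin event $\{S=\emptyset\}$ is $O(1)$ because the cumulative reward at ruin is $-B$. This yields the first inequality of \eqref{exploit_reward_upper_bound} as soon as every error term is $o(T)$ in expectation.

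The main obstacle is controlling the reward of ruined arms uniformly, since a ruined arm may be abandoned at a high excursion point rather than exhausted. The naive bound $\max_{n\le\tau_k^<}\sum_{m\le n}Y_k^m$ has infinite expectation for a zero-mean arm, as both the hitting time of $-\tfrac{B}{K}$ and the height of the excursion preceding it are heavy-tailed for a symmetric $\{-1,0,1\}$-walk. The remedy is to retain the horizon: as $n_k(T)\le T$, the contribution of arm $k$ is at most $\max_{n\le T}\sum_{m\le n}Y_k^m$, whose expectation is $O(\mu_k^+T+\sqrt T)$ by a maximal inequality applied to the centered walk, hence $o(T)$ whenever $\mu_k\le 0$. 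For a positive-mean ruined arm I instead condition on $\{\tau_k^<<\infty\}$, a large-deviation event under which the excursion maximum has finite expectation and which is independent of the survival of the other arms, so that contribution is $O(1)$. The same maximal inequality gives $\E[M_k']=O(\sqrt T)$, so all error terms are indeed $o(T)$.

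Finally, $\sum_k w_k\mu_k\le\max_k\mu_k=\mu_1$ is immediate because the $w_k$ are nonnegative and sum to $1$. For the strict inequality when two arms have positive and distinct expectations, choose an arm $b$ with $0<\mu_b<\mu_1$; then $s_b>0$ (as $\mu_b>0$), and crucially $f_j>0$ for every $j$, because no arm in $\mathcal{F}_{\{-1,0,1\}}$ is positive or zero and therefore places positive mass on $-1$, so its walk reaches $-\tfrac{B}{K}$ with positive probability. Thus $w_b=\tfrac{s_b\prod_{j<b}f_j}{1-p^{\mathrm{EX}}}>0$ assigns positive weight to an arm with $\mu_b<\mu_1$, which makes the averaging inequality strict and completes the argument.
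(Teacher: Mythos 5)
Your proof is correct, and its skeleton is the same as the paper's: condition on the random set $S$ of arms that survive their budget share $B/K$, use the independence of the events $\{\tau_k^<=\infty\}$ across arms to get $P(\min S=k)=P(\tau(\tfrac{B}{K},k)=\infty)\prod_{j<k}P(\tau(\tfrac{B}{K},j)<\infty)$, and observe that conditionally on $\min S=k$ the per-round reward can be at most $\mu_k$ up to lower-order terms. The execution differs in two ways worth noting. First, the paper proves an exact asymptotic identity (Lemma~\ref{exploit_reward_lemma}) expressing the reward of \emph{any} EXPLOIT policy as $\sum_k\bigl(\sum_{S\ni k}P(\Pi_S)n_k(S)\bigr)\mu_k T+o(T)$ in terms of conditional pull fractions $n_k(S)$, and then obtains the upper bound by solving the trivial linear program $\max\sum_k(\cdots)\mu_k$ subject to $\sum_{k\in S}n_k(S)\le 1$; this equality is reused verbatim to prove that EXPLOIT-UCB attains the bound (Proposition~\ref{exploit_bandit_reward_optimal}), which your one-sided pathwise bound would not give for free. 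Second, for the contribution of ruined arms the paper simply notes that on $\Pi_S$ a ruined arm is pulled at most $\tau_j^<<\sqrt{T}$ times (the intermediate event $\sqrt{T}\le\tau_j^<<T$ having vanishing probability), so its reward is $O(\sqrt{T})$ in absolute value with no maximal inequality needed; your treatment via Doob/Kolmogorov maximal inequalities for nonpositive-mean arms and the geometric tail of the pre-ruin excursion for positive-mean arms conditioned on $\{\tau_k^<<\infty\}$ is more work but equally valid, and your explicit identification of where the naive excursion bound would fail is a genuine subtlety that the paper's $\sqrt{T}$-truncation quietly sidesteps. Your strictness argument for $(*)$ matches the paper's in substance (every arm in $\mathcal{F}_{\{-1,0,1\}}$ has positive ruin probability, so every $w_k>0$ for $k$ with positive survival probability).
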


\noindent As we will see in Section~\ref{survival_regret_study}, there exists a non-anytime policy whose expected cumulative reward is equal to the right hand-side of~\eqref{exploit_reward_upper_bound}. This implies that no EXPLOIT policy is regret-wise Pareto-optimal (although they are all ruin-wise Pareto-optimal).

As explained before, a policy in EXPLOIT will allow a budget share $B_k$ for the exploration of each arm $k\in [K]$. Therefore, it may stop pulling arm $k$ after only $B_k$ pulls even without encountering ruin before the horizon $T$. This lack of exploration of arm $k$ penalizes the cumulative reward of arm $k$ and is reflected in the coefficient $(1-p^{\mathrm{EX}})w_k$ in the middle term of the bound of Proposition~\ref{exploit_reward_upper_bound_proposition}.

\subsection{Bandit Algorithms in the EXPLOIT framework}\label{exploit_bandit_study}

We know that all the policies in EXPLOIT are ruin-wise Pareto-optimal and achieve the same probability of ruin (this is the result of Proposition~\ref{exploit_proba_ruin_proposition}). In this subsection, we exhibit an EXPLOIT policy which achieves the highest possible cumulative reward within EXPLOIT, given as the middle bound in~\eqref{exploit_reward_upper_bound}. Though such a policy would not be regret-wise Pareto-optimal as suggested from Proposition~\ref{exploit_reward_upper_bound_proposition}, it will serve as a basis for the construction of such an optimal policy. For any arm $k\in [K]$ and any round $t$, we introduce $N_k(t) := \sum_{s=1}^t \1_{\pi_s = k} $ as the number of pulls of arm $k$ until round $t$, and $\hat{X}^k_t := \frac{1}{N_k(t)}\sum_{s=1}^t \1_{\pi_s = k} X_s^{\pi_s}$ as the empirical mean of arm $k$ at round $t$. 

\SetAlgoVlined
\begin{algorithm}[t]
\vspace{0mm}
\SetNoFillComment
\DontPrintSemicolon
\For{$t=1,\ldots, T$}{
    Set $\mathcal{A}_t := \left\{k\in [K]: \sum_{s=1}^{t-1} X_s^{\pi_s} \1_{\pi_s = k} \geq -\frac{B}{K}+1\right\}$. \\
    \eIf{$\mathcal{A}_t \ne \emptyset$}{
        Pull arm $\argmax_{k\in \mathcal{A}_t} \hat{X}_{t-1}^k + \sqrt{\frac{6 \log (t-1)}{N_k(t-1)}}$.}{
        Pull arm $\argmax_{k\in [K]} \hat{X}_{t-1}^k$.}
 }
 \caption{EXPLOIT-UCB$(B)$}
 \label{survival_bandit_algo}
\end{algorithm}

We start from the classic bandit algorithm UCB (Upper Confidence Bound, see \citealp{auer}) and make it ``fit'' in the EXPLOIT framework. This defines EXPLOIT-UCB$(B)$ in Algorithm~\ref{survival_bandit_algo} as the policy which, at each round $t\geq 1$, performs UCB among the arms whose cumulative reward is larger than $-\frac{B}{K}$. As EXPLOIT-UCB$(B)$ is in EXPLOIT, it naturally achieves the optimal bound on the probability of ruin. In addition, it also achieves the middle bound in \eqref{exploit_reward_upper_bound} on the reward, which is asymptotically optimal among EXPLOIT policies, as shown in the next proposition, whose proof is in Appendix~\ref{exploit_bandit_reward_optimal_proof}.
\begin{proposition}\label{exploit_bandit_reward_optimal}
Under the hypotheses of Proposition~\ref{exploit_reward_upper_bound_proposition}, the expected cumulative reward of EXPLOIT-UCB$(B)$ satisfies
\begin{equation*}
    \E\left[\sum_{t=1}^T X_t^{\pi_t}\1_{\tau(B, \pi)\geq t-1}\right] = \left(1 - p^{\mathrm{EX}}\right) \sum_{k=1}^{K} w_k \mu_k \times T + o(T),
\end{equation*}
where for any $k\in [K], w_k = \frac{P\left(\tau\left(\frac{B}{K}, k\right)=\infty\right)\prod_{j=1}^{k-1} P\left(\tau\left(\frac{B}{K}, j\right)<\infty\right)}{1-p^{\mathrm{EX}}}$.
\end{proposition}

\noindent More than the bound itself, the above result states that a standard MAB algorithm ``made to fit in EXPLOIT'' achieves the best possible reward within EXPLOIT. Here is the intuition behind it. All EXPLOIT policies have the same risk of ruin, and when there is ruin, all policies receive the total reward $-B$. Therefore, a good EXPLOIT policy can only make a difference in the case when there is no ruin. In that case, it should achieve a high cumulative reward, i.e., behave closely to a good standard MAB policy like UCB.

\begin{remark}
    The choice of the constant $6$ in the square root is different from the original UCB and is only made for simplicity of the proof. 
\end{remark}

\section{A Regret-wise Pareto-optimal Policy}\label{survival_regret_study}

In this section, we make a slight modification on the policy EXPLOIT-UCB$(B)$ (see Algorithm~\ref{survival_bandit_algo}) so that it achieves a large cumulative reward, while keeping its probability of ruin small. The resulting policy is EXPLOIT-UCB-DOUBLE (given in Algorithm~\ref{double_exploit_algo}) and is proven to be regret-wise Pareto-optimal, hence giving an answer to the open problem in \cite{perotto}.

\subsection{A (Regret-wise) Pareto-optimal policy: EXPLOIT-UCB-DOUBLE}\label{exploit_dbl_study}

We start from EXPLOIT-UCB$(B)$ (Algorithm~\ref{survival_bandit_algo}). As this policy belongs to EXPLOIT, it is ruin-wise Pareto-optimal. However, as shown in Proposition~\ref{exploit_reward_upper_bound_proposition}, its cumulative reward is rather low, because its exploration is limited by the budget it allocates to each arm. We tackle this issue by performing a kind of doubling trick (see, e.g., \citealp{cesa_bianchi}) on the budget, which relaunches the exploration when the cumulative reward is large enough.

Let $n\in\mathbb{N}$ be a hyperparameter chosen in advance and for any integer $j\geq 0$, let 
\begin{equation*}
    t_j := \inf\left\{t\in \{0, \ldots, \min(\tau(B, \pi), T)\}: B + \sum_{s=1}^t X_s^{\pi_s}> jnB^2 \right\}, 
\end{equation*}
with the convention that $t_j = \min(\tau(B, \pi), T) + 1$ if the above set is empty. At each round $t$, EXPLOIT-UCB-DOUBLE (see Figure~\ref{exploit_dbl_tree} and Algorithm \ref{double_exploit_algo}) performs 
\begin{itemize}
    \item EXPLOIT-UCB$(B)$ pretending that the initial budget is $B$ if $t<t_1$;
    \item EXPLOIT-UCB$(jnB^2)$ pretending that the initial budget is $jnB^2$ if $t_j\leq t< t_{j+1}$.
\end{itemize}

\noindent A more visual description of EXPLOIT-UCB-DOUBLE is given in Figure~\ref{exploit_dbl_tree} and its pseudo-code is given in Algorithm~\ref{double_exploit_algo}. We denote by $\pi^n$ the policy associated with EXPLOIT-UCB-DOUBLE with input parameter $n$.

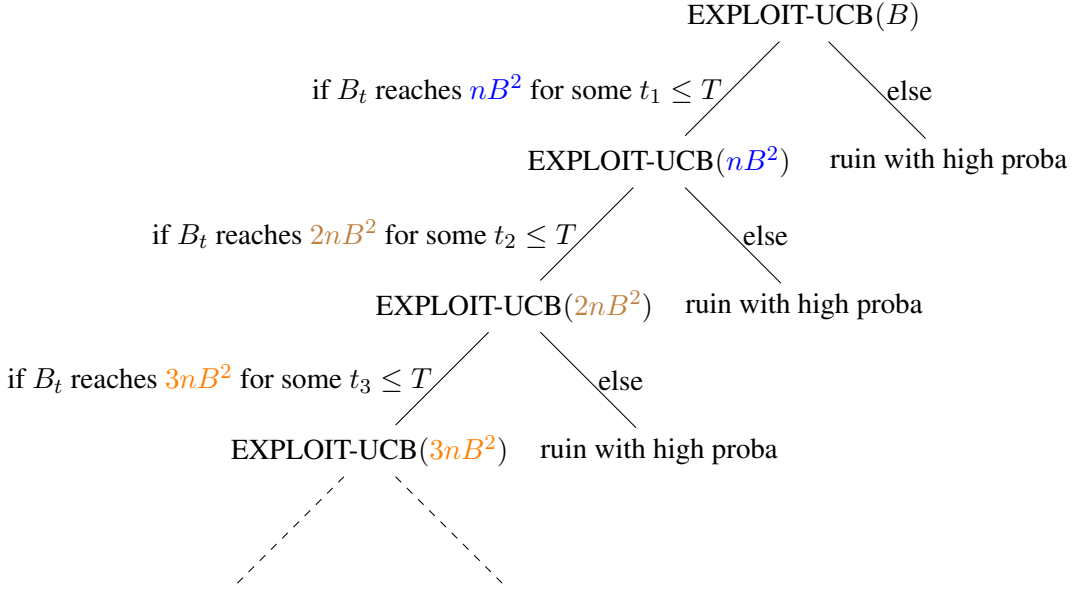
\begin{figure}[!h]
\centering
\begin{tikzpicture}
[sibling distance=10em,level distance=5em]
\node {EXPLOIT-UCB$(B)$}
    child {
        node {EXPLOIT-UCB$(\textcolor{blue}{nB^2})$}
        child {
            node {EXPLOIT-UCB$(\textcolor{brown}{2nB^2})$}
            child {
                node {EXPLOIT-UCB$(\textcolor{orange}{3nB^2})$}
                child {
                    node {}
                    edge from parent [dashed]}
                child {
                    node {}
                    edge from parent [dashed]}
                edge from parent node [left] {if $B_t$ reaches $\textcolor{orange}{3nB^2}$ for some $t_3\leq T$}}
            child {
                node {ruin with high proba}
                edge from parent node [right] {else}}
            edge from parent node [left] {if $B_t$ reaches $\textcolor{brown}{2nB^2}$ for some $t_2\leq T$}}
        child {
            node {ruin with high proba}
            edge from parent node [right] {else}}
        edge from parent node [left] {if $B_t$ reaches $\textcolor{blue}{nB^2}$ for some $t_1\leq T$}}
    child {
        node {ruin with high proba}
        edge from parent node [right] {else}};
\end{tikzpicture}
\caption{Description of EXPLOIT-UCB-DOUBLE}
\small
\justify{EXPLOIT-UCB-DOUBLE (E-U-D) starts by performing EXPLOIT-UCB$(B)$ (root of the above tree). If $B_t$ never reaches $\textcolor{blue}{nB^2}$, then there is ruin with high probability. Otherwise, $B_t = \textcolor{blue}{nB^2}$ for some $t = t_1$, and E-U-D reinitializes the budget as $\textcolor{blue}{nB^2}$ and performs EXPLOIT-UCB$(\textcolor{blue}{nB^2})$. Again, if $B_t$ never reaches $\textcolor{brown}{2nB^2}$, then there is ruin with high probability. Otherwise, $B_t = \textcolor{brown}{2nB^2}$ for some $t = t_2$, and E-U-D reinitializes the budget as $\textcolor{brown}{2nB^2}$ and performs EXPLOIT-UCB$(\textcolor{brown}{2nB^2})$. Again, if $B_t$ never reaches $\textcolor{orange}{3nB^2}$, then there is ruin with high probability. Otherwise, $B_t = \textcolor{orange}{3nB^2}$ for some $t = t_3$, and E-U-D reinitializes the budget as $\textcolor{orange}{3nB^2}$ and performs EXPLOIT-UCB$(\textcolor{orange}{3nB^2})$, and so on, until either $t = T$ or there is ruin.}
\label{exploit_dbl_tree}
\end{figure}

The underlying principle of EXPLOIT-UCB-DOUBLE is that, as long as the cumulative reward is low, it performs safely like an EXPLOIT policy in order to minimize the probability of ruin. Then, progressively as the cumulative reward becomes larger, EXPLOIT-UCB-DOUBLE allocates more budget for exploration and behaves more similarly to UCB, so that it starts the cumulative reward maximization.

\SetAlgoVlined
\begin{algorithm}[t]
\vspace{0mm}
\SetNoFillComment
\DontPrintSemicolon
\SetKwInOut{Input}{input}
\Input{ parameter $n\in\mathbb{N}$; budget $B>0$.}
$j := 0; \ t_0 := 0$. \\
 \For{$t=1, \ldots, T$}{
    \If{$B + \sum_{s=1}^{t-1} X_s^{\pi^n_s}>(j+1)nB^2$}{
        Set $j := j+1$ and then, set $t_j := t-1$.
    }
    Set $\mathcal{A}_t := \left\{k\in [K]: \sum_{s=t_j + 1}^{t-1} X_s^{\pi^n_s} \1_{\pi^n_s = k} \geq -\frac{B + \sum_{s=1}^{t_j} X_s^{\pi^n_s}}{K} +1\right\}$; \\
    \For{$k=1, \dots, K$}{
    Set $N_k(t-1) := \sum_{s=1}^{t-1} \1_{\pi^n_s = k}$ and $\hat{X}_{t-1}^k := \frac{1}{N_k(t-1)}\sum_{s=1}^{t-1} X_s^{\pi^n_s}\1_{\pi^n_s = k}$.
    }
    \eIf{$\mathcal{A}_t \ne \emptyset$}{
        Pull arm $\argmax_{k\in \mathcal{A}_t} \hat{X}_{t-1}^k + \sqrt{\frac{6 \log (t-1)}{N_k(t-1)}}$.}{
        Pull arm $\argmax_{k\in [K]} \hat{X}_{t-1}^k$.}
 }
 \caption{EXPLOIT-UCB-DOUBLE}
 \label{double_exploit_algo}
\end{algorithm}

The next proposition, whose proof is given in Appendix~\ref{proba_survival_exploit_dbl_proof}, shows that the probability of ruin of EXPLOIT-UCB-DOUBLE is close to the one of an EXPLOIT policy.
\begin{proposition}\label{proba_survival_exploit_dbl}
Given $n\geq 1$, the probability of ruin of EXPLOIT-UCB-DOUBLE is bounded as follows:
\begin{equation*}
    P\left(\tau(B, \pi^n) < \infty\right) \leq p^{\mathrm{EX}} + \frac{(p^{\mathrm{EX}})^{nB}}{1 - (p^{\mathrm{EX}})^{nB}} \eqstar p^{\mathrm{EX}} + o_{T\to+\infty}(1),
\end{equation*}
where $(*)$ holds if $n\xrightarrow{T\to+\infty} +\infty$.
\end{proposition}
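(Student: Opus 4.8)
The plan is to split the ruin event according to the phase of the doubling scheme in which ruin occurs, to control each phase by reducing it to an EXPLOIT policy run with the current (large) pretend-budget, and then to sum the resulting geometric series. First I would set up the decomposition: for $j\geq 0$, let $R_j$ be the event that ruin happens while the internal counter of EXPLOIT-UCB-DOUBLE equals $j$, i.e.\ at a time $t$ with $t_j\leq t<t_{j+1}$. Since the counter is nondecreasing and ruin occurs at a single instant, the events $R_j$ are disjoint and $\{\tau(B,\pi)<\infty\}=\bigcup_{j\geq 0}R_j$, so $P(\tau(B,\pi)<\infty)=\sum_{j\geq 0}P(R_j)$. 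By construction, when the counter first reaches $j\geq 1$ the running budget $B_{t_j}:=B+\sum_{s=1}^{t_j}X_s^{\pi_s}$ exceeds $jnB^2$, whereas phase $0$ runs with pretend-budget $B$.

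The core step is to show that, conditionally on the history $\mathcal{H}_{t_j}$ up to the start of phase $j$, the probability of ruin during phase $j$ is at most $p^{\mathrm{EX}}(B_{t_j})$. After the restart at $t_j$ the set $\mathcal{A}_t$ only counts rewards collected from $t_j+1$ onwards, with threshold $-B_{t_j}/K+1$, so phase $j$ runs an EXPLOIT-type policy with pretend-budget $B_{t_j}$; and since the arm rewards are i.i.d., the phase-$j$ reward streams are, conditionally on $\mathcal{H}_{t_j}$, fresh i.i.d.\ samples from the $F_k$. For any EXPLOIT policy ruin occurs if and only if every arm $k$ reaches its loss threshold $\tau_k^{<}$, and these $K$ events depend only on the per-arm phase-$j$ streams, hence are independent; in particular the UCB index, although it reads the full history, only governs the order in which the not-yet-abandoned arms are pulled (each of which UCB keeps pulling until it crosses) and therefore does not change whether all arms eventually cross. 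Consequently the phase-$j$ ruin probability equals $p^{\mathrm{EX}}(B_{t_j})$, exactly as for any other member of EXPLOIT; since finite-horizon ruin can only be smaller, this gives $P(R_j\mid\mathcal{H}_{t_j})\leq p^{\mathrm{EX}}(B_{t_j})$.

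Next I would sum. Using that $p^{\mathrm{EX}}(b)=\prod_k P(\tau(\alpha_k b,k)<\infty)$ is nonincreasing in $b$, that $B_{t_j}>jnB^2$, and the multiplicative identity from Lemma~\ref{proba_lemma} and Proposition~\ref{exploit_proba_optimality} (which gives $p^{\mathrm{EX}}(jnB^2)=\exp(-jnB^2\sum_k\alpha_k\gamma(F_k))=(p^{\mathrm{EX}})^{jnB}$ because $jnB^2=(jnB)\cdot B$), I obtain $P(R_j)=\E[\1_{\{\text{phase }j\text{ reached}\}}\,P(R_j\mid\mathcal{H}_{t_j})]\leq (p^{\mathrm{EX}})^{jnB}$ for $j\geq 1$, together with $P(R_0)\leq p^{\mathrm{EX}}$. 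Summing the geometric tail then yields
\[
    P(\tau(B,\pi)<\infty)\leq p^{\mathrm{EX}}+\sum_{j\geq 1}(p^{\mathrm{EX}})^{jnB}=p^{\mathrm{EX}}+\frac{(p^{\mathrm{EX}})^{nB}}{1-(p^{\mathrm{EX}})^{nB}}.
\]
Finally, for $n=\log T$ and $0<p^{\mathrm{EX}}<1$ one has $(p^{\mathrm{EX}})^{nB}=T^{B\log p^{\mathrm{EX}}}\to 0$ as $T\to\infty$, so $\frac{(p^{\mathrm{EX}})^{nB}}{1-(p^{\mathrm{EX}})^{nB}}=o_{T\to\infty}(1)$, which is the equality $(*)$.

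The hardest part will be making the per-phase reduction fully rigorous: justifying, through a strong-Markov/conditioning argument at the stopping time $t_j$, that the doubling restart genuinely produces an EXPLOIT policy on fresh i.i.d.\ rewards with budget $B_{t_j}$, and checking carefully that the full-history UCB index leaves the per-phase ruin probability unchanged (i.e.\ that UCB still pulls every not-yet-abandoned arm often enough for its threshold-crossing event to be realized). One must also confirm the exponential identity $p^{\mathrm{EX}}(jnB^2)=(p^{\mathrm{EX}})^{jnB}$, which relies on the integrality of the budget shares (e.g.\ $B/K\in\mathbb{N}$) and on Lemma~\ref{proba_lemma}.
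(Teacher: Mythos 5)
Your proposal is correct and follows essentially the same route as the paper's proof: decompose the ruin event over the doubling phases $t_j\leq\tau<t_{j+1}$, observe that during phase $j\geq 1$ the policy behaves as an EXPLOIT policy with an effective budget exceeding $jnB^2$ so that the phase-$j$ ruin probability is at most $(p^{\mathrm{EX}})^{jnB}$, and sum the geometric series. Your added care about conditioning at the stopping time $t_j$ and about why the full-history UCB index does not alter the per-phase ruin probability is a legitimate elaboration of steps the paper leaves implicit, not a different argument.
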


\noindent Proposition~\ref{proba_survival_exploit_dbl} shows that the progressively increasing exploration of EXPLOIT-UCB-DOUBLE is reasonable from the perspective of the probability of ruin, because it maintains the probability of ruin almost as small as the one of an EXPLOIT policy, which is ruin-wise Pareto-optimal. 

The next proposition shows that this progressively increasing exploration is also reasonable from the perspective of the reward maximization. The next proposition, whose proof can be found in Appendix~\ref{reward_exploit_dbl_proof}, shows that, in the case when there is no ruin, the expected cumulative reward of EXPLOIT-UCB-DOUBLE is asymptotically equal to the best possible expected reward.
\begin{proposition}\label{reward_exploit_dbl}
Let $n$ be such that $n\xrightarrow{T\to+\infty} +\infty$ and that $n = o(T^{1/4})$. Then, the reward given no ruin of EXPLOIT-UCB-DOUBLE with input parameter $n$ is bounded from below by
\begin{equation}\label{exploit_dbl_reward_given_no_ruin}
    \E\left[\sum_{t=1}^T X_t^{\pi^n_t} \1_{\tau(B, \pi^n)\geq t-1} \Bigg| \tau(B, \pi^n) \geq T\right] \geq \max_{k\in [K]} \mu_k T + o(T).
\end{equation}
\end{proposition}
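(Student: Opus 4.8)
The plan is to condition on no ruin and lower-bound $\E[S_T\mid A]$, where $A:=\{\tau(B,\pi)\ge T\}$ and $S_T=\sum_{t=1}^T X_t^{\pi_t}$: on $A$ every indicator $\1_{\tau(B,\pi)\ge t-1}$ equals $1$, so the left-hand side of \eqref{exploit_dbl_reward_given_no_ruin} is exactly $\E[S_T\mid A]$. Write $\mu^*:=\max_{k\in[K]}\mu_k$ and assume, as in Proposition~\ref{exploit_reward_upper_bound_proposition}, that arm $1$ is optimal. If $\mu^*\le 0$ the statement is immediate: no ruin forces $B+S_{T-1}>0$, hence $S_T>-B-1$, and since $\mu^* T\le 0$ this already gives $S_T\ge \mu^* T-(B+1)=\mu^* T+o(T)$. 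So I focus on $\mu^*>0$, in which case $\gamma(F_1)>0$, $p^{\mathrm{EX}}<1$, and Proposition~\ref{proba_survival_exploit_dbl} gives $P(A)\ge P(\tau(B,\pi)=\infty)\ge 1-p^{\mathrm{EX}}-o(1)=\Theta(1)$; this lower bound on $P(A)$ is what makes the conditioning harmless.

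First I would introduce the standard UCB concentration event $G$ on which, for every arm $k$ and every time $t$, the empirical mean $\hat X^k_{t-1}$ lies within $\sqrt{6\log(t-1)/N_k(t-1)}$ of $\mu_k$; the constant $6$ makes the deviation probabilities summable (via Hoeffding and a union bound over a grid of pull counts), so $P(G^c)=o(1)$. The heart of the argument is the \emph{deterministic} bound that, on $A\cap G$, $\sum_{k\ne 1}N_k(T)=o(T)$, hence $N_1(T)=T-o(T)$ and, by the concentration of arm $1$'s cumulative reward on $G$, $S_T\ge \mu^* N_1(T)-O(\sqrt{T\log T})-o(T)=\mu^* T-o(T)$. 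Here $A$ is used only to guarantee that the game runs all the way to $T$ (so the epoch structure is actually traversed), while $G$ controls the realized rewards.

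The crux is bounding the number of steps at which the optimal arm is unavailable, i.e.\ $1\notin\mathcal{A}_t$. I would split the epochs by their budget level $\beta_j\approx jnB^2$ (with $n=\log T$). On $G$, arm $1$'s within-epoch cumulative reward after $m$ pulls is at least $\mu^* m-\sqrt{6m\log T}$, whose minimum over $m\ge0$ is $-\tfrac{3\log T}{2\mu^*}$; comparing this with the safe threshold $-\beta_j/K+1$ shows that for every epoch with index $j\ge j_0$, where $j_0=O\!\big(K/(\mu^* B^2)\big)$ is a constant independent of $T$, arm $1$ can never leave $\mathcal{A}_t$. On those epochs the safe set always contains the optimal arm, so the usual UCB analysis bounds the suboptimal pulls occurring there by $\sum_{k\ne1}O(\log T/\Delta_k^2)=O(\log T)$, with $\Delta_k=\mu_1-\mu_k$. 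For the finitely many early epochs $j<j_0$ arm $1$ may be excluded, but I would show that each such epoch lasts only $O(\log T)$ steps: negative-mean safe arms are pulled at most $O(\beta_j/K)=O(\log T)$ times before their within-epoch cumulative reward drops below the threshold and they stay excluded for the rest of the epoch, after which the exploit branch (pulling $\argmax_k\hat X^k$, which is arm $1$ on $G$) or a remaining positive-mean arm drives the cumulative reward up by $nB^2$ in $O(nB^2/\mu^*)=O(\log T)$ further steps. This self-correcting behaviour also rules out the policy getting stuck below $j_0 nB^2$. Summing over the $j_0$ early epochs bounds the bad steps by $O(j_0\log T)=o(T)$, and combining with the late-epoch bound yields $\sum_{k\ne1}N_k(T)=o(T)$.

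Finally I would discharge the conditioning. Decomposing $\E[S_T\1_A]=\E[S_T\1_{A\cap G}]+\E[S_T\1_{A\cap G^c}]$, the first term is at least $(\mu^* T-o(T))\,P(A\cap G)$ by the deterministic bound, while on $A\cap G^c$ the trivial inequality $S_T>-B-1$ contributes at least $-(B+1)P(G^c)=-o(1)$. Since $P(A\cap G)\ge P(A)-P(G^c)=P(A)-o(1)$ and $P(A)=\Theta(1)$, dividing by $P(A)$ gives $\E[S_T\mid A]\ge \mu^* T-o(T)$, as claimed. I expect the main obstacle to be the early-epoch analysis of the third paragraph: controlling the safe set while arm $1$ can be temporarily excluded, showing that the early epochs are short (so the bad steps are $o(T)$) through the self-correcting exploit branch, and making the union bound over epochs uniform on $G$. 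A secondary point to handle with care is that $G$ does \emph{not} imply survival (arm $1$'s cumulative reward may dip below $-B$ early), so survival must be supplied by the conditioning event $A$ rather than deduced from concentration.
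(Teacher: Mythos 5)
Your overall architecture matches the paper's: show that once the accumulated budget is large enough arm $1$ can never leave the safe set so the standard UCB analysis bounds suboptimal pulls, show that the time spent before reaching that budget level is $o(T)$, and then discharge the conditioning on $\{\tau(B,\pi)\ge T\}$ using $P(\tau(B,\pi)\ge T)=\Theta(1)$ (the paper does this via Lemma~\ref{reward_expressions_lemma}, and your treatment of the degenerate case $\max_k\mu_k\le 0$ is a nice addition). Your late-epoch argument is essentially the paper's bound on the terms $(B_k)$, except that the paper waits until the budget reaches $T^{1/4}$ rather than a constant multiple of $nB^2=B^2\log T$; either threshold works for that half of the argument.

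The genuine gap is in the early-epoch analysis, and it is not just the technical difficulty you flag but a false intermediate claim: on $A\cap G$ an early epoch need \emph{not} last $O(\log T)$ steps. The obstruction is an arm with $\mu_k=0$ (and positive variance). Your event $G$ only bounds deviations of empirical means from above and below by $\sqrt{6\log t/N_k}$; it does not force a mean-zero arm's within-epoch cumulative reward to ever cross the exclusion threshold $-\beta_j/K+1$. If arm $1$ is excluded early in epoch $0$ (which is consistent with $G$, since a single reward of $-1$ lies well inside the confidence band when $N_1=O(1)$), the algorithm can then pull a safe mean-zero arm whose cumulative reward performs a recurrent random walk hovering near $0$: this is simultaneously consistent with survival ($A$), with $G$, with the arm staying in $\mathcal{A}_t$, and with the epoch never ending, for $\mathrm{poly}(T)$ steps. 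So there is no deterministic $O(\log T)$ bound, and the same issue defeats the claim that the exploit branch "is arm $1$ on $G$" and drives the budget up. What is true, and what the paper proves, is an \emph{expectation} bound: letting $t_j$ be the first time the budget exceeds $\approx T^{1/4}$, one shows $\E[t_j]=o(T)$ by an anti-concentration argument --- if the budget stays below $T^{1/4}$ for $T^{3/4}$ steps then some arm pulled at least $T^{3/4}/K$ times has its cumulative reward confined to a band of width $O(T^{1/3})$, an event of vanishing probability by Hoeffding for arms with $\mu_k\ne 0$ and by the central limit theorem for mean-zero arms. You would need to replace your deterministic epoch-by-epoch bound with an argument of this type (and correspondingly move from a pointwise bound on $A\cap G$ to a bound on $\E[S_T\1_A]$); once that is done the rest of your proof goes through.
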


\noindent The proof of Proposition~\ref{reward_exploit_dbl} also holds in the case $B = T$, that is, when there is no risk of ruin. Therefore, if you apply EXPLOIT-UCB-DOUBLE to a standard MAB (without a risk of ruin), then its expected cumulative reward will be equal to $\max_{k\in [K]} \mu_k T + o(T)$. In the standard MAB terminology, this means that its expected cumulative regret is sublinear: $\max_{k\in [K]} \mu_k T - \Rew(\pi^n) = o(T)$.

Now, if we gather the results of Proposition~\ref{proba_survival_exploit_dbl} and Proposition~\ref{reward_exploit_dbl}, the former states that the probability of ruin of EXPLOIT-UCB-DOUBLE is very small (it is almost ruin-wise Pareto-optimal), and when it does not ruin, the latter states that its cumulative reward is asymptotically maximal. Therefore, its cumulative reward is almost asymptotically optimal, and for any value of the input parameter $n$, EXPLOIT-UCB-DOUBLE is ``almost'' regret-wise Pareto-optimal. This is formalized in the next theorem.
\begin{theorem}\label{fake_teaser_theorem}
For any sequence of policies $\pi'$, the anytime policy EXPLOIT-UCB-DOUBLE (given in Algorithm~\ref{double_exploit_algo}) with input parameter $n$ satisfies
\begin{equation*}
    \sup_{F\in \mathcal{F}_{\{-1, 0, 1\}}^K} \Reg(\pi^n\|\pi') > 0 \implies \inf_F \Reg(\pi^n\|\pi') < \frac{(p^{\mathrm{EX}})^{nB}}{1 - (p^{\mathrm{EX}})^{nB}}\max_{k\in [K]} \mu_k.
\end{equation*}
\end{theorem}

\noindent The reason why EXPLOIT-UCB-DOUBLE with arbitrary $n$ is only ``almost'' regret-wise Pareto-optimal is that, for a very large horizon $T$, the additional exploration of EXPLOIT-UCB-DOUBLE induces an additional risk of ruin of order $\frac{(p^{\mathrm{EX}})^{nB}}{1 - (p^{\mathrm{EX}})^{nB}}$, which is of constant order if $n$ is not allowed to depend on $T$. Yet, if $n$ is allowed to depend on $T$, then we can drop the ``almost'' in the above theorem.

Following Propositions~\ref{proba_survival_exploit_dbl} and \ref{reward_exploit_dbl}, any $n = o(T^{1/4})$ such that $n\xrightarrow{T\to+\infty} +\infty$ is a valid choice. Yet, within that range, a larger $n$ will increase $t_1$ and hence, EXPLOIT-UCB-DOUBLE will behave longer like an EXPLOIT policy, undermining its cumulative regret in the long term. As a result, we recommend the subpolynomial (in $T$) $n = \log T$, which also showed a better practical performance than other values of $n$ (see the experiments in Section~\ref{experiment_section}), and we will formulate our final theorems for the specific choice $n = \log T$.
\begin{theorem}\label{teaser_theorem}
The (non-anytime) sequence of policies EXPLOIT-UCB-DOUBLE with input parameter $n = \log T$ is regret-wise Pareto-optimal.
\end{theorem}

\noindent The proof of Theorems~\ref{fake_teaser_theorem} and \ref{teaser_theorem} is given in Appendix~\ref{final_trivial_steps_section}. 

\begin{remark}
    In the above theorem, $n$ depends on $T$ and therefore, the sequence of policies is no longer anytime, as explained in Section~\ref{setup_section}. This is the price to pay to achieve the regret-wise Pareto-optimality. 
\end{remark}

\noindent Finally, please note that Theorems~\ref{fake_teaser_theorem} and \ref{teaser_theorem} provide answers to the open problem by \citealp{perotto}. Some discussion on the extent of the results described in this subsection is provided in the next subsection.

\subsection{Discussion}

In this subsection, we summarize the theoretical strengths and limitations of EXPLOIT-UCB-DOUBLE. 

\subsubsection{Cumulative Reward given no ruin}

EXPLOIT-UCB is an EXPLOIT policy, and as such, it has the advantage to be ruin-wise Pareto-optimal, but this comes at the cost of a budget-limited exploration. Because of this limited exploration, its cumulative reward is in general smaller than $(1-p^{\mathrm{EX}})\max_{k\in [K]} \mu_k T + o(T)$ (Proposition~\ref{exploit_reward_upper_bound_proposition}). By~\eqref{regret_as_product_survival_reward}, this implies that the expected cumulative reward given no ruin of EXPLOIT-UCB is in general smaller than $\max_{k\in [K]} \mu_k T + o(T)$:
\begin{equation*}
    \max_{k\in [K]} \mu_k T - \E\left[\sum_{t=1}^T X_t^{\pi_t} \1_{\tau(B, \pi)\geq t-1} \Bigg| \tau(B, \pi) \geq T\right] = \Omega(T).
\end{equation*}

\noindent Now, assume that you have to deploy a policy $\pi$ for a critical application of the standard MAB (with no risk of ruin). Since the application is critical, you decide to apply a conservative strategy, and for that, you set some arbitrary $B>0$ and you apply EXPLOIT-UCB (for any $t\geq \tau(B, \pi)$, the choice of arms is arbitrary). Then, the above result shows that the policy $\pi$ will likely have a linear regret, which is not satisfactory.

However, EXPLOIT-UCB-DOUBLE solves that issue by progressively increasing the exploration as its cumulative reward grows (Proposition~\ref{reward_exploit_dbl}). This is very important because it means that EXPLOIT-UCB-DOUBLE can be reasonably applied to a standard MAB setting and achieve a sublinear regret (in the sense of the standard MAB). This result is valid for any choice of input parameter $n$ for EXPLOIT-UCB-DOUBLE.

\subsubsection{Probability of ruin and Regret-wise Pareto Optimality}

As a matter of fact, the input parameter $n$ controls the risk of ruin of EXPLOIT-UCB-DOUBLE. On the one hand, if $n$ is chosen constant and independent of $T$, then EXPLOIT-UCB-DOUBLE is an anytime policy, but it is not regret-wise Pareto-optimal. This comes as no surprise, because it cannot compete against all the non-anytime sequences of policies. Yet, it is ``almost'' regret-wise Pareto-optimal, up to the term $\frac{(p^{\mathrm{EX}})^{nB}}{1 - (p^{\mathrm{EX}})^{nB}}\max_{k\in [K]} \mu_k$, which is exponentially small in $n$ and $B$ (Theorem~\ref{fake_teaser_theorem}). On the other hand, if $n = \log T$ depends on $T$, then EXPLOIT-UCB-DOUBLE is not an anytime sequence of policies anymore, but it becomes regret-wise Pareto-optimal (Theorem~\ref{teaser_theorem}).

From a practical perspective, how to choose the parameter $n$? Naturally, if the horizon $T$ is known before the experiment, then Theorem~\ref{teaser_theorem} recommends the choice $n = \log T$ and provides good theoretical guarantees for it. Yet, if $T\leq 10^{10}$, then $1\leq n\leq 23$ and there are not so many candidates for a good choice of $n$. In practice, $n = 1$ is a decent choice and brings a decent reward (which is almost as good as $n = \log T$). Experiments are given in the next subsection.

\subsubsection{Anytime vs Regret-wise Pareto Optimality}

\textit{Is there an anytime policy which is regret-wise Pareto-optimal?} As explained in Section~\ref{bernoulli_vs_multinomial_subsection}, in the case of multinomial arms, unlike in the case of Bernoulli arms, the arm $k^P$ with the largest probability of survival, and the arm $k^E$ with the largest expectation, may be different. In that case, even the best policy which knows the arm distributions is not trivial. Intuitively, such a policy should pull arm $k^P$ until a certain round $\Tilde{t}$ to minimize the risk of ruin, and then pull arm $k^E$ until $T$, but $\Tilde{t}$ should depend on $T$. It is therefore intuitive that the best such policy is not anytime, and we raise the following open question:
\begin{conjecture}
    No anytime policy is regret-wise Pareto-optimal for all arm distributions in $\mathcal{F}_{\{-1, 0, 1\}}$.
\end{conjecture}

\noindent Please note that, as explained in Section~\ref{bernoulli_vs_multinomial_subsection}, such a matter does not arise in the case of Bernoulli arm distributions, because in that case, $k^E = k^P$. The case of Bernoulli distributions does not capture this subtlety, and this is why we focused on the more difficult case of multinomial distributions in this paper.

\subsection{Experiments}\label{experiment_section}

In order to evaluate the practical performance of the algorithms introduced in this paper, we define the survival regret as
\begin{equation*}
    \SReg(\pi) = \left(1 - \exp\left(- \frac{B}{K}\sum_{k=1}^K \gamma(F_k)\right)\right) \max_{k\in [K]} \mu_k T - \Rew(\pi),
\end{equation*}
which is a hypothetical regret with respect to a policy that achieves the Pareto-optimal ruin probability and always pulls the arm with the highest expected reward given no ruin.

The setting chosen consists of $K = 3$ multinomial arms of common support $\{-1, 0, 1\}$ and parameters $(P(X=-1), P(X=0), P(X=1))$ respectively equal to $(0.4, 0.1, 0.5)$, $(0.05, 0.85, 0.1)$ and $(0.6, 0, 0.4)$, a horizon $T = 20000$ and a budget $B = 9$. Please note that this is an example such that arm 1 has the largest expected reward, while arm 2 has the largest probability of survival, that is, the lowest $\gamma(F_k)$. The curves are averages over $200$ simulations. Simulation results for other settings are given in Appendix \ref{experiments_appendix}.

We compare the survival regret of EXPLOIT-UCB and EXPLOIT-UCB-DOUBLE (with parameters $n\in \{1, \lceil\log T\rceil, 100\}$, where $\lceil\log T\rceil = 10$) to the classic bandit algorithms UCB (\citealp{auer}) and Multinomial Thompson Sampling (MTS), which is a generalization of Thompson sampling to multinomial rewards (\citealp{riou}). 
As expected, EXPLOIT-UCB-DOUBLE with the parameter $n = \lceil\log T\rceil$ clearly outperforms all the other algorithms. Nevertheless, EXPLOIT-UCB-DOUBLE with other values of $n\in \{1, 100\}$ still has a decent performance and are, at least in practice, good anytime algorithms. On the other hand, MTS has the worst performance due to frequent ruins: we observed that its proportion of ruins culminates at around $30\%$, with an average number of survived rounds $\min\{\tau(B, \pi), T\}$ slightly above $14000$, which is in stark contrast to the ones of EXPLOIT-UCB and EXPLOIT-UCB-DOUBLE, 
at around $19000$. EXPLOIT-UCB has a performance which is comparable to MTS, and while its average time of ruin is very high (above $18000$), its quasi absence of exploration makes it frequently pull a suboptimal arm until the horizon $T$, inducing a larger regret.

\begin{figure}[t]
\begin{minipage}[c]{0.5\linewidth}
        \centering
        \vspace{0mm}
        \includegraphics[scale=0.8]{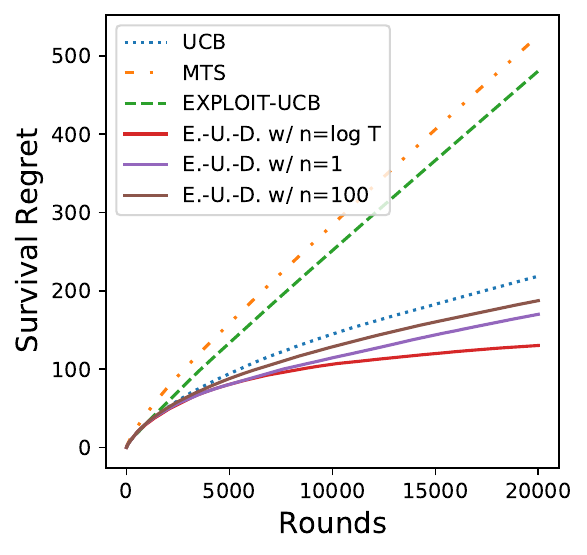}
\end{minipage}
\hfill
\begin{minipage}[c]{0.46\linewidth}
    \begin{table}[H]
        \begin{tabular}{|c|c|c|} 
         \hline
         Policies & \% of ruins & $\min\{\tau, T\}$ \\
         \hline\hline
         UCB & 10\% & 18011.93 \\
         \hline
         MTS & 30\% & 14030.095 \\
         \hline
         EX-UCB & 5.5\% & 18907.15 \\
         \hline
         EX-D$(\log T)$ & 4.5\% & 19105.545 \\
         \hline
         EX-D$(1)$ & 8\% & 18407.91 \\
         \hline
         EX-D$(100)$ & 7.5\% & 18507.75 \\ 
         \hline
        \end{tabular}
    \end{table}
\end{minipage}
\label{fig:smab_hope}
\caption{Comparison of the regret, percentage of ruins and time of ruin of EXPLOIT-UCB-DOUBLE, EXPLOIT-UCB, UCB and MTS.}
\vspace{-5mm}
\end{figure}

\section{Extension to the case of non-integer rewards}\label{generalization_section}

This section is structured in two subsections: 
\begin{enumerate}
    \item first, we generalize our results on the probability of ruin to the case of rewards in $[-1, 1]$;
    \item secondly, we generalize our policies to the case of rewards in $[-1, 1]$ and explain the challenges in deriving regret-wise and ruin-wise Pareto-optimality-type theoretical guarantees.
\end{enumerate}

\subsection{Generalized Results on the Probability of Ruin}

Let $\mathcal{F}_{[-1,1]}$ be the set of distributions bounded in $[-1, 1]$ which are not positive or zero (see Definition~\ref{positive_zero_definition}). Theorem~\ref{non_asymptotic_lower_bound_theorem} can be generalized as follows.
\begin{proposition}\label{non_asymptotic_lower_bound_corollary}
Let $(\alpha_k)_{k\in [K]}$ be as in Theorem~\ref{non_asymptotic_lower_bound_theorem}. For any policy $\pi$,
\begin{multline}
    \inf_{F\in \mathcal{F}_{[-1, 1]}^K} \left\{P_{(F_{1}, \ldots, F_{K})}(\tau(B, \pi)<\infty) - \exp\left(- (B+1)\sum_{k=1}^K \alpha_k \gamma(F_k)\right)\right\} <0 \\
    \implies \sup_{F\in \mathcal{F}_{[-1, 1]}^K} \left\{P_{(F_{1}, \ldots, F_{K})}(\tau(B, \pi)<\infty) - \exp\left(- (B+1)\sum_{k=1}^K \alpha_k \gamma(F_k)\right)\right\} >0. \label{lower_general}
\end{multline}
\end{proposition}

\noindent The proof of Proposition~\ref{non_asymptotic_lower_bound_corollary} follows the proof of Theorem~\ref{non_asymptotic_lower_bound_theorem}, except for the subadditivity argument, and it is given in Appendix~\ref{lower_bound_proba_long_proof}, next to the proof of Theorem~\ref{non_asymptotic_lower_bound_theorem}. 

The above bound is most likely not tight, because the subadditivity argument that we used is not tight. Actually, this is the main technicality which separates the general case from the multinomial case. In the multinomial case, it is known that at the time of ruin, the cumulative reward is exactly $-\lceil B \rceil$. In the general case, however, the cumulative reward at the time of ruin is stochastic and depends on the arm distributions as well as the policy $\pi$. 

Similarly to the case of rewards in $\{-1, 0, 1\}$, we state a lemma providing an insightful interpretation of the bound \eqref{lower_general}. While in the case of rewards in $\{-1, 0, 1\}$, Lemma~\ref{proba_lemma} gave an easy expression of $\gamma(F_k)$, in the general case, it is expressed as a limit.
\begin{lemma}
For any $k\in [K]$,
\begin{equation*}
    \forall F_k\in \mathcal{F}_{[-1,1]}, \quad \frac{1}{B} \log P_{F_k}(\tau(B, k)<\infty) \leq \liminf_{B'\rightarrow \infty} \frac{1}{B'} \log P_{F_k}(\tau(B', k)<\infty) = -\gamma(F_k).
\end{equation*}
\end{lemma}

\noindent This lemma means that $\gamma(F_k)$ can be related to the probability of ruin $P_{F_k}(\tau(B,k)<\infty)$ of the stochastic process with increments i.i.d. from $F_k$.

\subsection{Generalized Results on the Regret}

We first generalize the EXPLOIT framework to rewards in $[-1, 1]$, and then give the results on EXPLOIT-UCB-DOUBLE based on the new EXPLOIT framework.

\subsubsection{Generalization of the EXPLOIT Framework}

For the sake of clarity, we generalize the framework EXPLOIT$\left(B_1, \dots, B_K\right)$ only in the case $B_1 = \dots = B_K = \frac{B}{K}$ (but we could similarly generalize it to any $(B_1, \dots, B_K)$), and we will refer to this generalization as EXPLOIT. Let $(Y_k^s)_{s\geq 1}$ be the rewards from arm $k\in [K]$. The main problem in the general case is that the sum of rewards is not necessarily an integer, and hence if you pull arm $k$ until the first round $t$ such that $\sum_{s=1}^t Y_k^s \leq -\frac{B}{K}$, then we have
\begin{equation*}
    \sum_{s=1}^t Y_k^s = -\frac{B}{K} - \kappa, \ \text{ with } \ \kappa\in [0, 1).
\end{equation*}

\noindent If $\kappa$ is large, then this will reduce the budget of the other arms. To remedy that issue, we conservatively decide to stop the exploration when $\sum_{s=1}^t Y_k^s < -\frac{B}{K}+1$, so that each arm has a budget share between $\frac{B}{K}-1$ and $\frac{B}{K}$. This is formalized as follows.
\begin{definition}
For any $k\in [K]$, denoting by $\left(Y_k^s\right)_{s\geq 1}$ the rewards from arm $k$, let $\tau_k^{<} := \inf\big\{t\geq 1:$
$\sum_{s=1}^t Y_k^s < -\frac{B}{K} + 1\big\}$. We say that a policy $\pi$ belongs to the EXPLOIT framework if: 
\begin{itemize}
    \item[-] for any $t< \sum_{k=1}^K \tau_k^{<}, \ \pi_t \in \left\{k\in [K]: \sum_{s=1}^t \1_{\pi_s = k}< \tau_k^{<}\right\}$, and
    \item[-] for any $t\geq \sum_{k=1}^K \tau_k^{<}, \ \pi_t = \argmax\left\{\sum_{s=1}^t X_s^{\pi_s} \1_{\pi_s = k}\right\}$ (in case of tie, $\pi_t$ is the smallest arm index).
\end{itemize}
\end{definition}

\noindent We decompose $B = n_K K + b$ for an integer $n_K$ and $0\leq b< K$, and let $\alpha_k(B) \triangleq \frac{n_K + \1_{k+1<b}}{B}$ for any $k\in [K]$, which we denote by $\alpha_k$ in the absence of ambiguity on the initial budget $B$. Similarly to the multinomial case, all the distributions in EXPLOIT have the same probability of ruin $p^{\mathrm{EX}}\left(\frac{B}{K}, \dots, \frac{B}{K}\right)$. However, in the general case, this value is not deterministic. Yet, it can be bounded easily following the definition of EXPLOIT.
\begin{proposition}\label{exploit_proba_optimality}
It holds that
\begin{equation*}
    \exp\left(- B\sum_{k=1}^K \alpha_k(B) \gamma(F_k)\right) \leq p^{\mathrm{EX}}\left(\frac{B}{K}, \dots, \frac{B}{K}\right) \leq \exp\left(- B\sum_{k=1}^K \left(\alpha_k(B) - \frac{1}{B}\right) \gamma(F_k)\right).
\end{equation*}
\end{proposition}

\noindent The RHS of Proposition~\ref{exploit_proba_optimality} does not match the lower bound of Proposition~\ref{non_asymptotic_lower_bound_corollary}. However, if we apply it to the initial budget $B' = B+K+1$, it yields 
\begin{equation*}
    p^{\text{EX}}\left(\frac{B'}{K}, \dots, \frac{B'}{K}\right) \leq \exp\left(- (B+1)\sum_{k=1}^K \alpha_k(B+1) \gamma(F_k)\right),
\end{equation*}
which coincides with the lower bound in Proposition~\ref{non_asymptotic_lower_bound_corollary}. Therefore, the looseness of the ruin probability in the general case  (w.r.t. the bound of Proposition~\ref{non_asymptotic_lower_bound_corollary}) corresponds to a budget loss of at most $K+1$.

\subsubsection{Generalization of EXPLOIT-UCB-DOUBLE}

We define EXPLOIT-UCB and EXPLOIT-UCB-DOUBLE based on the newly defined EXPLOIT framework for rewards in $[-1, 1]$. The pseudo-code for EXPLOIT-UCB-DOUBLE remains the same as in the multinomial case and is given in Algorithm~\ref{double_exploit_algo}. Interestingly, most of the results can be extended from the multinomial case to the general case. 

\begin{proposition}
    Let $\pi^n$ be the policy associated to the policy EXPLOIT-UCB-DOUBLE with input parameter $n$. Its probability of ruin is upper bounded by
    \begin{equation*}
        P\left(\tau(B, \pi^n) < \infty\right) \leq p^{\mathrm{EX}} + \frac{(p^{\mathrm{EX}})^{nB}}{1 - (p^{\mathrm{EX}})^{nB}}.
    \end{equation*}
    The cumulative reward given no ruin of EXPLOIT-UCB-DOUBLE is bounded from below by
    \begin{equation*}
        \E\left[\sum_{t=1}^T X_t^{\pi^n_t} \1_{\tau(B, \pi^n)\geq t-1} \Bigg| \tau(B, \pi^n) \geq T\right] \geq \max_{k\in [K]} \mu_k T + o(T).
    \end{equation*}
    As a consequence of the above results, it holds that, for any sequence of policies $\pi'$,
    \begin{equation*}
        \sup_{F\in \mathcal{F}_{\{-1, 0, 1\}}^K} \Reg(\pi^n\|\pi') > 0 \implies \inf_F \Reg(\pi^n\|\pi') < \frac{(p^{\mathrm{EX}})^{nB}}{1 - (p^{\mathrm{EX}})^{nB}}\max_{k\in [K]} \mu_k.
    \end{equation*}
\end{proposition}

\noindent As explained before, the main difficulty is that in general, for any arm $k, \ \sum_{s=1}^{\tau_k^{<}} Y_k^s$ is not deterministic (using the notations of Definition~\ref{exploit_definition}). As a result, even for EXPLOIT policies, the probability of ruin cannot be decomposed as a product of independent ruin probabilities of the arms. The same reason leads to the looseness in the subadditivity bound (see Lemma~\ref{sub_additivity_lemma}), leading to the $(B+1)$ factor in the bound of Proposition~\ref{non_asymptotic_lower_bound_corollary} instead of $B$. For that reason, we believe that the bound of Proposition~\ref{non_asymptotic_lower_bound_corollary} is not tight and conjecture that EXPLOIT-UCB-DOUBLE is regret-wise Pareto-optimal even in the general case.
\begin{conjecture}
    EXPLOIT-UCB-DOUBLE is regret-wise Pareto optimal in the general case of arm distributions in $\mathcal{F}_{[-1, 1]}$.
\end{conjecture}

\noindent Finally, note that in the general case as well, the reward given no ruin of EXPLOIT-UCB-DOUBLE (see Proposition~\ref{reward_exploit_dbl}) is asymptotically optimal and equal to $\max_{k\in [K]} \mu_k T$, which makes it worth applying to more standard bandit settings where an algorithm with a stronger exploitation component is desired.

\section{Conclusion}

In this paper, we introduced the S-MAB, an extension of the MAB with a risk of ruin, which naturally follows from many practical applications but is considerably more difficult to study. For example, contrary to the MAB, no policy can achieve a sublinear regret in the standard sense, because every single pull increases considerably the probability of ruin. Our contributions are threefold:
\begin{itemize}
    \item[-] we formally defined the problem, defined the objective to achieve with the regret-wise Pareto-optimality and introduced the key notion to our problem with the time of ruin and the probability of ruin. Furthermore, we explained how an optimal policy needs to minimize the probability of ruin while at the same time maximize the cumulative reward given no ruin, which are two concepts in apparent opposition;
    \item[-] we studied the probability of ruin, on which we provided both a lower bound (Theorem~\ref{non_asymptotic_lower_bound_theorem}) and policies achieving this lower bound (EXPLOIT policies);
    \item[-] using a doubling trick over an EXPLOIT policy, we derived a policy which is almost regret-wise Pareto-optimal, and can be made exactly Pareto-optimal if the policy knows the horizon before starting the procedure. This provides an answer to an open problem from COLT 2019 (see \citealp{perotto}).
\end{itemize}

\noindent Along the way, we raised several open questions which we keep for future work: in the case of integer rewards, is there a policy which is regret-wise Pareto-optimal and does not depend on the horizon? In the general case of bounded rewards, most of our results extend, except that the lower bound on the probability of ruin is seemingly not tight and we did not prove that EXPLOIT-UCB-DOUBLE is regret-wise Pareto-optimal with $n = \log T$. Can we improve upon those? 

This is a fairly new and yet unexplored problem, but we believe that it is very rich and paves the way to a myriad of new questions.


\bibliography{references}

\newpage

\newpage
\appendix
\allowdisplaybreaks[1]


\section{General Classic Results}

\subsection{A Classic Lemma in the Theory of Stochastic Processes}

The following result is classic in the theory of stochastic processes and we state it without a proof. It will be used throughout the paper.

\begin{lemma}\label{constant_survival}
Consider the setting of $K\geq 1$ arms of multinomial distributions $F_1, \dots, F_K$ (of common support $\{-1, 0, 1\}$). Let $B>0$ be a positive integer. Then,
\begin{equation*}
    P(\tau(B, k)<\infty) = \min\left(1, \left(\frac{P_{X\sim F_k}(X=-1)}{P_{X\sim F_k}(X=1)}\right)^B\right).
\end{equation*}
\end{lemma}

\section{Proof of the Linearity of the Classic Regret (Proposition~\ref{linear_regret})}\label{classic_regret_linear_proof}

In this appendix, we prove a slightly stronger version of Proposition~\ref{linear_regret}, namely that Proposition~\ref{linear_regret} even holds in the case where the supremum on $F$ is taken on Bernoulli arm distributions of support $\{-1, 1\}$. Let $\F_{\{-1, 1\}}^K$ be the set of $K$-tuples of Bernoulli arm distributions of support $\{-1, 1\}$.
\begin{proposition}\label{linear_regret_append}
Assume that the initial budget $B>2$. For any policy $\pi$, it holds that
\begin{equation*}
    \sup_{F\in \F_{\{-1, 1\}}^K} \sup_{\tilde{\pi}} \ \Reg(\pi\|\tilde{\pi}) > 0.
\end{equation*}
\end{proposition}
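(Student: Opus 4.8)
The plan is to exhibit, for any given policy $\pi$, a \emph{single} instance $F\in\F_{\{-1,1\}}^K$ together with a benchmark policy $\tilde\pi$ (tailored to that $F$) on which $\pi$ is beaten at a strictly positive asymptotic rate; this suffices since then $\sup_F\sup_{\tilde\pi}\Reg(\pi\|\tilde\pi)\ge\Reg(\pi\|\tilde\pi)>0$. The driving mechanism is the one sketched after Proposition~\ref{linear_regret}: the benchmark can commit to a single very good arm and survive with a small ruin probability $q_B$, whereas $\pi$, not knowing which arm is good, is forced on some instance to incur a strictly larger, constant-order ruin probability, and each unit of excess ruin probability forfeits an $\Omega(T)$ slice of future reward. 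I assume $K\ge 2$ (for $K=1$ all policies coincide and the statement is degenerate).

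\textbf{Construction.} Fix a small constant $\epsilon\in(0,1/2)$, to be chosen as a function of $B$ at the end. For each $a\in[K]$ let $F^{(a)}$ be the instance in which arm $a$ is Bernoulli on $\{-1,1\}$ with $P(X=1)=1-\epsilon$ (so $\mu_a=1-2\epsilon>0$ is the largest mean) and every other arm is Bernoulli with $P(X=1)=\epsilon$ (negative mean). On $F^{(a)}$ the benchmark ``pull arm $a$ forever'' has, by Lemma~\ref{constant_survival}, ruin probability $q_B:=(\epsilon/(1-\epsilon))^{\lceil B\rceil}$, and given survival it earns $(1-2\epsilon)$ per step; a routine computation then gives $\liminf_T\Rew(\tilde\pi^{(a),T})/T\ge(1-2\epsilon)(1-q_B)$.

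\textbf{Selecting the bad instance.} For each horizon $T$ the first action $\pi^T_1$ is chosen before any observation, so $\sum_j P(\pi^T_1=j)=1$ and some arm is pulled first with probability $\le 1/K$. As $[K]$ is finite, there is a \emph{fixed} arm $a^\ast$ and an infinite set $\mathcal{T}$ of horizons with $P(\pi^T_1=a^\ast)\le 1/K$ for all $T\in\mathcal{T}$. Fix $F:=F^{(a^\ast)}$. On $F$, for $T\in\mathcal{T}$ the first pull is a bad arm with probability $\ge 1-1/K\ge 1/2$, so the first reward is $-1$ with probability $\ge\tfrac12(1-\epsilon)$, leaving budget $B-1>0$ (this is where $B>2$ enters). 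Since under \emph{any} policy and history each reward equals $-1$ with probability at least $\epsilon=\min(\epsilon,1-\epsilon)$, forcing a further run of $\lceil B-1\rceil=\lceil B\rceil-1$ rewards equal to $-1$ drives the budget to $\le 0$, giving for all $T\in\mathcal{T}$
\[
    P_F\big(\tau(B,\pi^T)\le\lceil B\rceil\big)\ \ge\ \tfrac12(1-\epsilon)\,\epsilon^{\lceil B\rceil-1}\ =:\ \rho .
\]
A direct comparison shows $\rho>q_B$ as soon as $(1-\epsilon)^{\lceil B\rceil+1}>2\epsilon$, which holds for $\epsilon$ small enough depending on $B$; fix such an $\epsilon$.

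\textbf{From excess ruin to a positive rate, and the main obstacle.} Because $1-2\epsilon=\max_k\mu_k$, the conditional-mean increments are bounded by $1-2\epsilon$, so optional stopping at the bounded time $\min(\tau,T)$ yields $\Rew(\pi^T)\le(1-2\epsilon)\,\E[\min(\tau,T)]+O(1)$; bounding $\E[\min(\tau,T)]\le t_0+T\,P(\tau>t_0)$ with the fixed window $t_0=\lceil B\rceil$ gives $\limsup_{T\in\mathcal{T}}\Rew(\pi^T)/T\le(1-2\epsilon)(1-\rho)$. Combining with the benchmark estimate, $\Reg(\pi\|\tilde\pi^{(a^\ast)})\ge\limsup_{T\in\mathcal{T}}(\cdots)\ge(1-2\epsilon)(\rho-q_B)>0$, which proves the claim. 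The delicate step — and the main obstacle — is precisely this last conversion of a lower bound on the ruin probability \emph{within a fixed window} into an upper bound on the reward \emph{rate}, i.e. making rigorous that an early ruin forfeits a constant fraction $\rho$ of the $\Omega(T)$ attainable reward; the pigeonhole removing the $T$-dependence of $\pi^T_1$, the forcing argument, and the explicit ruin values from Lemma~\ref{constant_survival} are all elementary. One should also verify carefully that conditioning on survival does not depress the benchmark's per-step reward below $1-2\epsilon$.
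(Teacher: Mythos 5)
Your proof is correct, and it reaches the conclusion by a genuinely different quantitative mechanism than the paper's. Both arguments open with the same pigeonhole step (some arm is first-pulled for infinitely many horizons; you handle randomized policies by extracting an arm with first-pull probability at most $1/K$, while the paper treats $\pi_1^T$ as deterministic), and both compare $\pi$ to the constant policy on the unique optimal arm. The difference is in how the $\Omega(T)$ gap is extracted. The paper upper-bounds $\Rew(\pi^T)$ by letting $\pi$ play the optimal arm at every step after the first, and then computes the cost of that single suboptimal pull exactly via the gambler's-ruin probabilities at budgets $B+1$ and $B-1$ (Lemma~\ref{constant_survival}), obtaining the explicit positive coefficient $\frac{1}{p_1}(2p_1-1)^2\Delta^{\lceil B\rceil-1}\left(1-\frac{p_2}{p_1}\right)$ in front of $T$. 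You instead lower-bound $P_F\left(\tau(B,\pi^T)\le\lceil B\rceil\right)$ by a forcing argument ($\lceil B\rceil$ consecutive $-1$'s, each of conditional probability at least $\epsilon$ given any history), verify that this exceeds the benchmark's total ruin probability $q_B$ for $\epsilon$ small, and convert the excess ruin probability into a reward gap via $\E\left[X_t^{\pi_t}\1_{\tau\ge t-1}\right]\le\max_k\mu_k\,P(\tau\ge t-1)$ together with $\E[\min(\tau,T)]\le\lceil B\rceil+T\,P(\tau>\lceil B\rceil)$; this conversion, which you flag as the delicate step, is sound as you set it up. Your route is more elementary (the exact hitting-probability formula is needed only for the benchmark) and covers randomized first actions explicitly; the paper's route is sharper in that it quantifies the loss from a single suboptimal pull. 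Two small remarks: your worry about the benchmark is vacuous, since $X_t^{a^*}$ is independent of $\1_{\tau(B,a^*)\ge t-1}$ and hence $\E\left[X_t^{a^*}\1_{\tau\ge t-1}\right]=\mu_{a^*}P(\tau\ge t-1)$ with no conditioning needed; and both proofs implicitly require $K\ge2$, as you note, the statement being false for $K=1$.
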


\begin{proof}
Let $\pi = (\pi^T)_{T\geq 1}$ be a sequence of policies, and let $k_0 \in [K]$ such that there are infinitely many $T\geq 1$ such that
\begin{equation}
    P(\pi_{1}^T = k_0) \leq \frac{1}{K}. \label{first_sample_ill_chosen}
\end{equation}

\noindent We denote by $S$ the set of all such $T\geq 1$:
\begin{equation*}
    S := \left\{T\geq 1: P(\pi_{1}^T = k_0) \leq \frac{1}{K}\right\},
\end{equation*}
and we note that $|S| = \infty$. We then let $F_1, \dots, F_K$ be the Bernoulli arm distributions, with respective parameters $p_1, \dots, p_K$ such that $k_0$ is the only optimal arm:
\begin{equation*}
    p_{k_0} > \max_{k\ne k_0} p_k.
\end{equation*}

\noindent W.l.o.g., we can assume that $k_0 = 1$. Recall that, by definition, for any $k\in [K]$,
\begin{equation*}
    p_k = P_{X\sim F_k}(X = +1) = 1 - P_{X\sim F_k}(X = -1).
\end{equation*}

\noindent Let $\Delta := \frac{1-p_1}{p_1}$. 
Denoting by $\Rew(1)$ the reward of the (optimal) policy $\pi_t = 1$ for any $t\geq 1$, it holds that
$$\Rew(1) - \Rew(\pi^T) = \E\left[\sum_{t=1}^T X_t^1 \1_{\tau(B, 1)\geq t-1}\right] - \E\left[\sum_{t=1}^T X_t^{\pi_t} \1_{\tau(B, \pi)\geq t-1}\right].$$

\noindent Let us then compute the cumulative expected reward of the policy pulling only arm 1:
\begin{align*}
    \E\left[\sum_{t=1}^T X_t^1\1_{\tau(B, 1)\geq t-1}\right] &= \mu_1 \sum_{t=1}^T P\left(\tau(B, 1)\geq t-1\right) \\
    &= \mu_1 P\left(\tau(B, 1) \geq T\right) T + o(T) \\
    &= \mu_1 P\left(\tau(B, 1) = \infty\right) T + o(T).
\end{align*}

\noindent Then, using Lemma~\ref{constant_survival}, we deduce
\begin{align}
    \E\left[\sum_{t=1}^T X_t^1\1_{\tau(B, 1)\geq t-1}\right] &= \mu_1 \left(1-\Delta^{\lceil B\rceil}\right) T + o(T) \nonumber \\
    &= (2p_1 -1)\left(1-\Delta^{\lceil B\rceil}\right)T + o(T). \label{one_arm_sublinear}
\end{align}

\noindent The cumulative reward of $\pi^T$ is upper-bounded by
\begin{align*}
    &\E\left[\sum_{t=1}^T X_t^{\pi^T_t}\1_{\tau(B, \pi^T)\geq t-1}\right] \\
    &= \E\left[X_1^{\pi_1^T} + \sum_{t=2}^T X_t^{\pi^T_t}\1_{\tau(B, \pi^T)\geq t-1}\right] \\
    &\leq \E\left[X_1^{\pi_1^T} + \sum_{t=2}^T X_t^{1}\1_{\tau(B, \pi^T)\geq t-1}\right] \\
    &= \E\left[X_1^{\pi_1^T}\right] + \E\left[\sum_{t=2}^T X_t^{1}\1_{\forall s\leq t-1, B + \sum_{r=1}^s X_r^{\pi^T_r}>0}\right] \\
    &= \E\left[X_1^{\pi_1^T}\right] + P\left(X_1^{\pi_1^T} = 1\right) \E\left[\sum_{t=2}^T X_t^{1}\1_{\forall s\leq t-1, B + \sum_{r=1}^s X_r^{\pi^T_r}>0} \Big| X_1^{\pi_1^T} = 1\right] \\
    &\quad \quad \quad \quad \quad \ + \left(1 - P\left(X_1^{\pi_1^T} = 1\right)\right) \E\left[\sum_{t=2}^T X_t^{1}\1_{\forall s\leq t-1, B + \sum_{r=1}^s X_r^{\pi^T_r}>0} \Big| X_1^{\pi_1^T} = -1\right] 
    \allowdisplaybreaks[4]\\
    &= \E\left[X_1^{\pi_1^T}\right] + P\left(X_1^{\pi_1^T} = 1\right) \E\left[\sum_{t=2}^T X_t^{1}\1_{\forall s\leq t-1, B + 1 + \sum_{r=2}^s X_r^{1}>0} \right] \\
    &\quad \quad \quad \quad \quad \ + \left(1 - P\left(X_1^{\pi_1^T} = 1\right)\right) \E\left[\sum_{t=2}^T X_t^{1}\1_{\forall s\leq t-1, B - 1 + \sum_{r=2}^s X_r^{1}>0} \right].
\end{align*}

\noindent But then, using \eqref{one_arm_sublinear}, we know that
$$\E\left[\sum_{t=2}^T X_t^{1}\1_{\forall s\leq t-1, B + 1 + \sum_{r=2}^s X_r^{1}>0} \right] = (2p_1-1)\left(1 - \Delta^{\lceil B\rceil+1}\right)T + o(T)$$
and
$$\E\left[\sum_{t=2}^T X_t^{1}\1_{\forall s\leq t-1, B - 1 + \sum_{r=2}^s X_r^{1}>0} \right] = (2p_1-1)\left(1 - \Delta^{\lceil B\rceil-1}\right)T + o(T).$$

\noindent Therefore, we can deduce that the agent's total cumulative reward is upper-bounded by
\begin{multline*}
    \E\left[\sum_{t=1}^T X_t^{\pi_t}\1_{\forall s\leq t-1, B + \sum_{r=1}^s X_r^{\pi^T_r}>0}\right] \leq (2p_1 - 1) T \times  \\
    \left(P\left(X_1^{\pi_1^T} = 1\right)\left(1-\Delta^{\lceil B\rceil +1}\right) + \left(1 - P\left(X_1^{\pi_1^T} = 1\right)\right)\left(1-\Delta^{\lceil B\rceil -1}\right)\right) + o(T).
\end{multline*}

\noindent We then decompose 
\begin{align*}
    P\left(X_1^{\pi_1^T} = 1\right) &= \sum_{k=1}^K P\left(\pi_1^T = k\right) P\left(X_1^{\pi_1^T} = 1 \right) \\
    &= \sum_{k=1}^K p_k P\left(\pi_1^T = k\right), 
\end{align*}
and by~\eqref{first_sample_ill_chosen}, if $T\in S$, then
\begin{equation*}
    P\left(X_1^{\pi_1^{T}} = 1\right) \leq \frac{1}{K}p_1 + \left(1 - \frac{1}{K}\right) \max_{k\ne 1} p_k < p_1.
\end{equation*}

\noindent Now, please note that the function
\begin{equation*}
    p \mapsto p\left(1-\Delta^{\lceil B\rceil +1}\right) + (1-p)\left(1-\Delta^{\lceil B\rceil -1}\right)
\end{equation*}
is strictly increasing on $[0, p_1]$ and equal to $1 - \Delta^{\lceil B\rceil}$ at $p = p_1$. As a result, there exists $\delta>0$ such that, for any $T\in S$,
\begin{align*}
    \E\left[\sum_{t=1}^{T} X_t^{\pi^{T}_t}\1_{\forall s\leq t-1, B + \sum_{r=1}^s X_r^{\pi^{T}_r}>0}\right] &\leq (2p_1 - 1)T \left(1 - \Delta^{\lceil B\rceil} - \delta\right) + o(T) \\
    &= \E\left[\sum_{t=1}^{T} X_t^1\1_{\tau(B, 1)\geq t-1}\right] - (2p_1 - 1) \delta T + o(T).
\end{align*}

\noindent By definition of the regret, this implies
\begin{equation*}
    \sup_{\Tilde{\pi}}\Reg(\pi\Vert\Tilde{\pi}) \geq (2p_1 - 1)\delta, 
\end{equation*}
concluding the proof of the proposition.
\end{proof}

\begin{remark}
In the proof of this proposition we considered an instance such that $p_1<1$ so that $\Delta>0$. In the case $p_1 = 1$ or more generally when there exists a positive arm, it is possible to achieve zero regret if the initial budget is $B>K-1$ with, for instance, the policy $\pi$ which performs the classic bandit algorithm UCB (see, e.g., \citealp{bubeck9}) when $B_{t-1}>K-1$, and pulls one of the arms which has not yielded a negative reward otherwise.
\end{remark}

\section{Properties of KL Divergence}\label{append_kl}
In this appendix we prepare lemmas on the KL divergence used for the analysis of the ruin probability.

\begin{lemma}\label{lem_kl_lambda}
Let $P\in\mathcal{F}_{[-1,1]}$ be a distribution over $[-1,1]$ with a positive expectation and let $\Lambda(\lambda) := \log \E\left[e^{\lambda X}\right]$ be its logarithmic moment-generating function. Then, there exists $\lambda'<0$ such that $\Lambda(\lambda')=0$ and it satisfies
\begin{align*}
    \inf_{Q: E_{X\sim Q}[X]<0} \frac{\KL(Q\Vert P)}{\E_{X\sim Q}[-X]}= -\lambda'.
\end{align*}
\end{lemma}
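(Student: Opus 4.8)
The plan is to recognize the right-hand infimum (which is exactly $\gamma(P)$ in the notation of \eqref{gamma_definition}) as an instance of convex duality, and to identify its optimizer as the exponentially tilted distribution at $\lambda'$. First I would record the basic properties of $\Lambda$. Since $X$ is bounded in $[-1,1]$, the map $\lambda\mapsto \Lambda(\lambda)=\log\E[e^{\lambda X}]$ is finite and real-analytic on all of $\mathbb{R}$, with $\Lambda(0)=0$ and $\Lambda'(0)=\E_{X\sim P}[X]>0$. Because $P\in\mathcal{F}_{[-1,1]}$ is neither positive nor zero and has positive mean, it charges both $\{X<0\}$ and $\{X>0\}$, so $X$ is non-degenerate and $\Lambda$ is strictly convex. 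From $\Lambda'(0)>0$ we get $\Lambda(\lambda)<0$ for small $\lambda<0$, while $P(X<0)>0$ forces $\Lambda(\lambda)\to+\infty$ as $\lambda\to-\infty$; the intermediate value theorem then yields a root $\lambda'<0$, and strict convexity makes it unique (a strictly convex function has at most two zeros, and $0$ is already one). This establishes the first claim.

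For the lower bound $\gamma(P)\geq -\lambda'$, I would invoke the Gibbs variational inequality $\KL(Q\Vert P)\geq \E_{X\sim Q}[\lambda X]-\Lambda(\lambda)$, valid for every $\lambda$ and every $Q$. Taking $\lambda=\lambda'$ and using $\Lambda(\lambda')=0$ gives $\KL(Q\Vert P)\geq \lambda'\,\E_{X\sim Q}[X]=-\lambda'\,\E_{X\sim Q}[-X]$ for any $Q$ with $\E_{X\sim Q}[X]<0$ (since $\lambda'<0$ and $\E_{X\sim Q}[X]<0$, both sides are positive), hence $\KL(Q\Vert P)/\E_{X\sim Q}[-X]\geq -\lambda'$. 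Taking the infimum over such $Q$ proves this direction.

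For the matching upper bound, I would exhibit the tilted distribution $Q^\ast$ with $\frac{dQ^\ast}{dP}(x)=e^{\lambda' x}$, which is a genuine probability measure precisely because $\Lambda(\lambda')=0$. A direct computation gives $\KL(Q^\ast\Vert P)=\E_{X\sim Q^\ast}[\lambda' X]=\lambda'\,\E_{X\sim Q^\ast}[X]$ together with $\E_{X\sim Q^\ast}[X]=\Lambda'(\lambda')$. The key point is to check that $Q^\ast$ is an admissible competitor, i.e. that $\E_{X\sim Q^\ast}[X]=\Lambda'(\lambda')<0$: since $\Lambda(0)=\Lambda(\lambda')=0$ with $\lambda'<0$ and $\Lambda$ strictly convex, its minimizer $\lambda^\ast$ lies strictly inside $(\lambda',0)$ with $\Lambda'(\lambda^\ast)=0$, and strict monotonicity of $\Lambda'$ forces $\Lambda'(\lambda')<0$. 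Substituting, $\KL(Q^\ast\Vert P)/\E_{X\sim Q^\ast}[-X]=\lambda'\,\E_{X\sim Q^\ast}[X]/(-\E_{X\sim Q^\ast}[X])=-\lambda'$, so $\gamma(P)\leq -\lambda'$. Combining the two bounds finishes the proof.

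The main obstacle is not the inequalities themselves, which are both one-line consequences of convex duality, but rather verifying the strict admissibility $\E_{X\sim Q^\ast}[X]<0$ of the optimizer: the infimum defining $\gamma(P)$ ranges over the \emph{open} condition $\E_{X\sim Q}[X]<0$, so a candidate sitting on the boundary $\E_{X\sim Q}[X]=0$ would be disqualified, and it is exactly here that the non-degeneracy of $P$ and the strict separation $\lambda'<\lambda^\ast$ are needed.
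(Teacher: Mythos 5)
Your proof is correct, and it is the same duality argument as the paper's in slightly different clothing. The paper works at the level of the Fenchel--Legendre transform: it sets $x'=\Lambda'(\lambda')$, computes $\Lambda^*(x')=\lambda'x'$, bounds $\inf_{x<0}\Lambda^*(x)/(-x)$ from both sides by $-\lambda'$, and then invokes the Cram\'er--Sanov relation $\Lambda^*(x)=\inf_{Q:\E_{X\sim Q}[X]\leq x}\KL(Q\Vert P)$ as a known fact to translate this into the statement about distributions. You instead argue directly with distributions: your lower bound via the Gibbs/Donsker--Varadhan inequality $\KL(Q\Vert P)\geq\lambda'\E_{X\sim Q}[X]-\Lambda(\lambda')$ is exactly the paper's bound $\Lambda^*(x)\geq\lambda'x-\Lambda(\lambda')$ unwound, and your tilted measure $Q^*$ with $\mathrm{d}Q^*/\mathrm{d}P=e^{\lambda'x}$ is precisely the optimizer realizing $\Lambda^*(x')$ at $x'=\Lambda'(\lambda')=\E_{X\sim Q^*}[X]$. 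What your version buys is self-containedness (no appeal to Sanov's theorem as a black box) and, more substantively, an explicit verification that the candidate optimizer is admissible: the paper divides by $-x'$ and takes $x'$ as a competitor in $\inf_{x<0}$ without checking $x'<0$, whereas you supply the argument ($\Lambda(0)=\Lambda(\lambda')=0$ with $\lambda'<0$ and strict convexity force $\Lambda'(\lambda')<0$). That is a genuine, if small, gap in the paper's write-up that your proof closes.
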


\begin{proof}
First we have
\begin{equation*}
    \lim_{\lambda\to-\infty}\Lambda(\lambda)=\infty\nonumber
\end{equation*}
since $P[X<0]>0$ because the distribution $P$ is not positive or zero by definition of $\mathcal{F}_{[-1, 1]}^K$.
Therefore, by the continuity of $\Lambda(\lambda)$
there exists $\lambda'<0$ such that
$\Lambda(\lambda')=0$
since $\Lambda(0)=0$ and $\Lambda'(0)=\E_{X\sim P}[X]>0$.

Now, let
$$\Lambda^*(x) := \sup_{\lambda}\left\{\lambda x - \Lambda(\lambda)\right\}$$
be the Fenchel-Legendre transform of $\Lambda(\lambda)$
and define $x'=\Lambda'(\lambda')$.
Then we have $\Lambda^*(x')=\lambda'x'-\Lambda(\lambda')=\lambda'x'$.
Therefore,
\begin{align}
\inf_{x<0} \frac{\Lambda^*(x)}{-x}
&\le
\frac{\Lambda^*(x')}{-x'}
\nonumber\\
&=
\frac{\lambda'x'}{-x'}\nonumber\\
&= -\lambda'.\nonumber
\end{align}
On the other hand,
\begin{align}
\inf_{x<0} \frac{\Lambda^*(x)}{-x}
&=
\inf_{x<0} \sup_{\lambda}\frac{\lambda x -\Lambda(\lambda)}{-x}\nonumber\\
&\ge
\inf_{x<0} \frac{\lambda' x -\Lambda(\lambda')}{-x}\nonumber\\
&=
\inf_{x<0} \frac{\lambda' x}{-x}\nonumber\\
&=
-\lambda'.\nonumber
\end{align}
Therefore we see that
\begin{align}
\inf_{x<0} \frac{\Lambda^*(x)}{-x}
&=
-\lambda'.\nonumber
\end{align}
It is well-known as the relation between Cramer's theorem and Sanov's theorem (see, e.g., \citealp{dembo_zeitouni}) that for $x< \E_{X\sim P}[X]$,
\begin{equation*}
    \Lambda^*(x) = \inf_{Q: \E_{X\sim Q}[X]\leq x} \KL(Q\Vert P),
\end{equation*}
which concludes the proof of the lemma.
\end{proof}

\begin{lemma}\label{lem_P_exist}
Let $Q$ be an arbitrary distribution such that $\E_{X\sim Q}[X]<0$
and fix $\epsilon>0$.
Then, there exists $P$ such that $\E_{X\sim P}[X]>0$ and
\begin{align}
\frac{\KL(Q \Vert P)}{\E_{X\sim Q}[-X]}
\le
\inf_{Q': \E_{X\sim Q'}[X]<0}\frac{\KL(Q' \Vert P)}{\E_{X\sim Q'}[-X]}(1+\epsilon).\label{kl_lambda_goal}
\end{align}
\end{lemma}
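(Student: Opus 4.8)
The plan is to exploit Lemma~\ref{lem_kl_lambda}, which identifies the infimum on the right-hand side: if $P$ has positive mean and $\lambda'<0$ is the nonzero root of its log-moment-generating function $\Lambda_P(\lambda)=\log\E_{X\sim P}[e^{\lambda X}]$, then $\inf_{Q':\E_{X\sim Q'}[X]<0}\frac{\KL(Q'\Vert P)}{\E_{X\sim Q'}[-X]}=-\lambda'$. Hence it suffices to construct a $P$ with positive mean for which $Q$ is (almost) the exponentially tilted distribution realizing this infimum. The natural candidate is to take $P$ to be an exponential tilt of $Q$ in the opposite direction, i.e.\ $\frac{dP}{dQ}(x)=e^{\mu x}$ for a suitable $\mu>0$, so that $Q$ becomes the tilt of $P$ by $-\mu$.

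First I would treat the case $P_{X\sim Q}(X>0)>0$. Here $\Lambda_Q(0)=0$, $\Lambda_Q'(0)=\E_{X\sim Q}[X]<0$, and $\Lambda_Q(\mu)\to+\infty$, so by the intermediate value theorem there is $\mu^*>0$ with $\E_{X\sim Q}[e^{\mu^* X}]=1$, i.e.\ $\Lambda_Q(\mu^*)=0$. Defining $P$ by $\frac{dP}{dQ}=e^{\mu^* X}$ gives a valid distribution on $[-1,1]$ satisfying $\Lambda_P(-\mu^*)=\Lambda_Q(0)=0$, and by convexity of $\Lambda_Q$ its mean $\E_{X\sim P}[X]=\Lambda_Q'(\mu^*)>0$. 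Lemma~\ref{lem_kl_lambda} then yields that the infimum equals $\mu^*$, while $\log\frac{dQ}{dP}=-\mu^* X$ gives $\KL(Q\Vert P)=\mu^*\E_{X\sim Q}[-X]$, so the left-hand side divided by $\E_{X\sim Q}[-X]$ is exactly $\mu^*$; the desired inequality holds with equality, hence a fortiori with the factor $(1+\epsilon)$.

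The delicate case, and the main obstacle, is when $Q$ charges no positive value, for then $\E_{X\sim Q}[e^{\mu X}]<1$ for every $\mu>0$ and no exact tilt of $Q$ exists; this is precisely where the slack $(1+\epsilon)$ is needed. I would perturb $Q$ to $Q_\delta:=(1-\delta)Q+\delta\,\delta_{1}$ for small $\delta>0$, where $\delta_1$ is the Dirac mass at $1$ (a point outside the support of $Q$). For $\delta$ small, $\E_{X\sim Q_\delta}[X]<0$ and $Q_\delta$ charges a positive value, so the construction above applies: there is $\mu^*_\delta>0$ with $\E_{X\sim Q_\delta}[e^{\mu^*_\delta X}]=1$, and $P$ defined by $\frac{dP}{dQ_\delta}=e^{\mu^*_\delta X}$ has positive mean with infimum equal to $\mu^*_\delta$. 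Since on the support of $Q$ one has $\frac{dQ}{dP}=\frac{1}{1-\delta}e^{-\mu^*_\delta X}$, this gives $\KL(Q\Vert P)=-\log(1-\delta)+\mu^*_\delta\E_{X\sim Q}[-X]$, and therefore $\frac{\KL(Q\Vert P)}{\E_{X\sim Q}[-X]}=\mu^*_\delta+\frac{-\log(1-\delta)}{\E_{X\sim Q}[-X]}$.

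It then remains to show $\mu^*_\delta\to+\infty$ as $\delta\to0$, which is the quantitative heart of the argument. From the identity $\delta e^{\mu^*_\delta}=1-(1-\delta)\E_{X\sim Q}[e^{\mu^*_\delta X}]$ together with $\E_{X\sim Q}[e^{\mu X}]\le1$, a finite accumulation point $\mu_0$ of $\mu^*_\delta$ would force $\E_{X\sim Q}[e^{\mu_0 X}]=1$, hence $\mu_0=0$; but the quadratic upper bound $\Lambda_{Q_\delta}(\mu)\le\E_{X\sim Q_\delta}[X]\,\mu+\tfrac12\mu^2$ (using $\Lambda_{Q_\delta}''\le1$ since the tilted law is supported in $[-1,1]$) shows the positive root stays bounded below by $2\E_{X\sim Q_\delta}[-X]\to2\E_{X\sim Q}[-X]>0$, a contradiction. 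Thus $\mu^*_\delta\to+\infty$. Since $-\log(1-\delta)\to0$, the extra term $\frac{-\log(1-\delta)}{\E_{X\sim Q}[-X]}$ is eventually at most $\epsilon\,\mu^*_\delta$, so choosing $\delta$ small enough yields $\frac{\KL(Q\Vert P)}{\E_{X\sim Q}[-X]}\le(1+\epsilon)\mu^*_\delta=(1+\epsilon)\inf_{Q':\E_{X\sim Q'}[X]<0}\frac{\KL(Q'\Vert P)}{\E_{X\sim Q'}[-X]}$, which is \eqref{kl_lambda_goal}. The main effort is exactly this control of $\mu^*_\delta$ in the degenerate regime where $Q$ has no positive mass.
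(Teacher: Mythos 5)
Your proof is correct and follows essentially the same route as the paper's: mix $Q$ with a point mass at $1$, exponentially tilt the mixture by the positive root of its log-moment-generating function to obtain $P$, identify the infimum via Lemma~\ref{lem_kl_lambda}, and absorb the $-\log(1-\delta)$ cost into the $(1+\epsilon)$ slack. The only differences are cosmetic: you treat the case $Q(X>0)>0$ exactly (where the paper always perturbs), and you lower-bound the tilt parameter via $\Lambda''\le 1$ and a compactness argument where the paper uses Pinsker's inequality to get an explicit admissible choice of the mixture weight.
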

\begin{proof}
Let $p\in \left(0, \min\left\{\frac{\E_{X\sim Q}[-X]}{1 + \E_{X\sim Q}[-X]}, 1 - \exp\left(-\frac{(\E_{X\sim Q}[X])^2}{2(\frac{1}{\epsilon}+1)}\right)
\right\}\right)$, and
let $Q_{p}=(1-p)Q+p\delta_{\{1\}}$ be the mixture of $Q$ and the point mass at $X=1$.
Let $\Lambda_p(\lambda)=\log \E_{X\sim Q_p} [e^{\lambda X}]$ be the logarithmic moment-generating function of $Q_p$
and $\lambda^*>0$ be such that $\Lambda_p(\lambda^*)=0$.
Such $\lambda^*$ exists and satisfies $\Lambda_p'(\lambda^*)>0$
since $\Lambda_p(0)=0$ and
$\Lambda_p'(0)=(1-p)\E_{X\sim Q}[X]+p<0$ by $p\leq \frac{\E_{X\sim Q}[-X]}{1 + \E_{X\sim Q}[-X]}$.

Let $P_{p}$ be the distribution such that $\mathrm{d}P_{p}/\mathrm{d}Q_{p}(x)=e^{\lambda^* x-\Lambda_p(\lambda^*)}=e^{\lambda^* x}$.
Then we have $\E_{X\sim P_p}[X]=\Lambda_p'(\lambda^*)>0$.
Here note that 
$\E_{X\sim P_{p}}[e^{-\lambda^* X}]=\E_{X\sim Q_{p}}[e^{-\lambda^* X} e^{\lambda^*X-\Lambda_p(\lambda^*)}]=1$.
Therefore, by Lemma~\ref{lem_kl_lambda} we have
\begin{equation}
    \inf_{Q': \E_{X\sim Q'}[X]<0}\frac{\KL(Q' \Vert P_p)}{\E_{X\sim Q'}[-X]} = \lambda^*.
    \label{kl_lambda_equiv}
\end{equation}

\noindent On the other hand,
\begin{align}
\KL(Q \Vert P_p)
&=
\E_{X\sim Q}\left[
\1_{X<1}\log\frac{\mathrm{d}Q}{\mathrm{d}P_p}(X)
+
\1_{X=1}\log\frac{\mathrm{d}Q}{\mathrm{d}P_p}(X)
\right]
\nonumber\\
&=
\E_{X\sim Q}\left[
\1_{X<1}\log\frac{1}{1-p}\frac{\mathrm{d}Q_p}{\mathrm{d}P_p}(X)
\right]
+
Q(X=1)\log\frac{Q(X=1)}{P_p(X=1)}
\nonumber\\
&\le
\log\frac{1}{1-p}
+
\E_{X\sim Q}\left[
\1_{X<1}\log\frac{\mathrm{d}Q_p}{\mathrm{d}P_p}(X)
\right]
+
Q(X=1)\log\frac{Q(X=1)}{Q_p(X=1)e^{\lambda^*\cdot 1}}
\nonumber\\
&=
\log\frac{1}{1-p}
+
\E_{X\sim Q}\left[
\1_{X<1}(-\lambda^* X)
\right]
-Q(X=1)\lambda^*\cdot 1
+
Q(X=1)\log\frac{Q(X=1)}{Q_p(X=1)}
\nonumber\\
&=
\log\frac{1}{1-p}
+
\lambda^*\E_{X\sim Q}\left[
-X
\right]
+
Q(X=1)\log\frac{Q(X=1)}{Q_p(X=1)}
\nonumber\\
&\le
\log\frac{1}{1-p}
+
\lambda^*\E_{X\sim Q}\left[
-X
\right],\nonumber
\end{align}
which, combined with \eqref{kl_lambda_equiv}, implies that
\begin{align}
\frac{\KL(Q \Vert P_p)}{\E_{X\sim Q}[-X]}
&\le
\inf_{Q': \E_{X\sim Q'}[X]<0}\frac{\KL(Q' \Vert P)}{\E_{X\sim Q'}[-X]}
+
\frac{\log\frac{1}{1-p}}{\E_{X\sim Q}[-X]}.
\label{kl_lambda_upper}
\end{align}
Comparing \eqref{kl_lambda_equiv} and \eqref{kl_lambda_upper} with \eqref{kl_lambda_goal},
we see that
it is sufficient to show that
\begin{align}
\frac{\log\frac{1}{1-p}}{\E_{X\sim Q}[-X]}
\le
\epsilon
\lambda^*\label{kl_tobeproved2}
\end{align}
to show \eqref{kl_lambda_goal}.
Note that we obtain from Pinsker's inequality that
\begin{align}
\lambda^*
&\ge\frac{\KL(Q\Vert P_p)-\log\frac{1}{1-p}}{\E_{X\sim Q}[-X]}\nonumber\\
&\ge\frac{
2\left(\frac{\E_{X\sim Q}[X]}{2}-\frac{\E_{X\sim P_p}[X]}{2}\right)^2
-\log\frac{1}{1-p}}{\E_{X\sim Q}[-X]}\nonumber\\
&\ge
\frac{
\frac{(\E_{X\sim Q}[X])^2}{2}
-\log\frac{1}{1-p}}{\E_{X\sim Q}[-X]},\nonumber
\end{align}
recalling that $P_p$ and $Q$ are supported over $[-1,1]$, and have positive and negative expectations, respectively.
Therefore we obtain \eqref{kl_tobeproved2} since
\begin{align}
\frac{1}{\lambda^*}
\frac{\log\frac{1}{1-p}}{\E_{X\sim Q}[-X]}
&\le
\frac{\log\frac{1}{1-p}}{
\frac{(\E_{X\sim Q}[X])^2}{2}
-\log\frac{1}{1-p}}
\le
\epsilon,\nonumber
\end{align}
where the last inequality follows from
since $p\leq 1 - \exp\left(-\frac{(\E_{X\sim Q}[X])^2}{2(\frac{1}{\epsilon}+1)}\right)$.
\end{proof}

\section{Detailed Proof of the Lower Bound on the Probability of Ruin (Theorem~\ref{non_asymptotic_lower_bound_theorem} and Proposition~\ref{non_asymptotic_lower_bound_corollary})}\label{lower_bound_proba_long_proof}

In this section, we give a detailed proof of Theorem~\ref{non_asymptotic_lower_bound_theorem} and Proposition~\ref{non_asymptotic_lower_bound_corollary}. The proof of the lower bound both in the case of multinomial arms of support $\{-1, 0, 1\}$ (Theorem~\ref{non_asymptotic_lower_bound_theorem}) and in the general case of rewards bounded in $[-1, 1]$ (Proposition~\ref{non_asymptotic_lower_bound_corollary}) stems from the asymptotic lower bound, which is common to both aforementioned cases and is given in Theorem~\ref{asymptotic_lower_bound_theorem}. The passage from the asymptotic to the non-asymptotic bound relies on sub-additivity properties, which is given in Lemma~\ref{sub_additivity_lemma} and for which formulas differ depending on the case considered.

If for all the arms $k\in [K]$,
$$\inf_{Q_k: \E_{X\sim Q_k}[X]<0}\frac{\KL(Q_k \Vert F_{k})}{\E_{X\sim Q_k}[-X]} = 0,$$
then the result becomes trivial. This is why we are going to make the following assumption in the proof:
\begin{assumption}\label{existence_good_arm_assumption}
There exists an arm $k\in [K]$ such that $P(\tau(B, k)=\infty)>0$.
\end{assumption}

\subsection{Details of the Proof of Lemma~\ref{fundamental_lemma_proba_ruin}}\label{justification_lim_q}

In this subsection, we provide the justification for
\begin{equation*}
    \lim_{B\rightarrow +\infty}Q(\mathcal{H}_{\tau}\in T(Q))= 1,
\end{equation*}
which was omitted in the main text. 

For any $t\geq 1$ and any $n = (n_1, \dots, n_K)$ such that $\sum_{k=1}^K n_k = t$, we introduce the following random events:
\begin{equation*}
    \begin{array}{lll}
        U(n, t) &:= \left\{\left|\sum_{k=1}^K\left(n_k \KL(Q_k\Vert F_k)-\sum_{m=1}^{n_k} \log \frac{d Q_k}{d F_k}(y_k^m)\right) \right| \leq \frac{t}{B^{\frac{1}{4}}}\right\}, \\
        V(n, t) &:= \left\{\left|\sum_{k=1}^K\left(n_k \E_{X\sim Q_k}[X]-\sum_{m=1}^{n_k} y_k^m\right)\right|\leq \frac{t\Delta_Q}{B^{\frac{1}{4}}}\right\}, \\
        W(n, t) &:= \left\{\sum_{k=1}^K \sum_{m=1}^{n_k} y_k^m \leq -\frac{t\Delta_Q}{2}\right\}.
    \end{array}
\end{equation*}

\noindent Let $h_t$ be a realization of $\mathcal{H}_{\tau}$. Then, please note that the probability of the event $\left\{h_t\in T(Q)\right\}$ is uniformly bounded independently of the policy $\pi$ by the probability of the following event:
\begin{equation*}
    \forall n = (n_1, \dots, n_K) \ \text{ s.t. } \ \sum_{k=1}^K n_k = t: \ U(n, t), \ V(n, t), \ W(n, t).
\end{equation*}

\noindent For any $k\in [K]$, let 
\begin{equation*}
    d_k := \max_{y_1\in [-1, 1]} \log \frac{dQ_k}{dF_k}(y_1) - \min_{y_2\in [-1, 1]} \log \frac{dQ_k}{dF_k}(y_2) \quad \text{ and } \quad D := \max_{k\in [K]} d_k.
\end{equation*}

\noindent Then, a direct application of Hoeffding's inequality gives the bounds
\begin{align*}
    &Q(U(n, t)^c) \leq 2\exp\left(-\frac{2t}{D\sqrt{B}}\right), \\
    &Q(V(n, t)^c) \leq 2\exp\left(-\frac{t\Delta_Q^2}{2\sqrt{B}}\right), \\
    &Q(W(n, t)^c) \leq \exp\left(-\frac{t\Delta_Q^2}{2}\right).
\end{align*}

\noindent Let $C := \max\left\{\frac{D}{2}, \frac{2}{\Delta_Q^2}\right\}$, this implies
\begin{equation*}
    \max\left\{Q(U(n, t)^c), Q(V(n, t)^c), 2Q(W(n, t)^c)\right\} \leq 2\exp\left(-\frac{t}{C\sqrt{B}}\right).
\end{equation*}

\noindent Using this result, as well as a union bound, we can then bound the probability
\begin{align}
    Q(h_t\notin T(Q)) &\leq Q\left(\exists n = (n_1, \dots, n_K): U(n, t)^c \text{ or } V(n, t)^c \text{ or } W(n, t)^c \right) \nonumber \\
    &\leq \sum_{\substack{n = (n_1, \dots, n_K) \\ n_1 + \dots + n_K = t}} Q\left(U(n, t)^c \text{ or } V(n, t)^c \text{ or } W(n, t)^c \right) \nonumber \\
    &\leq \sum_{\substack{n = (n_1, \dots, n_K) \\ n_1 + \dots + n_K = t}} \left\{Q\left(U(n, t)^c\right) + G\left(V(n, t)^c\right) + Q\left(W(n, t)^c \right)\right\} \nonumber \\
    &\leq 5(t+1)^K \exp\left(-\frac{t}{C\sqrt{B}}\right). \label{computation_fixed_t}
\end{align}

\noindent We can now bound the desired probability by first using the decomposition
\begin{equation*}
    Q(\mathcal{H}_{\tau}\notin T(Q)) = Q\left(\tau > \frac{3B}{\Delta_Q}, \mathcal{H}_{\tau}\notin T(Q)\right) + Q\left(\tau \leq \frac{3B}{\Delta_Q}, \mathcal{H}_{\tau}\notin T(Q)\right).
\end{equation*}

\noindent Then, the first term can be easily bounded using Hoeffding's inequality again:
\begin{align*}
    Q\left(\tau > \frac{3B}{\Delta_Q}, \mathcal{H}_{\tau}\notin T(Q)\right) &\leq Q\left(\tau > \frac{3B}{\Delta_Q}\right) \\
    &= Q\left(\forall t\in \left\{1, \dots, \frac{3B}{\Delta_Q}\right\}, \exists n = (n_1, \dots, n_K): \ W(n, t)^c\right) \\
    &\leq Q\left(\exists n = (n_1, \dots, n_K): \ W(n, B)^c\right) \\
    &\leq \sum_{\substack{(n_1, \dots, n_K) \\ n_1 + \dots + n_K = B}} Q\left( W(n, B)^c\right) \\
    &\leq (B+1)^K \exp\left(-\frac{t}{C\sqrt{B}}\right).
\end{align*}

\noindent The second term in the decomposition is decomposed using a union bound and \eqref{computation_fixed_t}:
\begin{align*}
    Q\left(\tau \leq \frac{3B}{\Delta_Q}, \mathcal{H}_{\tau}\notin T(Q)\right) &\leq Q\left(\exists t\in \left\{B, \dots, \frac{3B}{\Delta_Q}\right\}: \ h_t\notin T(Q)\right) \\
    &\leq \sum_{t=B}^{\frac{3B}{\Delta_Q}} Q(h_t\notin T(Q)) \\
    &\leq \sum_{t=B}^{\frac{3B}{\Delta_Q}} 5(t+1)^K \exp\left(-\frac{t}{C\sqrt{B}}\right).
\end{align*}

\noindent Then, for any $t\geq B\geq \left(2KC\right)^2$, it holds that $5(t+1)^K \exp\left(-\frac{t}{C\sqrt{B}}\right) \leq 5\exp\left(-\frac{t}{2C\sqrt{B}}\right)$ and therefore
\begin{align*}
    Q\left(\tau \leq \frac{3B}{\Delta_Q}, \mathcal{H}_{\tau}\notin T(Q)\right) &\leq 5\sum_{t=B}^{\frac{3B}{\Delta_Q}} \exp\left(-\frac{t}{2C\sqrt{B}}\right) \\
    &= 5\times \frac{e^{-\frac{\sqrt{B}}{2C}} - e^{-\frac{\frac{3B}{\Delta_Q} + 1}{2C\sqrt{B}}}}{1 - e^{-\frac{1}{2C\sqrt{B}}}}.
\end{align*}

\noindent We then deduce that, for $B\geq (2KC)^2$,
\begin{align*}
    Q(\mathcal{H}_{\tau}\notin T(Q)) \leq (B+1)^K e^{-\frac{\sqrt{B}}{C}} + 5\times \frac{e^{-\frac{\sqrt{B}}{2C}} - e^{-\frac{\frac{3B}{\Delta_Q} + 1}{2C\sqrt{B}}}}{1 - e^{-\frac{1}{2C\sqrt{B}}}},
\end{align*}
and therefore, that
\begin{equation*}
    \lim_{B\to+\infty} Q(\mathcal{H}_{\tau}\in T(Q)) = 1.
\end{equation*}
$\blacksquare$ \hfill

\subsection{Asymptotic Lower Bound}\label{lower_bound_proba_asymptotic_theorem_section}

The main result of this subsection is the asymptotic lower bound on the probability of ruin. This result will serve as a basis in the proof of the non-asymptotic lower bound of both Theorem~\ref{non_asymptotic_lower_bound_theorem} and Proposition~\ref{non_asymptotic_lower_bound_corollary}, and for that reason, it is conducted in the general case of arm distributions in $\mathcal{F}_{[-1, 1]}^K$.

\begin{theorem}\label{asymptotic_lower_bound_theorem}
Let $(\alpha_k)_{k\in [K]}$ such that for any $k\in [K], \ \alpha_k > 0$ and $\sum_{k=1}^K \alpha_k = 1$. There exists no policy $\pi$ such that, for any set of arms $(F_1, \dots, F_K)$, 
\begin{equation*}
    \liminf_{B\rightarrow +\infty} \frac{1}{B} \log P_{(F_{1}, \dots, F_{K})}\left(\tau(B, \pi)<\infty\right) \leq - \sum_{k=1}^K \alpha_k \inf_{Q_k: \E_{X\sim Q_k}[X]<0}\frac{\KL(Q_k \Vert F_{k})}{\E_{X\sim Q_k}[-X]},
\end{equation*}
with a strict inequality for some $(F_1, \dots, F_K)$.
\end{theorem}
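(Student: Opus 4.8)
The plan is to argue by contradiction, using Lemma~\ref{fundamental_lemma_proba_ruin} as a change-of-measure device. Suppose a policy $\pi$ satisfied the displayed inequality for every $(F_1,\dots,F_K)$ with a strict inequality at some instance $F^0=(F^0_1,\dots,F^0_K)$. Write $g_k(Q,F):=\KL(Q\Vert F)/\E_{X\sim Q}[-X]$, so that $\gamma(F_k)=\inf_{Q:\E_{X\sim Q}[X]<0}g_k(Q,F_k)=g_k(P^*_k,F_k)$ with $P^*_k$ the minimizer. First I would apply Lemma~\ref{fundamental_lemma_proba_ruin} with $Q=P^*(F^0)$; this produces a probability vector $\beta^0:=\beta(P^*(F^0))$, crucially \emph{independent of $F$}, such that $\liminf_B \tfrac1B\log P_F(\tau(B,\pi)<\tfrac{3B}{\Delta})\ge -\sum_k\beta^0_k\,g_k(P^*_k(F^0),F_k)$ for every $F$. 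Evaluating this at $F=F^0$, where $g_k(P^*_k(F^0),F^0_k)=\gamma(F^0_k)$, and combining it with the assumed strict upper bound at $F^0$, I obtain $\sum_k\beta^0_k\gamma(F^0_k)>\sum_k\alpha_k\gamma(F^0_k)$. Since $\beta^0$ and $\alpha$ are both probability vectors and $\sum_k(\beta^0_k-\alpha_k)=0$, this forces $\beta^0\neq\alpha$ and the existence of a coordinate $k_0$ with $\beta^0_{k_0}<\alpha_{k_0}$; informally, strictly beating the bound at $F^0$ means the policy's ruin-time pull profile $\beta^0$ overweights the safer (larger-$\gamma$) arms and underweights arm $k_0$.

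The contradiction is then obtained by feeding a carefully chosen instance $\hat F$ back into the \emph{same} uniform-in-$F$ lower bound, keeping $Q=P^*(F^0)$ and hence $\beta^0$ fixed. The idea is to make arm $k_0$ much safer while leaving the other coordinates at $F^0$. Using the convex duality of Lemma~\ref{lem_kl_lambda}, I would replace $F^0_{k_0}$ by an exponential tilt $F^{(\lambda)}_{k_0}$ of $P^*_{k_0}(F^0)$ and let $\lambda$ grow, which increases $\gamma(F^{(\lambda)}_{k_0})$, while Lemma~\ref{lem_P_exist} is the tool that certifies the change-of-measure cost $g_{k_0}(P^*_{k_0}(F^0),F^{(\lambda)}_{k_0})$ stays controlled relative to $\gamma(F^{(\lambda)}_{k_0})$. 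The goal is to choose $\lambda$ so that $\sum_k\beta^0_k g_k(P^*_k(F^0),\hat F_k)<\sum_k\alpha_k\gamma(\hat F_k)$; then Lemma~\ref{fundamental_lemma_proba_ruin} gives $\liminf_B\tfrac1B\log P_{\hat F}(\tau<\tfrac{3B}{\Delta})\ge -\sum_k\beta^0_k g_k(P^*_k(F^0),\hat F_k)>-\sum_k\alpha_k\gamma(\hat F_k)$, contradicting the assumed upper bound at $\hat F$.

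The hard part will be the quantitative comparison of the two rates along the tilt family: $\gamma(F^{(\lambda)}_{k_0})$ and $g_{k_0}(P^*_{k_0}(F^0),F^{(\lambda)}_{k_0})$ both grow linearly in $\lambda$ but with different slopes, so the decisive inequality reduces to a relation of the form $\beta^0_{k_0}<\alpha_{k_0}/c_{k_0}$ with $c_{k_0}\ge 1$ depending on $\E_{X\sim P^*_{k_0}(F^0)}[-X]$, which is genuinely stronger than $\beta^0_{k_0}<\alpha_{k_0}$. Overcoming this is the crux: one must either select the change of measure $Q$ in Lemma~\ref{fundamental_lemma_proba_ruin} so that the relevant slope ratio is favorable, or perturb several coordinates at once and exploit the separability $\min_\lambda\sum_k f_k(\lambda_k)=\sum_k\min_{\lambda_k}f_k(\lambda_k)$ of the resulting objective, again invoking Lemmas~\ref{lem_kl_lambda} and~\ref{lem_P_exist} to realize the needed instances. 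This is precisely why the fundamental lemma was stated so that $\beta(Q)$ is uniform in $F$: it is the mechanism that lets a single pull profile be tested against a whole family of instances. Once this asymptotic statement is in hand, the non-asymptotic Theorem~\ref{non_asymptotic_lower_bound_theorem} and Corollary~\ref{non_asymptotic_lower_bound_corollary} follow by the separate sub-additivity argument (Lemma~\ref{sub_additivity_lemma}).
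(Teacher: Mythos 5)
Your global strategy coincides with the paper's: argue by contradiction, invoke Lemma~\ref{fundamental_lemma_proba_ruin} once at the instance where the bound is strictly beaten to extract a pull profile $\beta$ (independent of $F$ once $Q$ is fixed) that underweights some arm, then test the \emph{same} $\beta$ against a second, adversarially built instance. The gap is exactly where you locate it, and neither of your proposed repairs closes it as stated. Perturbing only the coordinate $k_0$ by exponentially tilting $P^*_{k_0}(F^0)$ and letting $\lambda\to\infty$ fails because along that family the change-of-measure cost $g_{k_0}(P^*_{k_0}(F^0),F^{(\lambda)}_{k_0})$ and the target rate $\gamma(F^{(\lambda)}_{k_0})$ both grow linearly in $\lambda$ but their ratio has a limit $c_{k_0}$ that is in general strictly greater than $1$, so the decisive inequality $\beta^0_{k_0}<\alpha_{k_0}/c_{k_0}$ does not follow from $\beta^0_{k_0}<\alpha_{k_0}$; moreover the untouched coordinates leave a residual deficit $\sum_{k\neq k_0}(\beta^0_k-\alpha_k)\gamma(F^0_k)>0$ on the wrong side of the comparison. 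You also misread the role of Lemma~\ref{lem_P_exist}: it is not a statement about a tilt family with a diverging parameter; for a \emph{fixed} $Q$ with negative mean it constructs one specific positive-mean $P$ (a small mixture of $Q$ with a point mass at $1$, tilted at the unique zero of its log-MGF) for which $Q$ itself is a $(1+\epsilon)$-near-minimizer of $Q'\mapsto\KL(Q'\Vert P)/\E_{X\sim Q'}[-X]$, i.e.\ it forces the slope ratio down to $1+\epsilon$ rather than controlling it along a family you chose in advance.

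The paper's resolution has two ingredients you are missing. First, the comparison at the favourable instance must be made quantitative: combining the two bounds yields a nonempty set $\mathcal{K}$ of arms with $\beta_k(\bar Q)\le\alpha_k-\delta$ for an explicit $\delta>0$ proportional to the slack $\epsilon$ in the assumed strict inequality; the purely qualitative $\beta^0_{k_0}<\alpha_{k_0}$ is not enough to absorb the multiplicative $(1+\epsilon'')$ losses later. Second, in the adversarial instance the coordinates \emph{outside} $\mathcal{K}$ are not kept at $F^0_k$ but are replaced by the negative-mean distributions $\bar Q_k$ themselves, so that simultaneously $\KL(\bar Q_k\Vert\bar P^*_k)=0$ and $\gamma(\bar P^*_k)=0$, and those terms vanish identically from both sides; only the arms in $\mathcal{K}$ survive, where Lemma~\ref{lem_P_exist} supplies $\bar P^*_k$ with $\KL(\bar Q_k\Vert\bar P^*_k)/\E_{X\sim\bar Q_k}[-X]\le(1+\epsilon'')\,\gamma(\bar P^*_k)$ and the margin $\beta_k\le\alpha_k-\delta$ closes the argument. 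So your alternative of ``perturbing several coordinates at once'' points in the right direction, but the essential move is to send the overweighted coordinates to distributions with negative mean (annihilating their contribution on both sides), not to optimize a separable objective over tilt parameters.
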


\noindent We define, for any tuple of distributions $F = (F_1, \dots, F_K)$,
\begin{equation*}
    A_F(\alpha_1, \dots, \alpha_K) := \sum_{k=1}^K \alpha_k \inf_{Q_k: \E_{X\sim Q_k}[X]<0}\frac{\KL(Q_k \Vert F_{k})}{\E_{X\sim Q_k}[-X]},
\end{equation*}
which we write $A_F$ in the absence of ambiguity on the $(\alpha_k)_{k\in [K]}$. The inequality in Theorem~\ref{asymptotic_lower_bound_theorem} means that
\begin{equation}
    \liminf_{B\rightarrow +\infty} \sup_{F} \frac{\log P_{(F_{1}, \dots, F_{K})}\left(\tau(B, \pi)<\infty\right)}{B A_F} \geq -1. \label{asymptotic_corrected_inequality}
\end{equation}

\begin{proof}
Recall from Lemma~\ref{fundamental_lemma_proba_ruin} that for any $Q = (Q_1, \dots, Q_K)$ such that $\E_{Q_i}[X]<0$ for any $i\in [K]$,
\begin{equation}
    \liminf_{B \to +\infty} \frac{1}{B} \log P_{(F_1, \dots, F_K)}\left(\tau(B, \pi)<\infty\right) \geq - \sum_{k=1}^K \beta_k(Q) \frac{\KL(Q_k \Vert F_k)}{\E_{X\sim Q_k}[-X]}, \label{asymptotic_lower_bound_basis_inequality}
\end{equation}
where $\beta(Q) = \left(\beta_1(Q), \dots, \beta_K(Q)\right)$ satisfies
\begin{equation}\label{beta_property}
    \forall k\in [K], \beta_k(Q)\geq 0 \ \text{ and } \ \sum_{k=1}^K \beta_k(Q) = 1.
\end{equation}

\noindent Let us fix $(\alpha_k)_{k\in [K]}$ such that for any $k\in [K], \ \alpha_k > 0$ and $\sum_{k=1}^K \alpha_k = 1$. We are going to show that no policy $\pi$ can achieve both
\begin{multline}\label{pi_wide_inequality}
    \forall (F_1, \dots, F_K), \ \liminf_{B\rightarrow +\infty} \frac{1}{B} \log P_{(F_{1}, \dots, F_{K})}\left(\tau(B, \pi)<\infty\right) \\
    \leq - \sum_{k=1}^K \alpha_k\inf_{Q_k: \E_{X\sim Q_k}[X]<0}\frac{\KL(Q_k \Vert F_{k})}{\E_{X\sim Q_k}[-X]} 
\end{multline}
and
\begin{multline}\label{pi_strict_inequality}
    \exists (F_1, \dots, F_K), \ \liminf_{B\rightarrow +\infty} \frac{1}{B} \log P_{(F_{1}, \dots, F_{K})}\left(\tau(B, \pi)<\infty\right) \\
    < - \sum_{k=1}^K \alpha_k \inf_{Q_k: \E_{X\sim Q_k}[X]<0}\frac{\KL(Q_k \Vert F_{k})}{\E_{X\sim Q_k}[-X]}. 
\end{multline}

\noindent Let us then fix a policy $\pi$ such that there exists a distribution $\bar{P} = (\bar{P}_1, \dots, \bar{P}_K)$ and $\epsilon>0$ such that
\begin{equation}
    \liminf_{B\rightarrow +\infty} \frac{1}{B} \log P_{(\bar{P}_{1}, \dots, \bar{P}_{K})}\left(\tau(B, \pi)<\infty\right) \leq - \sum_{k=1}^K \alpha_k \bar{\gamma}_k - \epsilon, \label{p_bar_definition_inequality}
\end{equation}
where we denoted, for any $k\in [K]$,
\begin{equation*}
    \bar{\gamma}_k := \inf_{Q: \E_{X\sim Q}[X]<0} \frac{\KL(Q \Vert \bar{P}_k)}{\E_{X\sim Q}[-X]} \ \text{ and } \ \bar{\gamma}_{\max} := \max_{k\in [K]} \bar{\gamma}_k>0.
\end{equation*}

\noindent Please note that the positivity of $\bar{\gamma}_{\max}$ relies on Assumption~\ref{existence_good_arm_assumption}. We are going to show that there exists $\bar{P}^* = (\bar{P}^*_1, \dots, \bar{P}^*_K)$ such that, denoting
\begin{equation*}
    \bar{\gamma}^*_k := \inf_{Q: \E_{X\sim Q}[X]<0} \frac{\KL(Q \Vert \bar{P}^*_k)}{\E_{X\sim Q}[-X]}, \ \bar{\gamma}^*_{\min} := \min\{\bar{\gamma}^*_k: k\in [K], \bar{\gamma}^*_k>0\} \ \text{ and } \ \epsilon' := \frac{\epsilon\alpha_{\min} \bar{\gamma}^*_{\min}}{4(K-1)\bar{\gamma}_{\max}},
\end{equation*}
the following holds:
\begin{equation*}
    \liminf_{B\rightarrow +\infty} \frac{1}{B} \log P_{(\bar{P}^*_{1}, \dots, \bar{P}^*_{K})}\left(\tau(B, \pi)<\infty\right) \geq - \sum_{k=1}^K \alpha_k \bar{\gamma}^*_k + \epsilon'.
\end{equation*}

\noindent We define $\bar{Q} = (\bar{Q}_1, \dots, \bar{Q}_K)$ such that, for any $k\in [K], \E_{X\sim \bar{Q}_k}[X]<0$ and
\begin{equation}\label{q_bar_basic_property}
    \frac{\KL(\bar{Q}_k \Vert \bar{P}_k)}{\E_{X\sim \bar{Q}_k}[-X]} \leq \bar{\gamma}_k + \frac{\epsilon}{2}. 
\end{equation}

\noindent Denoting $\alpha_{\min} := \min_{k\in [K]} \alpha_k >0$, we then introduce the set 
\begin{equation*}
    \mathcal{K} := \left\{k\in [K]: \beta_k(\bar{Q}) \leq \alpha_k - \frac{\alpha_{\min} \epsilon}{2(K-1) \bar{\gamma}_{\max}}\right\}.
\end{equation*}

\noindent Let us prove that $\mathcal{K}$ is not empty. Indeed, \eqref{p_bar_definition_inequality} can be re-written as
\begin{align}
    \liminf_{B\rightarrow +\infty} \frac{1}{B} \log P_{(\bar{P}_{1}, \dots, \bar{P}_{K})}\left(\tau(B, \pi)<\infty\right) &\leq - \sum_{k=1}^K \alpha_k \bar{\gamma}_k - \epsilon \nonumber \\
    &= -\sum_{k=1}^K \left(\alpha_k \bar{\gamma}_k + \alpha_k \epsilon\right)  \nonumber \\
    &= -\sum_{k=1}^K \left(\alpha_k + \frac{\alpha_k \epsilon}{\bar{\gamma}_k}\right)\bar{\gamma}_k. \label{k_not_full_upper}
\end{align}

\noindent Then, applying \eqref{asymptotic_lower_bound_basis_inequality} to $Q = \bar{Q}$ and $F = \bar{P}$ and using \eqref{q_bar_basic_property}, we deduce that
\begin{align}
    \liminf_{B \to +\infty} \frac{1}{B} \log P_{(\bar{P}_1, \dots, \bar{P}_K)}\left(\tau(B, \pi)<\infty\right) &\geq - \sum_{k=1}^K \beta_k(\bar{Q}) \frac{\KL(\bar{Q}_k \Vert \bar{P}_k)}{\E_{X\sim \bar{Q}_k}[-X]}  \nonumber \\
    &\geq - \sum_{k=1}^K \beta_k(\bar{Q}) \left(\bar{\gamma}_k + \frac{\epsilon}{2}\right)  \nonumber \\
    &= - \sum_{k=1}^K \left(\beta_k(\bar{Q}) + \frac{\alpha_k \epsilon}{2\bar{\gamma}_k}\right)\bar{\gamma}_k. \label{k_not_full_lower}
\end{align}

\noindent Then, we deduce from \eqref{k_not_full_upper} and \eqref{k_not_full_lower} that
\begin{equation*}
    -\sum_{k=1}^K \left(\alpha_k + \frac{\alpha_k \epsilon}{\bar{\gamma}_k}\right)\bar{\gamma}_k \geq - \sum_{k=1}^K \left(\beta_k(\bar{Q}) + \frac{\alpha_k \epsilon}{2\bar{\gamma}_k}\right)\bar{\gamma}_k,
\end{equation*}
or in other words, that 
\begin{equation*}
    \sum_{k=1}^K \left(\beta_k(\bar{Q}) - \alpha_k - \frac{\alpha_k \epsilon}{2\bar{\gamma}_k}\right) \underbrace{\bar{\gamma}_k}_{\geq 0} \geq 0.
\end{equation*}

\noindent This is equivalent to
\begin{equation*}
    \sum_{k: \bar{\gamma}_k>0} \left(\beta_k(\bar{Q}) - \alpha_k - \frac{\alpha_k \epsilon}{2\bar{\gamma}_k}\right) \underbrace{\bar{\gamma}_k}_{> 0} \geq 0.
\end{equation*}

\noindent We then deduce that there exists $k_0\in [K]$ such that $\beta_{k_0}(\bar{Q})\geq \alpha_k +\frac{\alpha_k \epsilon}{2\bar{\gamma}_{k_0}}$. With \eqref{beta_property}, it implies that
\begin{equation*}
    \sum_{j\ne k_0} \beta_j(\bar{Q}) = 1 - \beta_{k_0}(\bar{Q}) \leq 1 - \alpha_{k_0} - \frac{\alpha_{k_0}\epsilon}{2\bar{\gamma}_{k_0}} \leq \sum_{j\ne k_0} \alpha_j - \frac{\alpha_{\min}\epsilon}{2\bar{\gamma}_{\max}}.
\end{equation*}

\noindent We deduce that there exists $j\in [K]$ such that $\beta_j(\bar{Q}) \leq \alpha_j - \frac{\alpha_{\min} \epsilon}{2(K-1)\bar{\gamma}_{\max}}$, proving that $\mathcal{K}$ is not empty. Then, we define the distribution $\bar{P}^* = (\bar{P}^*_1, \dots, \bar{P}^*_K)$ as follows:
\begin{itemize}
    \item[-] for any $k\notin \mathcal{K}$, let $\bar{P}^*_k := \bar{Q}_k$, and please note that $\E_{X\sim \bar{P}^*_k}[X]<0$;
    \item[-] for any $k\in \mathcal{K}$, let $\bar{P}^*_k$ a distribution such that $\E_{X\sim \bar{P}^*_k}[X]>0$ and 
    \begin{equation}
        \frac{\KL(\bar{Q}_k \Vert \bar{P}^*_k)}{\E_{X\sim \bar{Q}_k}[-X]} \leq \bar{\gamma}^*_k\left(1 + \frac{\alpha_{\min} \epsilon}{4(K-1)\bar{\gamma}_{\max}}\right) = \bar{\gamma}^*_k + \frac{\alpha_{\min} \epsilon\bar{\gamma}^*_k}{4(K-1)\bar{\gamma}_{\max}}, \label{q_bar_star_basic_property}
    \end{equation}
    where $\bar{\gamma}^*_k = \inf_{Q: \E_{X\sim Q}[X]<0} \frac{\KL(Q \Vert \bar{P}^*_k)}{\E_{X\sim Q}[-X]}$ and $\bar{\gamma}^*_{\min} = \min\{\bar{\gamma}^*_k: k\in \mathcal{K}, \bar{\gamma}^*_k>0\}$.
    Note that this distribution $\bar{P}^*_k$ indeed exists by Lemma~\ref{lem_P_exist}.
\end{itemize}

\noindent Since $\mathcal{K}\ne \emptyset$ and by definition of $\bar{P}^*$, we have
\begin{align*}
    \sum_{k=1}^K \beta_k(\bar{Q}) \frac{\KL(\bar{Q}_k \Vert \bar{P}^*_{k})}{\E_{X\sim \bar{Q}_k}[-X]} &= \sum_{k\notin \mathcal{K}} \beta_k(\bar{Q}) \underbrace{\frac{\KL(\bar{Q}_k \Vert \bar{Q}_{k})}{\E_{X\sim \bar{Q}_k}[-X]}}_{0} + \sum_{k\in \mathcal{K}} \beta_k(\bar{Q}) \frac{\KL(\bar{Q}_k \Vert \bar{P}^*_{k})}{\E_{X\sim \bar{Q}_k}[-X]} \\
    &= \sum_{k\in \mathcal{K}} \beta_k(\bar{Q}) \frac{\KL(\bar{Q}_k \Vert \bar{P}^*_{k})}{\E_{X\sim \bar{Q}_k}[-X]}.
\end{align*}

\noindent Then, \eqref{q_bar_star_basic_property} implies that
\begin{align*}
    \sum_{k\in \mathcal{K}} \beta_k(\bar{Q}) \frac{\KL(\bar{Q}_k \Vert \bar{P}^*_{k})}{\E_{X\sim \bar{Q}_k}[-X]} &\leq \sum_{k\in \mathcal{K}} \beta_k(\bar{Q}) \bar{\gamma}^*_k + \frac{\alpha_{\min} \epsilon}{4(K-1)\bar{\gamma}_{\max}} \sum_{k\in \mathcal{K}} \beta_k(\bar{Q})\bar{\gamma}^*_k \\
    &\leq \sum_{k\in \mathcal{K}} \beta_k(\bar{Q}) \bar{\gamma}^*_k + \frac{\alpha_{\min} \epsilon}{4(K-1)} \sum_{k\in \mathcal{K}} \frac{\bar{\gamma}^*_k}{\bar{\gamma}_{\max}}.
\end{align*}

\noindent By definition of $\mathcal{K}$, for any $k\in \mathcal{K}, \beta_k(\bar{Q})\leq \alpha_k - \frac{\alpha_{\min} \epsilon}{2(K-1)\bar{\gamma}_{\max}}$ and we deduce that
\begin{align}
    \sum_{k\in \mathcal{K}} \beta_k(\bar{Q}) \frac{\KL(\bar{Q}_k \Vert \bar{P}^*_{k})}{\E_{X\sim \bar{Q}_k}[-X]} &\leq \sum_{k\in \mathcal{K}} \alpha_k \bar{\gamma}^*_k - \frac{\alpha_{\min} \epsilon}{2(K-1)} \sum_{k\in \mathcal{K}} \frac{\bar{\gamma}^*_k}{\bar{\gamma}_{\max}} + \frac{\alpha_{\min} \epsilon}{4(K-1)} \sum_{k\in \mathcal{K}} \frac{\bar{\gamma}^*_k}{\bar{\gamma}_{\max}} \nonumber \\
    &\leq \sum_{k\in \mathcal{K}} \alpha_k \bar{\gamma}^*_k - \frac{\alpha_{\min} \epsilon\bar{\gamma}^*_{\min}}{4(K-1)\bar{\gamma}_{\max}}, \label{second_case_p_bar_star_decomposition}
\end{align}
where the inequality \eqref{second_case_p_bar_star_decomposition} comes from the fact that $\mathcal{K}$ is not empty. Injecting \eqref{second_case_p_bar_star_decomposition} in \eqref{asymptotic_lower_bound_basis_inequality} (with $P = P^*$ and $Q = \bar{Q}$), we have:
\begin{equation*}
    \liminf_{B\rightarrow +\infty} \frac{1}{B} \log P_{(\bar{P}^*_{1}, \dots, \bar{P}^*_{K})}\left(\tau(B, \pi)<\infty\right) \geq - \sum_{k=1}^K \alpha_k \bar{\gamma}^*_k + \frac{\alpha_{\min} \epsilon\bar{\gamma}^*_{\min}}{4(K-1)\bar{\gamma}_{\max}}.
\end{equation*}

\noindent Recall that, by definition,  
\begin{equation*}
    \epsilon' = \frac{\epsilon\alpha_{\min}\bar{\gamma}^*_{\min}}{4(K-1)\bar{\gamma}_{\max}}.
\end{equation*}

\noindent We deduce the following 
\begin{equation*}
    \liminf_{B\rightarrow +\infty} \frac{1}{B} \log P_{(\bar{P}^*_{1}, \dots, \bar{P}^*_{K})}\left(\tau(B, \pi)<\infty\right) \geq - \sum_{k=1}^K \alpha_k \bar{\gamma}^*_k + \epsilon',
\end{equation*}
which concludes the proof of the theorem.
\end{proof}

\subsection{Sub-additivity of the optimal log probability of ruin}

The passage from the asymptotic to the non-asymptotic lower bound on the probability of ruin relies on sub-additivity properties described in the next lemma, which is the main result of this subsection.

\begin{lemma}\label{sub_additivity_lemma}
Let $\tilde{t}\in \mathbbm{R}_+^* \cup \{+\infty\}$. For any $B_1, B_2>0$, we have
\begin{multline*}
    \inf_{\pi} \sup_{F} \frac{\log P\left(\tau(B_1+B_2+1, \pi)<\tilde{t}\right)}{A_F} \\
    \leq  \inf_{\pi} \sup_{F} \frac{\log P\left(\tau(B_1, \pi)<\tilde{t}\right)}{A_F} +  \inf_{\pi} \sup_{F} \frac{\log P\left(\tau(B_2, \pi)<\tilde{t}\right)}{A_F}.
\end{multline*}
In the case of multinomial arm distributions of support $\{-1, 0, 1\}$, if $B_1$ and $B_2$ are positive integers, the previous bound can be refined as
\begin{multline*}
    \inf_{\pi} \sup_{F} \frac{\log P\left(\tau(B_1+B_2, \pi)<\tilde{t}\right)}{A_F} \\
    \leq  \inf_{\pi} \sup_{F} \frac{\log P\left(\tau(B_1, \pi)<\tilde{t}\right)}{A_F} +  \inf_{\pi} \sup_{F} \frac{\log P\left(\tau(B_2, \pi)<\tilde{t}\right)}{A_F}.
\end{multline*}
\end{lemma}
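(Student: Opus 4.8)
The plan is to establish the claimed sub-additivity by constructing, from near-optimal policies for budgets $B_1$ and $B_2$, a single two-phase policy for the larger budget and controlling its ruin probability by a product. Write $L(B) := \inf_\pi \sup_F \frac{\log P(\tau(B,\pi)<\tilde t)}{A_F}$; note this ratio is $\le 0$ whenever $A_F>0$, which is the only regime that matters, since $A_F=0$ forces the ratio to $-\infty$ and is harmless for the $\sup_F$. Because the left-hand side is an infimum over policies, it suffices to exhibit, for every $\varepsilon>0$, one policy $\pi$ for budget $B_1+B_2+1$ with value at most $L(B_1)+L(B_2)+2\varepsilon$. So I would first fix policies $\pi_1,\pi_2$ with $\sup_F \frac{\log P(\tau(B_i,\pi_i)<\tilde t)}{A_F}\le L(B_i)+\varepsilon$ and splice them together.

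The composite policy runs $\pi_1$ until the stopping time $\sigma:=\tau(B_1,\pi_1)$, i.e.\ the first time the cumulative reward $S_t=\sum_{s=1}^{t}X_s^{\pi_s}$ reaches $-B_1$, and from $\sigma$ onward runs $\pi_2$ on the yet-unused rewards of each arm, as if freshly started with budget $B_2$. The heart of the argument is a deterministic event inclusion. Since the steps lie in $[-1,1]$ and $S_{\sigma-1}>-B_1$, we have $S_\sigma\in(-B_1-1,-B_1]$, so the budget remaining toward the full threshold $B_1+B_2+1$ is $B_1+B_2+1+S_\sigma>B_2$. Hence a full ruin $S_t\le-(B_1+B_2+1)$ at some $t<\tilde t$ forces the phase-2 walk $S'_u:=S_{\sigma+u}-S_\sigma$ to satisfy $S'_u<-B_2$, a phase-2 ruin, while also $\sigma\le t<\tilde t$ and the phase-2 elapsed time $t-\sigma<\tilde t$. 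This yields $\{\tau(B_1+B_2+1,\pi)<\tilde t\}\subseteq\{\tau(B_1,\pi_1)<\tilde t\}\cap\{\text{phase-2 ruin within }\tilde t\}$.

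I would then factorize: conditioning on the phase-1 history (which fixes $\sigma$ and the pull counts $N_k(\sigma)$), the phase-2 rewards are the next i.i.d.\ draws from each $F_k$, independent of the past, so by the strong Markov property the conditional probability of a phase-2 ruin within $\tilde t$ is exactly $P_F(\tau(B_2,\pi_2)<\tilde t)$. This gives $P_F(\tau(B_1+B_2+1,\pi)<\tilde t)\le P_F(\tau(B_1,\pi_1)<\tilde t)\,P_F(\tau(B_2,\pi_2)<\tilde t)$. Taking $\log$, dividing by $A_F>0$, applying $\sup_F(a_F+b_F)\le\sup_F a_F+\sup_F b_F$, and letting $\varepsilon\to0$ delivers $L(B_1+B_2+1)\le L(B_1)+L(B_2)$, the general bound. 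For the refinement, in the $\{-1,0,1\}$ case with integer $B_1,B_2$ the walk is integer-valued and cannot overshoot, so $S_\sigma=-B_1$ exactly and the remaining budget is exactly $B_2$; repeating the argument with threshold $B_1+B_2$ in place of $B_1+B_2+1$ removes the extra unit.

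The main obstacle is the factorization step: one must justify that switching policies at the random time $\sigma$ creates no dependence, i.e.\ that $P_F(\text{phase-2 ruin}\mid \mathcal H_\sigma)=P_F(\tau(B_2,\pi_2)<\tilde t)$. This rests on each arm emitting an i.i.d.\ stream and on phase~1 and phase~2 consuming disjoint prefixes of those streams, so the post-$\sigma$ rewards are independent of $\mathcal H_\sigma$ with unchanged law. Care is also needed in the time bookkeeping, so that both ruins occur strictly before $\tilde t$, and in tracking the overshoot $-(S_\sigma+B_1)\in[0,1)$, which is precisely what produces the $+1$ budget loss in the general case and vanishes under $\{-1,0,1\}$ rewards.
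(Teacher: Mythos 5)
Your proposal is correct and follows essentially the same route as the paper's proof: splice a (near-)optimal policy for budget $B_1$ with one for budget $B_2$ at the first ruin time, factor the ruin probability into a product using the independence of the fresh reward streams, control the overshoot $S_\sigma\in(-B_1-1,-B_1]$ to account for the extra unit of budget in the general case (with exact hitting in the $\{-1,0,1\}$ integer case), and conclude via sub-additivity of the supremum. The only cosmetic difference is your use of $\varepsilon$-approximate minimizers where the paper takes exact argmins.
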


\begin{proof}
Let
\begin{align*}
    &\pi_{1}^* \in \argmin_{\pi} \sup_{F} \frac{\log P\left(\tau(B_1, \pi)<\tilde{t}\right)}{A_F}, \\
    &\pi_{2}^* \in \argmin_{\pi} \sup_{F} \frac{\log P\left(\tau(B_2, \pi)<\tilde{t}\right)}{A_F}.
\end{align*}

\noindent Besides, we denote by $\tilde{\pi}$ the policy such that
\begin{equation*}
    \tilde{\pi}_{t} := \begin{cases}
    &(\pi_1^*)_t \text{ if } t<\min\left(\tau(B_1, \tilde{\pi}), \tilde{t}\right), \\
    &(\pi_2^*)_{t-\tau(B_1, \tilde{\pi})} \text{ (ignoring the previously observed rewards) otherwise.}
    \end{cases}
\end{equation*}

\noindent Let $B'_2 \in \{B_2, B_2+1\}$. Then, it is clear that
\begin{align*}
    &P\left(\tau(B_1+B'_2, \tilde{\pi})<\tilde{t}\right) \\
    &= P\left(\exists 1\leq t_{1+2}\leq \tilde{t}: B_1 + B'_2 + \sum_{s=1}^{t_{1+2}} X_s^{\tilde{\pi}_s}<0\right) \\
    &= P\left(\exists 1\leq t_{1}, t_{1+2}\leq \tilde{t}: B_1 + \sum_{s=1}^{t_1} X_s^{\tilde{\pi}_s}<0, \ B_1 + B'_2 + \sum_{s=1}^{t_{1+2}} X_s^{\tilde{\pi}_s}<0\right) \\
    &= P\left(\exists 1\leq t_{1}\leq \tilde{t}: B_1 + \sum_{s=1}^{t_1} X_s^{\tilde{\pi}_s}<0\right) \\
    &\quad \times P\left(\exists \tau(B_1, \tilde{\pi})\leq t_{1+2}\leq \tilde{t}: B_1 + B'_2 + \sum_{s=1}^{t_{1+2}} X_s^{\tilde{\pi}_s}<0 \Bigg| \tau(B_1, \tilde{\pi})<\tilde{t}\right) \\
    &= P(\tau(B_1, \pi^*_1)<\tilde{t}) \times P\left(\exists \tau(B_1, \tilde{\pi})\leq t_{1+2}\leq \tilde{t}: B_1 + B'_2 + \sum_{s=1}^{t_{1+2}} X_s^{\tilde{\pi}_s}<0 \Bigg| \tau(B_1, \pi^*_1)<\tilde{t}\right) \\
    &= P(\tau(B_1, \pi^*_1)<\tilde{t}) \\
    &\quad \times P\Bigg(\exists \tau(B_1, \tilde{\pi})\leq t_{1+2}\leq \tilde{t}: B_1 + \sum_{s=1}^{\tau(B_1, \pi^*_1)} X_s^{(\pi^*_1)_s} + B'_2 + \sum_{s=\tau(B_1, \pi^*_1)+1}^{t_{1+2}} X_s^{(\pi^*_2)_s}<0 \\
    &\quad \quad \quad \quad \quad \quad \quad \quad \quad \quad \quad \quad \quad \quad \quad \quad \quad \quad \quad \quad \quad \quad \quad \quad \quad \quad \quad \quad \quad \quad \Bigg| \tau(B_1, \pi^*_1)<\tilde{t}\Bigg).
\end{align*}

\noindent From there, we are going to study separately the general case and the case of multinomial distributions of support $\{-1, 0, 1\}$.
\paragraph{First case: } in the general case, we choose $B'_2 = B_2 + 1$, and since the rewards are bounded in $[-1, 1]$, then
\begin{equation*}
    \tau(B_1, \pi^*_1)<\tilde{t} \implies B_1 + \sum_{s=1}^{\tau(B_1, \pi^*_1)} X_s^{(\pi^*_1)_s} + 1\geq 0.
\end{equation*}

\noindent Hence,
\begin{align*}
    &P\left(\tau(B_1+B_2+1, \tilde{\pi})<\tilde{t}\right) \\
    &\leq P(\tau(B_1, \pi^*_1)<\tilde{t}) \\
    &\quad \times P\left(\exists \tau(B_1, \pi^*_1)\leq t_{1+2}\leq \tilde{t}: B_2 + \sum_{s=\tau(B_1, \pi^*_1)+1}^{t_{1+2}} X_s^{(\pi^*_2)_s}<0 \Bigg| \tau(B_1, \pi^*_1)<\tilde{t}\right) \\
    &\leq P(\tau(B_1, \pi^*_1)<\tilde{t}) \\
    &\quad \times P\left(\exists \tau(B_1, \pi^*_1)\leq t_{1+2}\leq \tilde{t}+\tau(B_1, \pi^*_1): B_2 + \sum_{s=\tau(B_1, \pi^*_1)+1}^{t_{1+2}} X_s^{(\pi^*_2)_s}<0 \Bigg| \tau(B_1, \pi^*_1)<\tilde{t}\right) \\
    &= P(\tau(B_1, \pi^*_1)<\tilde{t}) \times P(\tau(B_2, \pi^*_2)<\tilde{t}).
\end{align*}

\noindent This yields
\begin{align*}
    &\inf_{\pi} \sup_{F} \frac{\log P\left(\tau(B_1+B_2+1, \pi)<\tilde{t}\right)}{A_F} \\
    &\leq \sup_{F} \frac{\log P\left(\tau(B_1+B_2+1, \Tilde{\pi})<\tilde{t}\right)}{A_F} \\
    &\leq \sup_{F} \frac{\log P\left(\tau(B_1, \pi_1^*)<\tilde{t}\right)}{A_F} + \sup_{F} \frac{\log P\left(\tau(B_2, \pi_2^*)<\tilde{t}\right)}{A_F} \\
    &= \inf_{\pi} \sup_{F} \frac{\log P\left(\tau(B_1, \pi)<\tilde{t}\right)}{A_F} +  \inf_{\pi} \sup_{F} \frac{\log P\left(\tau(B_2, \pi_2^*)<\tilde{t}\right)}{A_F},
\end{align*}
which concludes the general case.

\paragraph{Second case: } in the case of multinomial arm distributions of support $\{-1, 0, 1\}$, 
\begin{equation*}
    \tau(B_1, \pi^*_1)<\tilde{t} \implies B_1 + \sum_{s=1}^{\tau(B_1, \pi^*_1)} X_s^{(\pi^*_1)_s} = 0.
\end{equation*}

\noindent Therefore, choosing $B'_2 = B_2$, we have
\begin{align*}
    &P\left(\tau(B_1+B_2, \tilde{\pi})<\tilde{t}\right) \\
    &\leq P(\tau(B_1, \pi^*_1)<\tilde{t}) \\
    &\quad \times P\left(\exists \tau(B_1, \pi^*_1)\leq t_{1+2}\leq \tilde{t}: B_2 + \sum_{s=\tau(B_1, \pi^*_1)+1}^{t_{1+2}} X_s^{(\pi^*_2)_s}<0 \Bigg| \tau(B_1, \pi^*_1)<\tilde{t}\right) \\
    &\leq P(\tau(B_1, \pi^*_1)<\tilde{t}) \\
    &\quad \times P\left(\exists \tau(B_1, \pi^*_1)\leq t_{1+2}\leq \tilde{t}+\tau(B_1, \pi^*_1) : B_2 + \sum_{s=\tau(B_1, \pi^*_1)+1}^{t_{1+2}} X_s^{(\pi^*_2)_s}<0 \Bigg| \tau(B_1, \pi^*_1)<\tilde{t}\right) \\
    &= P(\tau(B_1, \pi^*_1)<\tilde{t}) \times P(\tau(B_2, \pi^*_2)<\tilde{t}).
\end{align*}

\noindent This yields
\begin{align*}
    &\inf_{\pi} \sup_{F} \frac{\log P\left(\tau(B_1+B_2, \pi)<\tilde{t}\right)}{A_F} \\
    &\leq \sup_{F} \frac{\log P\left(\tau(B_1+B_2, \Tilde{\pi})<\tilde{t}\right)}{A_F} \\
    &\leq \sup_{F} \frac{\log P\left(\tau(B_1, \pi_1^*)<\tilde{t}\right)}{A_F} + \sup_{F} \frac{\log P\left(\tau(B_2, \pi_2^*)<\tilde{t}\right)}{A_F} \\
    &= \inf_{\pi} \sup_{F} \frac{\log P\left(\tau(B_1, \pi)<\tilde{t}\right)}{A_F} +  \inf_{\pi} \sup_{F} \frac{\log P\left(\tau(B_2, \pi_2^*)<\tilde{t}\right)}{A_F},
\end{align*}
which concludes the multinomial case and the proof of the lemma.
\end{proof}

\subsection{Proof of Theorem~\ref{non_asymptotic_lower_bound_theorem} and Proposition~\ref{non_asymptotic_lower_bound_corollary}}

Let $(\alpha_k)_{k\in [K]}$ such that for any $k\in [K], \alpha_k>0$ and $\sum_{k=1}^K \alpha_k = 1$. Recall that, by definition,
\begin{equation*}
    A_{F} = \sum_{k=1}^K \alpha_k \inf_{Q_k: \E_{X\sim Q_k}[X]<0}\frac{\KL(Q_k \Vert F_{k})}{\E_{X\sim Q_k}[-X]}>0.
\end{equation*}

\noindent Let $\pi$ be any policy, and $B_0>0$ an initial budget. For any $n\geq 1$, let us denote by $\pi^B$ the policy defined recursively on $\{B\geq B_0\}$, such that $\pi^{B_0} = \pi$ and for any $B\geq B_0, \pi_t^{B} = \pi_t$ for $t\leq \tau(B_0, \pi)$ and then $\pi_t^{B} = \pi_t^{B'}$ for $t\geq \tau(B_0, \pi)+1$, where $B' = B + \sum_{s=1}^{\tau(B, \pi)}$. Concretely, $\pi^B$ restarts $\pi$ every time it exhausts the budget $B_0$. 

From now, we are going to study separately the general case of rewards bounded in $[-1, 1]$ and the case of multinomial arms of support $\{-1, 0, 1\}$.

\paragraph{First case: } in the case of multinomial arm distributions in $\mathcal{F}_{\{-1, 0, 1\}}^K$, we assume that, for any arm distributions $F = (F_1, \dots, F_K)$,
\begin{equation*}
    \frac{\log P_{F}\left(\tau(B_0, \pi)<\infty\right)}{A_F}  \leq - B_0,
\end{equation*}
and that there exist some arm distributions $\bar{F} = (\bar{F}_1, \dots, \bar{F}_K)$ and $C_{\bar{F}}>0$ such that
\begin{equation*}
    \frac{\log P_{\bar{F}}\left(\tau(B_0, \pi)<\infty\right)}{A_{\bar{F}}}  \leq - (B_0 + C_{\bar{F}}),
\end{equation*}
and we will show that there is contradiction.

By Lemma~\ref{sub_additivity_lemma}, for any arm distributions $F$ and for any $n\geq 1$,
\begin{align*}
    \frac{\log P_F\left(\tau(nB_0, \pi^{nB_0})<\infty\right)}{A_F} &\leq n\times \frac{\log P\left(\tau(B_0, \pi^{B_0})<\infty\right)}{A_F} \\
    &\leq -nB_0.
\end{align*}

\noindent Consequently, for any arm distributions $F$,
\begin{equation}
    \limsup_{n\rightarrow +\infty} \frac{1}{nB_0}\log P\left(\tau(nB_0, \pi)<\infty\right) \leq -A_F. \label{contradiction1}
\end{equation}

\noindent Furthermore, the same computation applied to $\bar{F}$ gives
\begin{align*}
    \frac{\log P_{\bar{F}}\left(\tau(nB_0, \pi^{nB_0})<\infty\right)}{A_{\bar{F}}} &\leq n\times \frac{\log P\left(\tau(B_0, \pi^{B_0})<\infty\right)}{A_{\bar{F}}} \\
    &\leq -n(B_0 + C_{\bar{F}}),
\end{align*}
which, in turn, implies
\begin{equation}
    \limsup_{n\rightarrow +\infty} \frac{1}{nB_0}\log P\left(\tau(nB_0, \pi)<\infty\right) \leq -\frac{B_0+C_{\bar{F}}}{B_0}A_{\bar{F}} < -A_{\bar{F}}. \label{contradiction2}
\end{equation}

\noindent Eqs.~\eqref{contradiction1} and \eqref{contradiction2} contradict Theorem~\ref{asymptotic_lower_bound_theorem}. Therefore, we deduce that if there exist some arm distributions $\bar{F}$ such that 
\begin{equation*}
    \frac{\log P_{\bar{F}}\left(\tau(B_0, \pi)<\infty\right)}{A_{\bar{F}}}  < - B_0,
\end{equation*}
then there also exist some arm distributions $F$ such that
\begin{equation*}
    \frac{\log P_{F}\left(\tau(B_0, \pi)<\infty\right)}{A_{F}}  > - B_0 ,
\end{equation*}
concluding the multinomial case and the proof of Theorem~\ref{non_asymptotic_lower_bound_theorem}.

\paragraph{Second case: } in the general case of arm distributions in $\mathcal{F}_{[-1, 1]}^K$, we assume that, for any arm distributions $F = (F_1, \dots, F_K)$,
\begin{equation*}
    \frac{\log P_{F}\left(\tau(B_0, \pi)<\infty\right)}{A_F}  \leq - (B_0+1),
\end{equation*}
and that there exist some arm distributions $\bar{F} = (\bar{F}_1, \dots, \bar{F}_K)$ and $C_{\bar{F}}>0$ such that
\begin{equation*}
    \frac{\log P_{\bar{F}}\left(\tau(B_0, \pi)<\infty\right)}{A_{\bar{F}}}  \leq - (B_0 + 1 + C_{\bar{F}}),
\end{equation*}
and show that there is contradiction.

By Lemma~\ref{sub_additivity_lemma}, for any arm distributions $F$ and for any $n\geq 1$,
\begin{align*}
    \frac{\log P_F\left(\tau(nB_0 + (n-1), \pi^{nB_0})<\infty\right)}{A_F} &\leq n\times \frac{\log P\left(\tau(B_0, \pi^{B_0})<\infty\right)}{A_F} \\
    &\leq -n(B_0+1).
\end{align*}

\noindent Consequently, for any arm distributions $F$,
\begin{equation}
    \limsup_{n\rightarrow +\infty} \frac{1}{nB_0 + (n-1)}\log P\left(\tau(nB_0 + (n-1), \pi)<\infty\right) \leq -A_F. \label{contradiction3}
\end{equation}

\noindent Furthermore, the same computation applied to $\bar{F}$ gives
\begin{align*}
    \frac{\log P_{\bar{F}}\left(\tau(nB_0+(n-1), \pi^{nB_0})<\infty\right)}{A_{\bar{F}}} &\leq n\times \frac{\log P\left(\tau(B_0, \pi^{B_0})<\infty\right)}{A_{\bar{F}}} \\
    &\leq -n(B_0 + 1 + C_{\bar{F}}),
\end{align*}
which in turn, implies,
\begin{equation}
    \limsup_{n\rightarrow +\infty} \frac{1}{nB_0+(n-1)}\log P\left(\tau(nB_0, \pi)<\infty\right) \leq -\frac{B_0+1+C_{\bar{F}}}{B_0+1} < -1. \label{contradiction4}
\end{equation}

\noindent Eqs.~\eqref{contradiction3} and \eqref{contradiction4} contradict Theorem~\ref{asymptotic_lower_bound_theorem}. Therefore, we deduce that if there exist some arm distributions $\bar{F}$ such that 
\begin{equation*}
    \frac{\log P_{\bar{F}}\left(\tau(B_0, \pi)<\infty\right)}{A_{\bar{F}}}  < - (B_0+1),
\end{equation*}
then there also exist some arm distributions $F$ such that
\begin{equation*}
    \frac{\log P_{F}\left(\tau(B_0, \pi)<\infty\right)}{A_{F}}  > - (B_0+1) ,
\end{equation*}
concluding the general case and the proof of Proposition~\ref{non_asymptotic_lower_bound_corollary}.
\hfill $\blacksquare$

\section{Proof of Lemma~\ref{proba_lemma}}\label{proba_lemma_appendix}

Please note that applying \eqref{asymptotic_lower_bound_basis_inequality} to the case of one single arm ($K = 1$) gives that, for any $\epsilon_0 \in \left(0, \frac{1}{3}\right)$ and any distribution $Q$ which has a negative expectation,
$$\liminf_{B\to +\infty} \frac{1}{B} \log P(\tau(B, 1)<\infty) \geq -\frac{\KL(Q \Vert F_1)}{\E_{X\sim Q}[-X]}.$$

\noindent By taking $\epsilon_0 \downarrow 0$, we deduce that
$$\liminf_{B\to +\infty} \frac{1}{B} \log P(\tau(B, 1)<\infty) \geq -\inf_{F: \E_{X\sim F}[X]<0} \frac{\KL(F \Vert F_1)}{\E_{X\sim F}[-X]}.$$

\noindent It thus remains to prove that for any $B>0$,
$$\frac{1}{B} \log P(\tau(B, 1)<\infty) \leq -\inf_{F: \E_{X\sim F}[X]<0} \frac{\KL(F \Vert F_1)}{\E_{X\sim F}[-X]}.$$

\noindent The result being trivial if $\E_{X\sim F_1}[X]\leq 0$, we assume that $\E_{X\sim F_1}[X]> 0$.
Let us define the logarithmic moment-generating function of $X$ by $\Lambda(\lambda) := \log \E\left[e^{\lambda X}\right]$.
By Lemma~\ref{lem_kl_lambda}, there exists $\lambda'<0$ such that
$\Lambda(\lambda')=0$
and it satisfies
\begin{equation*}
 \inf_{F: \E_{X\sim F}[X]<0} \frac{\KL(F\Vert F_1)}{\E_{X\sim F}[-X]}
 =-\lambda'.
\end{equation*}

\noindent Now, let $X\sim F_1$ and let $X_1, X_2,\dots \sim F_1$ be i.i.d copies of $X$.
We write $S_n = \sum_{i=1}^n X_i$.
We define $\tau := \inf\{n\geq 1: S_n\leq -B\}$ and $\tau_T := \min(\tau, T)$ for any $T\in \mathbb{N}$. Since $\tau_T$ is a bounded stopping time, by the optional stopping theorem, it holds for any $T$ that
$$\E\left[e^{\lambda' S_{\tau_T}}\right] = 1.$$

\noindent On the other hand, 
\begin{align*}
    1 &= \E\left[e^{\lambda' S_{\tau_T}}\right] \\
    &= \E\left[\1_{\tau_T<T}e^{\lambda' S_{\tau_T}}\right] + \E\left[\1_{\tau_T=T}e^{\lambda' S_{\tau_T}}\right] \\
    &\geq \E\left[\1_{\tau_T<T}e^{\lambda' S_{\tau_T}}\right] \\
    &\geq e^{-\lambda' B} P(\tau_T<T),
\end{align*}
which implies that
\begin{equation}
    P(\tau<\infty) = \lim_{T\to \infty} P(\tau_T<T) \leq e^{\lambda' B}.
    \nonumber
\end{equation}
Therefore we obtain
\begin{equation*}
    \frac{1}{B} \log P(\tau<\infty) \leq \lambda' = -\inf_{F: \E_{X\sim F}[X]<0} \frac{\KL(F\Vert F_1)}{\E_{X\sim F}[-X]},
\end{equation*}
which completes the proof. 
\hfill $\blacksquare$

\section{Proof of the Upper Bound on the Reward of EXPLOIT Policies (Proposition~\ref{exploit_reward_upper_bound_proposition})}\label{exploit_reward_upper_bound_proposition_proof}

We introduce the following notation. Let $K_+$ be the number of arms such that $P\left(\tau\left(\frac{B}{K}, k\right) = \infty\right)>0$ for $k\in [K]$. Recall that we ordered the arms in order of decreasing expectation, and therefore $\mu_1\geq \dots \geq \mu_{K_+}>0$. Then, by definition of $K_+$,
\begin{equation*}
    \forall k\in [K_+], P\left(\tau\left(\frac{B}{K}, k\right) = \infty\right)>0 \ ; \ \forall j\in \left\{K_+ +1,\dots , K\right\}, P\left(\tau\left(\frac{B}{K}, j\right) = \infty\right)=0.
\end{equation*}

\subsection{Preliminary Lemma}

In this subsection, we express the expected cumulative reward of any policy in EXPLOIT as the product between $p^{\mathrm{EX}}$ and a convex combination of $\mu_1, \dots, \mu_{K_+}$, and we explicitly give the coefficients of the convex combination. This decomposition will be useful both as a first step in the proof of Proposition~\ref{exploit_reward_upper_bound_proposition} and in the proof of the reward bound of EXPLOIT-UCB, as a particular instance of policy in EXPLOIT.

For any $S\subseteq [K_+]$, we define the event
\begin{equation*}
    \Pi_{S} := \left\{\forall j\in S, \tau\left(\frac{B}{K}, j\right)\geq T \ \text{ and } \ \forall j\in [K_+]\setminus S, \tau\left(\frac{B}{K}, j\right)< \sqrt{T}\right\}.
\end{equation*}

\noindent Given a policy $\pi$, an arm $k\in [K_+]$ and a set $S\subseteq [K_+]$, we define the coefficient $n_{\pi, k}(S)$ as
\begin{equation*}
    n_{\pi, k}(S) := \frac{1}{T} \E\left[\sum_{t=1}^T \1_{\pi_t = k, \tau(B, \pi)\geq t-1} \Bigg| \Pi_S\right].
\end{equation*}

\noindent When there is no ambiguity on the policy $\pi$, we simply write $n_k(S)$. Please note that for any fixed policy $\pi$ and any set $S\subseteq [K_+]$,
\begin{equation*}
    \sum_{k\in S} n_k(S) \leq \sum_{k=1}^K n_k(S) = \frac{1}{T} \E\left[\sum_{t=1}^T \1_{\tau(B, \pi)\geq t-1} \Bigg| \Pi_S\right] \leq 1.
\end{equation*}

\noindent The following result holds.
\begin{lemma}\label{exploit_reward_lemma}
For any policy $\pi$ within the framework EXPLOIT, the expected cumulative reward satisfies
\begin{equation}\label{exploit_reward}
    \E\left[\sum_{t=1}^T X_t^{\pi_t}\1_{\tau(B, \pi)\geq t-1}\right] = \sum_{k=1}^{K_+} \left(\sum_{S\subseteq [K_+]: k\in S} P(\Pi_S) n_k(S)\right) \mu_k \times T + o(T).
\end{equation}
\end{lemma}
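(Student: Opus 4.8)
The plan is to decompose the expected reward arm by arm and to recognize that the random subsets $S$ entering through the events $\Pi_S$ are exactly the sets of arms in $[K_+]$ that survive their private budget $B/K$. Write $M_k := \sum_{t=1}^T \1_{\pi_t=k,\,\tau(B,\pi)\geq t-1}$ for the number of pulls of arm $k$ counted in the reward, and let $G_k(n):=\sum_{i=1}^n Y_k^i$ be the running sum of its first $n$ rewards. Since ruin freezes the process, the counted pulls of arm $k$ form a prefix of its pull sequence, so the reward collected from arm $k$ equals $G_k(M_k)$ up to an $O(1)$ boundary term. First I would show that $\bigcup_{S\subseteq[K_+]}\Pi_S$ has probability $1-o(1)$: its complement forces some $k\in[K_+]$ to satisfy $\sqrt T\leq\tau(B/K,k)<T$, an event of probability at most $\sum_{k\in[K_+]}P(\sqrt T\le\tau(B/K,k)<\infty)\to0$ by continuity of measure. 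As $|G_k(M_k)|\le M_k\le T$, the contribution of this complement is $o(T)$, and it remains to evaluate $\sum_k\E[G_k(M_k)\1_{\Pi_S}]$ for each fixed $S$.

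Because $P(\Pi_S)\,n_k(S)\,T=\E[M_k\1_{\Pi_S}]$ by the definition of $n_k(S)$, the target reduces to showing, for every $S$ and every $k$, that $\E[G_k(M_k)\1_{\Pi_S}]=\mu_k\E[M_k\1_{\Pi_S}]+o(T)$ when $k\in S$, and $\E[G_k(M_k)\1_{\Pi_S}]=o(T)$ when $k\notin S$; summing over the finitely many $S$ and over $k$ then gives the stated formula. The cases $k\notin S$ are easy. If $k\in[K_+]\setminus S$, the exploration rule of EXPLOIT pulls arm $k$ at most $\tau_k^{<}=\tau(B/K,k)$ times, which is $<\sqrt T$ on $\Pi_S$, so $|G_k(M_k)|\le M_k<\sqrt T=o(T)$. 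If $k>K_+$ (hence $\mu_k\le0$), I would use optional stopping: with respect to the filtration generated by $Y_k^1,\dots,Y_k^n$ together with all rewards of the other arms, $M_k$ is a bounded stopping time, $G_k(n)-\mu_k n$ is a martingale, and $\1_{\Pi_S}$ is measurable at time $0$ since $\Pi_S$ involves only arms in $[K_+]$, all distinct from $k$. Thus $\E[(G_k(M_k)-\mu_k M_k)\1_{\Pi_S}]=0$, and because $M_k\le\tau(B/K,k)$ this equals $0$ when $\mu_k=0$ and is bounded by $\E[\tau(B/K,k)]\,P(\Pi_S)=O(1)$ when $\mu_k<0$ (using $\E[\tau(B/K,k)]<\infty$ and $\tau(B/K,k)\perp\Pi_S$); either way it is $o(T)$.

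The hard part is the case $k\in S$, where $\Pi_S$ conditions on $\tau(B/K,k)\ge T$ and thereby biases the rewards of arm $k$ upward, so the naive Wald identity fails. My plan is to first replace $\Pi_S$ by the horizon-free event $\tilde\Pi_S:=\bigcap_{k\in S}\{\tau(B/K,k)=\infty\}\cap\bigcap_{k\in[K_+]\setminus S}\{\tau(B/K,k)<\infty\}$: since $\{\tau\ge T\}\triangle\{\tau=\infty\}$ and $\{\tau<\sqrt T\}\triangle\{\tau<\infty\}$ each have probability $o(1)$, and $|G_k(M_k)|,M_k\le T$, one gets $\E[G_k(M_k)\1_{\Pi_S}]=\E[G_k(M_k)\1_{\tilde\Pi_S}]+o(T)$ and likewise for $\E[M_k\1_{\Pi_S}]$. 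The advantage of $\tilde\Pi_S$ is that its defining events are independent across arms, so under $P(\cdot\mid\tilde\Pi_S)$ the rewards of arm $k$ follow the fixed law $P(\cdot\mid\tau(B/K,k)=\infty)$, which does not vary with $T$. On this conditioned law the strong law of large numbers still holds, because almost-sure convergence survives conditioning on the positive-probability event $\{\tau(B/K,k)=\infty\}$; hence $G_k(M_k)/M_k\to\mu_k$ whenever $M_k\to\infty$, while $G_k(M_k)-\mu_k M_k$ stays bounded when $M_k$ stays bounded. Therefore $\tfrac1T|G_k(M_k)-\mu_k M_k|\to0$ almost surely under $P(\cdot\mid\tilde\Pi_S)$, and as this quantity is bounded by $2$, bounded convergence yields $\E[\,|G_k(M_k)-\mu_k M_k|\mid\tilde\Pi_S]=o(T)$. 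Multiplying by $P(\tilde\Pi_S)=\Theta(1)$ gives $\E[G_k(M_k)\1_{\tilde\Pi_S}]=\mu_k\E[M_k\1_{\tilde\Pi_S}]+o(T)$, which after undoing the swap is exactly the required estimate. The main technical obstacle is precisely this control of the survival-conditioned walk, where one must pass from the $T$-dependent conditioning of $\Pi_S$ to the fixed conditioning of $\tilde\Pi_S$ before the strong law can be invoked.
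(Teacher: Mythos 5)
Your argument is correct, but it routes the main difficulty differently from the paper, and it is worth seeing why. The paper's proof first applies the identity $\E\bigl[\sum_{t=1}^T X_t^{\pi_t}\1_{\pi_t=k}\1_{\tau(B,\pi)\geq t-1}\bigr]=\mu_k\,\E\bigl[\sum_{t=1}^T \1_{\pi_t=k}\1_{\tau(B,\pi)\geq t-1}\bigr]$ \emph{unconditionally} (the event $\{\pi_t=k,\ \tau(B,\pi)\geq t-1\}$ is determined by the history before the $t$-th reward is drawn, so this is a one-line tower-property step), and only afterwards decomposes the resulting pull counts over the events $\Pi_{S,S'}$; at that stage no reward variables remain, so no conditional Wald identity is ever needed. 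You instead condition on $\Pi_S$ first and therefore must prove $\E[G_k(M_k)\1_{\Pi_S}]=\mu_k\E[M_k\1_{\Pi_S}]+o(T)$ for $k\in S$, where the conditioning genuinely biases arm $k$'s rewards; your resolution --- swapping $\Pi_S$ for the horizon-free $\tilde\Pi_S$ at cost $T\cdot P(\Pi_S\triangle\tilde\Pi_S)=o(T)$, exploiting independence across the per-arm reward stacks so that arm $k$ follows the fixed law $P(\cdot\mid\tau(B/K,k)=\infty)$, and then invoking the strong law under that conditioned measure together with $\max_{n\le T}|G_k(n)-\mu_k n|=o(T)$ a.s.\ and bounded convergence --- is sound, as is your optional-stopping treatment of the arms $k>K_+$ (where $\1_{\Pi_S}$ is indeed $\mathcal{G}_0$-measurable because $\Pi_S$ involves only other arms' stacks). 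Your symmetric-difference bound plays the same role as the paper's intermediate events $\Pi_{S,S'}$ with $S'\neq\emptyset$, both resting on continuity of measure. The net comparison: your route is self-contained and makes explicit the survival-conditioning bias that the paper silently sidesteps, but the paper's order of operations (Wald first, partition second) eliminates the entire ``hard part'' you flag; you could shorten your proof considerably by adopting it. Two small points to tidy up: state explicitly that the trivial case $P(\tilde\Pi_S)=0$ (where both sides are $o(T)$ since then $P(\Pi_S)=o(1)$) is excluded before dividing by $P(\tilde\Pi_S)$, and note that for $S=\emptyset$ the claim $M_k\le\tau_k^{<}$ for $k\in[K_+]$ uses the assumption that rewards lie in $\{-1,0,1\}$ and $B$ is a multiple of $K$, so that the exploitation phase cannot start before global ruin.
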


\begin{proof}
First, we write the reward as a sum over the arms of positive probability of survival:
\begin{align}
    \E\left[\sum_{t=1}^T X_t^{\pi_t}\1_{\tau(B, \pi)\geq t-1}\right] &= \sum_{k=1}^K \mu_k \E\left[\sum_{t=1}^T \1_{\pi_t=k}\1_{\tau(B, \pi)\geq t-1}\right] \nonumber \\
    &= \sum_{k=1}^{K_+} \mu_k \E\left[\sum_{t=1}^T \1_{\pi_t=k}\1_{\tau(B, \pi)\geq t-1}\right] + o(T). \label{base_eq}
\end{align}

\noindent Then, we examine the term $\E\left[\sum_{t=1}^T \1_{\pi_t=k}\1_{\tau(B, \pi)\geq t-1}\right]$. In order to analyse it, we will introduce the following events, for any $S, S'\subseteq [K_+]$ such that $S\cap S' = \emptyset$:
\begin{multline*}
    \Pi_{S, S'} := \bigg\{\forall j\in S, \tau\left(\frac{B}{K}, j\right)\geq T ; \ \forall j\in S', \sqrt{T}\leq \tau\left(\frac{B}{K}, j\right)< T ; \\
    \forall j\in [K_+]\setminus (S\cup S'),\tau\left(\frac{B}{K}, j\right)< \sqrt{T}\bigg\}.
\end{multline*}

\noindent Please note that $\Pi_{S} = \Pi_{S, \emptyset}$. We can then decompose, for any $k\in\{1, \dots, K_+\}$,
\begin{equation*}
    \E\left[\sum_{t=1}^T \1_{\pi_t=k}\1_{\tau(B, \pi)\geq t-1}\right] = \sum_{S, S'\subseteq [K_+]: S\cap S' = \emptyset} P\left(\Pi_{S, S'}\right) \E\left[\sum_{t=1}^T \1_{\pi_t = k} \1_{\tau(B, \pi)\geq t-1} \Bigg| \Pi_{S, S'}\right].
\end{equation*}

\noindent Consider the case $S'\ne \emptyset$ and let $k\in S'$. We can bound
\begin{align*}
    P(\Pi_{S, S'}) &\leq P\left(\sqrt{T}\leq \tau\left(\frac{B}{K}, k\right)< T\right) \\
    &= P\left(\tau\left(\frac{B}{K}, k\right)\geq \sqrt{T}\right) - P\left(\tau\left(\frac{B}{K}, k\right)\geq T\right).
\end{align*}

\noindent Indeed, the sequence $\left(P\left(\tau\left(\frac{B}{K}, k\right)\geq n\right)\right)_{n\geq 1}$ is increasing and upper-bounded by $1$, and thus it has a limit and it implies that
$$P\left(\tau\left(\frac{B}{K}, k\right)\geq \sqrt{T}\right) - P\left(\tau\left(\frac{B}{K}, k\right)\geq T\right) = o(1).$$

\noindent We deduce that, for any $S, S'\subseteq [K_+]$ such that $S\cap S' = \emptyset$, 
\begin{equation*}
    S'\ne \emptyset \implies P\left(\Pi_{S, S'}\right) = o(1).
\end{equation*}

\noindent This implies
\begin{equation*}
    \E\left[\sum_{t=1}^T \1_{\pi_t=k}\1_{\tau(B, \pi)\geq t-1}\right] = \sum_{S\subseteq [K_+]} P\left(\Pi_{S}\right) \E\left[\sum_{t=1}^T \1_{\pi_t = k} \1_{\tau(B, \pi)\geq t-1} \Bigg| \Pi_{S}\right] + o(T).
\end{equation*}

\noindent Re-injecting in \eqref{base_eq}, we have
\begin{align*}
    \E\left[\sum_{t=1}^T X_t^{\pi_t}\1_{\tau(B, \pi)\geq t-1}\right] &= \sum_{k=1}^{K_+} \mu_k \sum_{S\subseteq [K_+]} P\left(\Pi_{S}\right) \E\left[\sum_{t=1}^T \1_{\pi_t = k} \1_{\tau(B, \pi)\geq t-1} \Bigg| \Pi_{S}\right] + o(T) \\
    &= \sum_{S\subseteq [K_+]} P\left(\Pi_{S}\right) \sum_{k=1}^{K_+} \mu_k \E\left[\sum_{t=1}^T \1_{\pi_t = k} \1_{\tau(B, \pi)\geq t-1} \Bigg| \Pi_{S}\right] + o(T).
\end{align*}

\noindent Besides, it is clear, by definition of $\Pi_S$, that any policy $\pi$ in EXPLOIT satisfies
\begin{equation*}
    k\notin S \implies \E\left[\sum_{t=1}^T \1_{\pi_t = k} \1_{\tau(B, \pi)\geq t-1} \Bigg| \Pi_{S}\right] = o(T).
\end{equation*}

\noindent We deduce that
\begin{align*}
    \E\left[\sum_{t=1}^T X_t^{\pi_t}\1_{\tau(B, \pi)\geq t-1}\right] &= \sum_{S\subseteq [K_+]} P\left(\Pi_{S}\right) \sum_{k\in S} \mu_k \E\left[\sum_{t=1}^T \1_{\pi_t = k} \1_{\tau(B, \pi)\geq t-1} \Bigg| \Pi_{S}\right] + o(T) \\
    &= \sum_{S\subseteq [K_+]} P\left(\Pi_{S}\right) \sum_{k\in S} \mu_k n_k(S) T + o(T) \\
    &= \sum_{k=1}^{K_+} \left(\sum_{S\subseteq [K_+]: k\in S} P\left(\Pi_{S}\right) n_k(S)\right) \mu_k T + o(T), 
\end{align*}
which concludes the proof of the lemma.
\end{proof}

\subsection{Proof of Proposition~\ref{exploit_reward_upper_bound_proposition}}

It remains to provide an upper bound to the right-hand side of the equality~\eqref{exploit_reward} in Lemma~\ref{exploit_reward_lemma} in order to complete the proof of Proposition~\ref{exploit_reward_upper_bound_proposition}. In order to maximize the right term in \eqref{exploit_reward}, we should solve the following maximization problem:
\begin{equation*}
    \max \sum_{k=1}^{K_+} \left(\sum_{S\subseteq [K_+]: k\in S} P\left(\Pi_{S}\right) n_k(S)\right) \mu_k \quad \text{ s.t. } \quad \sum_{k\in S} n_k(S) \leq 1 \text{ and } \forall k\notin S, n_k(S) = 0.
\end{equation*}

\noindent We can re-order the terms in the above sum as
\begin{equation*}
    \sum_{k=1}^{K_+} \left(\sum_{S\subseteq [K_+]: k\in S} P\left(\Pi_{S}\right) n_k(S)\right) \mu_k = \sum_{S\subseteq [K_+]} P\left(\Pi_{S}\right) \sum_{k\in S} \mu_k n_k(S),
\end{equation*}
and since, by hypothesis, $\mu_1\geq \mu_2 \geq \dots \geq \mu_{K_+}>0$, we deduce the bound
\begin{equation*}
    \sum_{k=1}^{K_+} \left(\sum_{S\subseteq [K_+]: k\in S} P\left(\Pi_{S}\right) n_k(S)\right) \mu_k \leq \sum_{S\subseteq [K_+]} P\left(\Pi_{S}\right) \max_{k\in S} \mu_k,
\end{equation*}
where the above inequality is an equality for $n_k(S) = n^*_k(S)$, defined by
\begin{equation*}
    n^*_k(S) = \left\{
    \begin{array}{ll}
        1 &\text{ if } k = \min S \\
        0 &\text{ otherwise.} \\
    \end{array}
\right.
\end{equation*}

\noindent This gives, for any given $k\in \{1, \dots, K_+\}$,
\begin{align*}
    &\sum_{S\subseteq [K_+]: k\in S} P\left(\Pi_{S}\right) n_k^*(S) \\
    &= \sum_{S\subseteq \{k+1, \dots, K_+\}} P\left(\Pi_{\{k\}\cup S}\right) \\
    &= \sum_{S\subseteq \{k+1, \dots, K_+\}} P\bigg(\forall j\in \{k\}\cup S, \tau\left(\frac{B}{K}, j\right)\geq T; \ \forall j\in [K_+]\setminus (\{k\}\cup S), \tau\left(\frac{B}{K}, j\right)< \sqrt{T}\bigg) \\
    &= P\left(\forall j\in [k-1], \tau\left(\frac{B}{K}, j\right)< \sqrt{T}; \ \tau\left(\frac{B}{K}, k\right)\geq T\right) + o(1) \\
    &= \left(1 - p^{\mathrm{EX}}\right) \underbrace{\frac{1}{1 - p^{\mathrm{EX}}}\prod_{j=1}^{k-1} P\left(\tau\left(\frac{B}{K}, j\right)< \infty\right)P\left(\tau\left(\frac{B}{K}, k\right)=\infty\right)}_{w_k} + o(1).
\end{align*}

\noindent We deduce that
\begin{equation*}
    \E\left[\sum_{t=1}^T X_t^{\pi_t}\1_{\tau(B, \pi)\geq t-1}\right] \leq \left(1 - p^{\text{EX}}\right) \sum_{k=1}^{K_+} w_k \mu_k T + o(T),
\end{equation*}
which concludes the proof of the proposition. 
\hfill $\blacksquare$

\section{Proof of the Reward Bound of EXPLOIT-UCB (Proposition~\ref{exploit_bandit_reward_optimal})}\label{exploit_bandit_reward_optimal_proof}

In this appendix and in the next one, we will use the following notation. For any $k\in [K]$, let $Y_1^k, \dots, Y_T^k \sim F_k$ be i.i.d.~rewards drawn from arm $k$ and for any $t\geq 1$, and we denote by $N_k(t) := \sum_{s=1}^t \1_{\pi_s = k}$ the number of times arm $k$ has been pulled until round $t$. Please note that $X_t^{\pi_t} = \sum_{k=1}^K Y_{N_k(t)}^k \1_{\pi_t = k}$. Given that arm $k$ has been pulled $n_k$ times, we also introduce $\hat{Y}_{n_k}^k := \frac{1}{n_k}\sum_{n=1}^{n_k} Y_n^{k}$ as the empirical average of arm $k$ at round $t$. Please note that for any $k\in [K]$ and any round $t$,
$$\hat{Y}^k_{N_k(t)} = \hat{X}^k_t.$$

\noindent For any $t\geq 1$, let
\begin{equation*}
    C_k(t) := \hat{Y}_{N_k(t)}^k + \sqrt{\frac{6 \log (t)}{N_k(t)}}.
\end{equation*}

\subsection{Preliminary Lemma}

EXPLOIT-UCB is based on the classic bandit algorithm UCB1 (which has a sublinear regret in the classic stochastic MAB), and therefore has the following characteristic which is going to be useful in the proof of the reward bound of both EXPLOIT-UCB and EXPLOIT-UCB-DOUBLE. For the sake of clarity, we assume the arms are ordered in decreasing expectation: $\mu_1\geq \mu_2 \geq \dots \geq \mu_K$.

\begin{lemma}\label{classic_bandit_regret_bound}
Let $(\pi_t)_{t\geq 1}$ be the policy associated to EXPLOIT-UCB. Assume that $\mu_1>\mu_k$. Then
$$\E\left[\sum_{t=1}^T \1_{\pi_t = k, C_k(t)\geq C_1(t)}\right] = o(T).$$
\end{lemma}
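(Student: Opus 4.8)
The plan is to reproduce the classical regret analysis of UCB1, with the constant $6$ in the confidence radius chosen precisely so that the resulting tail probabilities are summable. Set $\Delta_k := \mu_1 - \mu_k > 0$. The starting point is a deterministic dichotomy: whenever $C_k(t) \geq C_1(t)$, at least one of the three events below must occur, namely (i) the optimal arm is underestimated, $\hat{Y}^1_{N_1(t)} \leq \mu_1 - \sqrt{6\log t / N_1(t)}$; (ii) arm $k$ is overestimated, $\hat{Y}^k_{N_k(t)} \geq \mu_k + \sqrt{6\log t / N_k(t)}$; or (iii) arm $k$ is still undersampled, $N_k(t) < 24\log T / \Delta_k^2$. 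Indeed, if (iii) fails then (using $t \leq T$) one has $2\sqrt{6\log t / N_k(t)} \leq \Delta_k$, so if in addition (ii) fails then $C_k(t) < \mu_k + 2\sqrt{6\log t / N_k(t)} \leq \mu_1$, while if (i) fails then $C_1(t) > \mu_1$; these two inequalities contradict $C_k(t) \geq C_1(t)$.

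First I would use this dichotomy to bound
\[
    \sum_{t=1}^T \1_{\pi_t = k,\, C_k(t) \geq C_1(t)} \leq \sum_{t=1}^T \1_{(i)} + \sum_{t=1}^T \1_{(ii)} + \sum_{t=1}^T \1_{\pi_t = k,\, (iii)},
\]
where the subscripts $(i),(ii),(iii)$ refer to the events above. The third sum is controlled deterministically: since $N_k(t)$ increases by one at each pull of arm $k$, arm $k$ can be pulled at most $24\log T / \Delta_k^2$ times while $N_k(t) < 24\log T / \Delta_k^2$, so this term is at most $24\log T / \Delta_k^2 = O(\log T) = o(T)$. Note that the factor $\1_{\pi_t = k}$ is needed only here; the first two sums are bounded uniformly over all policies.

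For the first two sums I would take expectations, union-bound over the at most $t$ possible values of $N_1(t)$ (resp.\ $N_k(t)$), and apply Hoeffding's inequality to the i.i.d.\ rewards $Y^1_1, Y^1_2, \dots$ (resp.\ $Y^k_1, Y^k_2, \dots$), each bounded in $[-1,1]$ and hence of range $2$. This gives, for every fixed number of pulls $s$,
\[
    P\!\left(\hat{Y}^1_s \leq \mu_1 - \sqrt{6\log t / s}\right) \leq \exp(-3\log t) = t^{-3},
\]
and likewise for arm $k$, so that $\E\big[\sum_{t=1}^T \1_{(i)}\big] \leq \sum_{t=1}^T \sum_{s=1}^t t^{-3} \leq \sum_{t=1}^T t^{-2} = O(1)$, and the same bound holds for the sum of $\1_{(ii)}$. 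Combining the three estimates yields $\E\big[\sum_{t=1}^T \1_{\pi_t = k,\, C_k(t) \geq C_1(t)}\big] = O(\log T) = o(T)$.

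There is no essential difficulty beyond the textbook UCB argument; the one point requiring a moment's care is that the EXPLOIT-specific restriction to the active set $\mathcal{A}_t$ and the fallback branch of Algorithm~\ref{survival_bandit_algo} play no role here, since the statement concerns only the comparison of the indices $C_k(t)$ and $C_1(t)$ together with the occurrence $\pi_t = k$, and the dichotomy above holds regardless of how the pull at time $t$ was produced. The choice of the constant $6$ (rather than the usual $2$) is exactly what upgrades the per-step tail to $t^{-3}$, making the double sum converge and giving the clean $O(1)$ contribution from conditions (i) and (ii).
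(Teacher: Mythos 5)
Your proposal is correct and follows essentially the same route as the paper, which also reproduces the proof of Theorem~1 of \citet{auer}: the same trichotomy (underestimation of arm $1$, overestimation of arm $k$, undersampling of arm $k$ relative to $r=\lceil 24\log T/(\mu_1-\mu_k)^2\rceil$), the same use of the constant $6$ to get $t^{-3}$ tails from Hoeffding's inequality, and the same observation that the active-set restriction of EXPLOIT-UCB is irrelevant to bounding the joint event $\{\pi_t=k,\,C_k(t)\geq C_1(t)\}$. The only differences are bookkeeping (the paper splits off the first $\sqrt{T}$ steps and sums the union bound over all pairs $(n_1,n_k)$ up to $T$, whereas you sum $t\cdot t^{-3}$ over $t$), so no further comment is needed.
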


\begin{proof}
This proof completely follows the proof of Theorem 1 in \cite{auer}. Let $r\leq T$, the quantity to bound can be written as
\begin{align*}
    &\E\left[\sum_{t=1}^T \1_{\pi_t = k, C_k(t) \geq C_1(t)}\right] \\
    &= \E\left[\sum_{t=1}^T \1_{\pi_t = k, \ \hat{Y}_{N_k(t)}^k(t) + \sqrt{\frac{6 \log t}{N_k(t)}} \geq \hat{Y}_{N_1(t)}^1(t) + \sqrt{\frac{6 \log t}{N_1(t)}}}\right] \\
    &= \E\left[\sum_{t=T^{1/2}}^T \1_{\pi_t = k, \ \hat{Y}_{N_k(t)}^k(t) + \sqrt{\frac{6 \log t}{N_k(t)}} \geq \hat{Y}_{N_1(t)}^1(t) + \sqrt{\frac{6 \log t}{N_1(t)}}}\right] + o(T) \\
    &\leq r + \E\left[\sum_{t\in \Delta_{T, r}} \1_{\pi_t = k, \ \max_{r\leq n_k<t}\hat{Y}_{n_k}^k + \sqrt{\frac{6 \log t}{n_k}} \geq \min_{1\leq n_1<t}\hat{Y}_{n_1}^1 + \sqrt{\frac{6 \log t}{n_1}}}\right] + o(T) \\
    &\leq r + \E\left[\sum_{t\in \Delta_{T, r}} \1_{\max_{r\leq n_k<t}\hat{Y}_{n_k}^k + \sqrt{\frac{6 \log t}{n_k}} \geq \min_{1\leq n_1<t}\hat{Y}_{n_1}^1 + \sqrt{\frac{6 \log t}{n_1}}}\right] + o(T) \\
    &\leq r + \E\left[\sum_{t\in \Delta_{T, r}}\sum_{n_1 = 1}^{t-1} \sum_{n_k = r}^{t-1} \1_{\hat{Y}_{n_k}^k + \sqrt{\frac{6 \log t}{n_k}} \geq \hat{Y}_{n_1}^1 + \sqrt{\frac{6 \log t}{n_1}}}\right] + o(T),
\end{align*}
where $\Delta_{T, r} = \left\{t\in \{1, \dots, T\}: \sum_{s=1}^t \1_{\pi_s = k}\geq r\right\}$. 
Similarly as in \cite{auer}, we use the fact that the probability event $\left\{\hat{Y}_{n_k}^k + \sqrt{\frac{6 \log t}{n_k}} \geq \hat{Y}_{n_1}^1 + \sqrt{\frac{6 \log t}{n_1}}\right\}$ implies at least one of the following:
\begin{align*}
    \hat{Y}_{n_1}^1 &\leq \mu_1 - \sqrt{\frac{6 \log t}{n_1}} \\
    \hat{Y}_{n_k}^k &\geq \mu_k + \sqrt{\frac{6 \log t}{n_k}} \\
    \mu_1 &< \mu_k + 2\sqrt{\frac{6 \log t}{n_k}}. 
\end{align*}

\noindent Therefore, we can write
\begin{align*}
    &\E\left[\sum_{t=1}^T \1_{\pi_t = k, C_k(t) \geq C_1(t)}\right] \\
    &\leq r + \E\left[\sum_{t\in \Delta_{T, r}}\sum_{n_1 = 1}^{t-1} \sum_{n_k = r}^{t-1} \1_{\hat{Y}_{n_k}^k + \sqrt{\frac{6 \log t}{n_k}} \geq \hat{Y}_{n_1}^1 + \sqrt{\frac{6 \log t}{n_1}}}\right] + o(T) \\
    &\leq r + \E\left[\sum_{t\in \Delta_{T, r}}\sum_{n_1 = 1}^{t-1} \sum_{n_k = r}^{t-1} \left(\1_{\hat{Y}_{n_1}^1 \leq \mu_1 - \sqrt{\frac{6\log T}{n_1}}} + \1_{\hat{Y}_{n_k}^k \geq \mu_k + \sqrt{\frac{6\log T}{n_k}}} + \1_{\mu_1 < \mu_k + 2\sqrt{\frac{6\log T}{n_k}}}\right)\right] \\
    &\quad \quad \quad \quad \quad \quad \quad \quad \quad \quad \quad \quad \quad \quad \quad \quad \quad \quad \quad \quad \quad \quad \quad \quad \quad \quad \quad \quad \quad \quad \quad \quad \quad \quad \quad + o(T).
\end{align*}
The choice $r = \left\lceil\frac{24 \log T}{(\mu_1 - \mu_k)^2}\right\rceil$ ensures that, for any $n_k\geq r$,
$$\mu_1 - \mu_k - 2\sqrt{\frac{6\log T}{n_k}} \geq 0,$$
which implies
\begin{align}
    &\E\left[\sum_{t=1}^T \1_{\pi_t = k, C_k(t) \geq C_1(t)}\right] \nonumber \\
    &\leq \left\lceil\frac{24 \log T}{(\mu_1 - \mu_k)^2}\right\rceil + \E\left[\sum_{t\in \Delta_{T, r}} \sum_{n_1 = 1}^{t-1} \sum_{n_k = r}^{t-1} \left(\1_{\hat{Y}_{n_1}^1 \leq \mu_1 - \sqrt{\frac{6\log T}{n_1}}} + \1_{\hat{Y}_{n_k}^k \geq \mu_k + \sqrt{\frac{6\log T}{n_k}}}\right)\right] + o(T) \nonumber \\
    &= T\times \sum_{n_1 = 1}^{T} \sum_{n_k = r}^{T} \E\left[\left(\1_{\hat{Y}_{n_1}^1 \leq \mu_1 - \sqrt{\frac{6\log T}{n_1}}} + \1_{\hat{Y}_{n_k}^k \geq \mu_k + \sqrt{\frac{6\log T}{n_k}}}\right)\right] + o(T) \nonumber \\
    &= T\times \sum_{n_1 = 1}^{T} \sum_{n_k = r}^{T} \left(P\left(\hat{Y}_{n_1}^1 \leq \mu_1 - \sqrt{\frac{6\log T}{n_1}}\right) + P\left(\hat{Y}_{n_k}^k \geq \mu_k + \sqrt{\frac{6\log T}{n_k}}\right)\right) + o(T). \label{UCB_base_inequality}
\end{align}

\noindent Using Hoeffding's inequality, for any $n_1\in \{1, \dots, T\}$, we have
$$P\left(\hat{Y}_{n_1}^1 \leq \mu_1 - \sqrt{\frac{6\log T}{n_1}}\right) \leq \frac{1}{T^3}.$$

\noindent Similarly, for any $n_k\in \{r, \dots, T\}$, we have
$$P\left(\hat{Y}_{n_k}^k \geq \mu_k + \sqrt{\frac{6\log T}{n_k}}\right) \leq \frac{1}{T^3}.$$

\noindent We can then replace in \eqref{UCB_base_inequality}:
\begin{align*}
    \E\left[\sum_{t = 1}^T \1_{\pi_t = k, C_k(t) \geq C_1(t)}\right] &\leq T \sum_{n_1 = 1}^{T} \sum_{n_k = r}^{T} \frac{2}{T^3} + o(T) \\
    &\leq 2 + o(T) \\
    &= o(T),
\end{align*}
which concludes the proof of the lemma.
\end{proof}

\subsection{Proof of Proposition~\ref{exploit_bandit_reward_optimal}}

Let $S\subseteq [K_+]$. Recall that for any arm $k\in [K_+]$,
$$n_k(S) = \frac{1}{T} \E\left[\sum_{t=1}^T \1_{\pi_t = k, \tau(B, \pi)\geq t-1} \Bigg| \Pi_S\right].$$

\noindent Given that EXPLOIT-UCB is in EXPLOIT, by Lemma~\ref{exploit_reward_lemma}, it suffices to show that for any $S\subseteq [K_+]$ and any $k\in S$,
$$n_k(S) = \left\{
    \begin{array}{ll}
        1 + o(1) &\text{ if } k = \min S \\
        o(1) &\text{ otherwise.} \\
    \end{array}
\right.$$ 

\noindent Actually, by Proposition~\ref{exploit_reward_upper_bound_proposition}, it is sufficient to prove the above property for any $S\subseteq [K_+]$ such that there is no positive or zero arm $k\in [K_+]\setminus S$. Let then $S\subseteq [K_+]$ satisfying such a property.

Then, please note that $\sum_{k=1}^K n_k(S) = 1$. Therefore, it suffices to prove that for any $k\in S\setminus \{\min S\}, n_k(S) = o(1).$ Thus, let $k\in S\setminus \{\min S\}$. Then, on the one hand, we are going to provide a lower bound $P(\Pi_S)$ which is independent of $T$. Indeed, since there is no positive or zero arm $k\in [K_+]\setminus S$, we know that there exists $\epsilon>0$ such that
$$\forall k\in [K_+], P_{X\sim F_k}(X\leq -\epsilon)>0.$$

\noindent We fix such an $\epsilon$ and we deduce that
$$\prod_{k\in [K_+]\setminus S} P\left(\tau\left(\frac{B}{K}, k\right)\leq \frac{B}{\epsilon K}\right) >0.$$

\noindent We can therefore provide the following lower bound, independent of $T$ and positive by definition of $K_+$. For any $T\geq \left(\frac{B}{\epsilon K}\right)^2$,
\begin{align*}
    P(\Pi_S) &= \prod_{k\in S} P\left(\tau\left(\frac{B}{K}, k\right)\geq T\right) \prod_{k\in [K_+]\setminus S} P\left(\tau\left(\frac{B}{K}, k\right)<\sqrt{T}\right) \\
    &\geq \prod_{k\in S} P\left(\tau\left(\frac{B}{K}, k\right)=\infty\right) \prod_{k\in [K_+]\setminus S} P\left(\tau\left(\frac{B}{K}, k\right)<\frac{B}{\epsilon K}\right) \\
    &>0.
\end{align*}

\noindent On the other hand, an upper bound to $\E\left[\sum_{t=1}^T \1_{\pi_t = k, \Pi_S}\right]$ is obtained by 
\begin{align*}
    \E\left[\sum_{t=1}^T \1_{\pi_t = k, \Pi_S}\right] &\leq \E\left[\sum_{t=1}^T \1_{\pi_t = k, \forall j\in S, \tau\left(\frac{B}{K}, j\right)\geq T}\right] \\
    &\leq \E\left[\sum_{t=1}^T \1_{\pi_t = k, C_k(t-1)\geq C_{\min S}(t-1)}\right] \\
    &= o(T)
\end{align*}
by Lemma~\ref{classic_bandit_regret_bound}. We deduce the following bound on $n_k(S)$:
\begin{align*}
    n_k(S) &= \frac{1}{T} \frac{\E\left[\sum_{t=1}^T \1_{\pi_t = k, \Pi_S}\right]}{P(\Pi_S)} \\
    &\leq \frac{1}{T} \frac{\E\left[\sum_{t=1}^T \1_{\pi_t = k, C_k(t-1)\geq C_{\min S}(t-1)}\right]}{\prod_{k\in S} P\left(\tau\left(\frac{B}{K}, k\right)=\infty\right) \prod_{k\in [K_+]\setminus S} P\left(\tau\left(\frac{B}{K}, k\right)<\frac{B}{\epsilon K}\right)} \\
    &= o(1),
\end{align*}
which concludes the proof of the proposition. 
\hfill $\blacksquare$

\section{The Performance of EXPLOIT-UCB-DOUBLE}

In the proofs of the performance of EXPLOIT-UCB-DOUBLE, for the sake of clarity, we drop the exponent $n$ in the notation of EXPLOIT-UCB-DOUBLE, and $\pi^n$ becomes $\pi$. Recall the following notation from the previous section, for any $t\geq 1$,
\begin{equation*}
    C_k(t) := \hat{Y}_{N_k(t)}^k + \sqrt{\frac{6 \log (t)}{N_k(t)}}.
\end{equation*}

\subsection{Proof of Proposition~\ref{proba_survival_exploit_dbl}}\label{proba_survival_exploit_dbl_proof}

For any policy $\pi$ and any budget $B'$, we will denote $\pi\in \text{ EXPLOIT}(B')$ if at round $t$, $\pi$ only pulls arms $k\in [K]$ such that $\sum_{s=1}^t X_s^{\pi_s}\1_{\pi_s = k}\geq -\frac{B'}{K} + 1$. 

The probability of ruin of EXPLOIT-UCB-DOUBLE can be decomposed as
\begin{equation}\label{exploit_double_proba_decomposition}
    P(\tau(B, \pi)<T) = \sum_{j=0}^\infty P(\tau(B, \pi)<T \cap t_j\leq \tau(B, \pi)< t_{j+1}).
\end{equation}

\noindent Let us first examine the term in $j=0$. Then,
\begin{align*}
    &P(\tau(B, \pi)<T \cap \tau(B, \pi)< t_{1}) \\
    &\leq P\left(\tau(B, \pi)<T \text{ and } \forall t\leq T, B + \sum_{s=1}^t X_s^{\pi_s}\1_{\tau(B, \pi)\geq s-1}< nB^2\right) \\
    &\leq P\left(\tau(B, \pi)<T \text{ and } \pi\in \text{ EXPLOIT}(B)\right) \\
    &\leq p^{\text{EX}}.
\end{align*}

\noindent Then, let us examine the other terms in the sum. Let $j\geq 1$. For any $t\geq 1$, we will denote $\Tilde{\pi}_t := \pi_{t_j + t}$. Let us re-write each of the terms in the sum as
\begin{multline*}
    P\left(\tau(B, \pi)<T \text{ and } t_j\leq \tau(B, \pi)< t_{j+1}\right) = \\
    P\left(t_j\leq \tau(B, \pi)<t_{j+1}; \ B + \sum_{t=1}^T X_t^{\pi_t}\1_{\tau(B, \pi)\geq t-1}<0\right).
\end{multline*}

\noindent This is re-written as
\begin{multline*}
    P\left(\tau(B, \pi)<T \text{ and } t_j\leq \tau(B, \pi)< t_{j+1}\right) = \\
    P\left(t_j\leq \tau(B, \pi)<t_{j+1};
    B + \sum_{t=1}^T X_t^{\pi_t}\1_{\forall s\leq t-1, B + \sum_{r = 1}^s X_r^{\pi_r}>0}<0\right).
\end{multline*}

\noindent But then, by definition of $t_j$, under the condition that $t_j<T$, we have that
\begin{equation*}
    B + \sum_{t=1}^{t_j} X_t^{\pi_t} \geq jnB^2,
\end{equation*}
which implies that, for any $t\geq t_j + 1$,
\begin{equation*}
    B + \sum_{s=1}^t X_s^{\pi_s} \geq jnB^2 + \sum_{s=t_j+1}^t X_s^{\pi_s}.
\end{equation*}

\noindent We can then replace in the previous equation:
\begin{multline*}
    P\left(\tau(B, \pi)<T \text{ and } t_j\leq \tau(B, \pi)< t_{j+1}\right) \leq \\
    P\left(t_j\leq \tau(B, \pi)<t_{j+1}; \ jnB^2 + \sum_{t=t_j+1}^T X_t^{\pi_t}\1_{\forall s\leq t-1, jnB^2 + \sum_{r = t_j+1}^s X_r^{\pi_r}>0}<0\right).
\end{multline*}

\noindent This is re-written as
\begin{multline*}
    P\left(\tau(B, \pi)<T \text{ and } t_j\leq \tau(B, \pi)< t_{j+1}\right) \leq \\
    P\left(t_j\leq \tau(B, \pi)< t_{j+1}; \ jnB^2 + \sum_{t=1}^{T-t_j} X_{t+t_j}^{\Tilde{\pi}_{t}}\1_{\forall s\leq t-1, jnB^2 + \sum_{r = 1}^s X_{r+t_j}^{\Tilde{\pi}_{t_j}}>0}<0\right),
\end{multline*}
and then
\begin{align*}
    P\left(\tau(B, \pi)<T \text{ and } t_j\leq \tau(B, \pi)< t_{j+1}\right) &\leq P\left(t_j\leq \tau(B, \pi)< t_{j+1}; \ \tau(B, \Tilde{\pi})<T-t_j\right) \\
    &\leq P\left(\tau(B, \Tilde{\pi})<\infty, \Tilde{\pi} \in \text{ EXPLOIT }(jnB^2)\right) \\
    &\leq \left(p^{\mathrm{EX}}\right)^{jnB}.
\end{align*}

\noindent We can then replace in (\ref{exploit_double_proba_decomposition}):
\begin{align*}
    P(\tau(B, \pi)<T) &= \sum_{j=0}^\infty P(\tau(B, \pi)<T \cap t_j\leq \tau(B, \pi)< t_{j+1}) \\
    &\leq p^{\mathrm{EX}} + \sum_{j=1}^\infty \left(p^{\mathrm{EX}}\right)^{jnB} \\
    &= p^{\mathrm{EX}} + \frac{\left(p^{\mathrm{EX}}\right)^{nB}}{1 - \left(p^{\mathrm{EX}}\right)^{nB}},
\end{align*}
which gives the desired result. Let $\epsilon>0$, then
\begin{equation*}
    n \geq \frac{\log \frac{\epsilon}{1+\epsilon}}{B\log p^{\mathrm{EX}}} \implies \frac{\left(p^{\mathrm{EX}}\right)^{nB}}{1 - \left(p^{\mathrm{EX}}\right)^{nB}} \leq \epsilon \leq \frac{\epsilon}{\mu^*},
\end{equation*}
hence
\begin{equation*}
    P(\tau(B, \pi)<\infty) \leq p^{\mathrm{EX}} + \frac{\epsilon}{\mu^*},
\end{equation*}
which concludes the proof of the proposition. 
\hfill $\blacksquare$

\subsection{Proof of Proposition~\ref{reward_exploit_dbl}}\label{reward_exploit_dbl_proof}

We will assume that the arm with the biggest expectation is arm 1 and that it is unique for the sake of clarity. Let $j := \lceil\frac{T^{1/4}}{nB^2}-1\rceil$, recall that
\begin{equation*}
    t_j = \inf\left\{t\in \{0, \ldots, \min(\tau(B, \pi), T)\}: B + \sum_{s=1}^t X_s^{\pi_s}> jnB^2 \right\}.
\end{equation*}

\noindent We still denote $\Tilde{\pi}_t := \pi_{t_j + t}$ for any $t\geq 1$. We can then decompose the reward as follows:
\begin{align*}
    &\E\left[\sum_{t=1}^T X_t^{\pi_t}\1_{\tau(B, \pi)\geq t-1}\right] \\
    &= \sum_{k=1}^K \E\left[\sum_{t=1}^T X_t^{\pi_t}\1_{\pi_t = k}\1_{\tau(B, \pi)\geq t-1}\right] \\
    &= \mu_1 \E\left[\sum_{t=1}^T \1_{\pi_t = 1}\1_{\tau(B, \pi)\geq t-1}\right] + \sum_{k=2}^K \mu_k \E\left[\sum_{t=1}^T \1_{\pi_t = k}\1_{\tau(B, \pi)\geq t-1}\right] \\
    &= \mu_1 \underbrace{\E\left[\sum_{t=1}^T \1_{\pi_t = 1}\1_{\tau(B, \pi)\geq t-1}\right]}_{(A)} + \sum_{k=2}^K \mu_k \underbrace{\E\left[\sum_{t=t_j+1}^T \1_{\pi_t = k}\1_{\tau(B, \pi)\geq t-1}\right]}_{(B_k)} + O\left(\E[t_j]\right).
\end{align*}

\noindent If we prove that
\begin{equation*}
    (A) = P(\tau(B, \pi) = \infty)T + o(T), \quad \forall k\in \{2, \dots, K\}, (B_k) = o(T), \quad \E[t_j] = o(T),
\end{equation*}
then, we can write that
\begin{equation*}
    \E\left[\sum_{t=1}^T X_t^{\pi_t}\1_{\tau(B, \pi)\geq t-1}\right] = \mu_1 P(\tau(B, \pi) = \infty)T + o(T),
\end{equation*}
and using Proposition~\ref{proba_survival_exploit_dbl} gives the result:
\begin{equation*}
    \E\left[\sum_{t=1}^T X_t^{\pi_t}\1_{\tau(B, \pi)\geq t-1}\right] \geq \mu_1 \left(1 - p^{\mathrm{EX}} - \frac{\left(p^{\mathrm{EX}}\right)^{nB}}{1 - \left(p^{\mathrm{EX}}\right)^{nB}}\right)T + o(T).
\end{equation*}

\noindent Let $\epsilon>0$. Then by Proposition~\ref{proba_survival_exploit_dbl},
\begin{equation*}
    n \geq \frac{\log \frac{\epsilon}{1+\epsilon}}{B\log p^{\mathrm{EX}}} \implies P(\tau(B, \pi)=\infty)\geq 1-p^{\mathrm{EX}}-\epsilon.
\end{equation*}

\noindent In particular, the choice of $\epsilon = \frac{T^{B\log p^{\mathrm{EX}}}}{1 - T^{B\log p^{\mathrm{EX}}}} = o_T(1)$ gives $\frac{\log \frac{\epsilon}{1+\epsilon}}{B\log p^{\mathrm{EX}}} = \log T$, and hence
\begin{equation*}
    n \geq \log T \implies \E\left[\sum_{t=1}^T X_t^{\pi_t}\1_{\tau(B, \pi)\geq t-1}\right] \geq \mu_1 (1-p^{\mathrm{EX}}) T + o(T),
\end{equation*}
which concludes the proof. It only remains to study each of the terms $(A), (B_k)$ for $k\geq 2$ and $\E[t_j]$.

\subsection*{Study of $(B_k)$}

Let $k\in \{2, \dots, K\}$. We can decompose the term $(B_k)$ as follows:
\begin{align*}
    (B_k) &= \E\left[\sum_{t=t_j+1}^T \1_{\pi_t =k}\1_{\tau(B, \pi)\geq t-1}\right] \\
    &= \E\left[\sum_{t=t_j+1}^T \1_{\pi_t =k}\1_{\tau(B, \pi)\geq t_j}\1_{\forall s\leq t-1, B + \sum_{r=1}^s X_r^{\pi_r}>0}\right] \\
    &= \E\left[\sum_{t=t_j+1}^T \1_{\pi_t =k}\1_{\tau(B, \pi)\geq t_j}\1_{\forall s\in \{t_j+1, \dots, t-1\}, B + \sum_{r=1}^{t_j} X_r^{\pi_r} + \sum_{r=t_j+1}^s X_r^{\pi_r}>0}\right].
\end{align*}

\noindent But then, by definition of $t_j$, if $t_j\leq \min(\tau(B, \pi), T)$,
\begin{equation*}
    B + \sum_{r=1}^{t_j} X_r^{\pi_r}< jnB^2 + 1 = T^{1/4} + 1. 
\end{equation*}

\noindent This implies that, if $t_j\leq \min(\tau(B, \pi), T)$, denoting $\Tilde{\pi}_s := \pi_{s+t_j}$ for any $s\geq 1$,
\begin{align*}
    \1_{\forall s\in \{t_j+1, \dots, t-1\}, B + \sum_{r=1}^{t_j} X_r^{\pi_r} + \sum_{r=t_j+1}^s X_r^{\pi_r}>0} &\leq \1_{\forall s\in \{t_j+1, \dots, t-1\}, T^{1/4}+1 + \sum_{r=t_j+1}^s X_r^{\pi_r}>0} \\
    &= \1_{\forall s\in \{1, \dots, t-t_j-1\}, T^{1/4}+1 + \sum_{r=1}^{s-t_j} X_{r+t_j}^{\Tilde{\pi}_r}>0} \\
    &= \1_{\tau(T^{1/4}+1, \Tilde{\pi})\geq t-t_j-1},
\end{align*}
and thus, if $t_j\leq \min(\tau(B, \pi), T)$,
\begin{align*}
    (B_k) &= \E\left[\sum_{t=t_j+1}^T \1_{\pi_t =k}\1_{\tau(B, \pi)\geq t_j}\1_{\forall s\in \{t_j+1, \dots, t-1\}, B + \sum_{r=1}^{t_j} X_r^{\pi_r} + \sum_{r=t_j+1}^s X_r^{\pi_r}>0}\right] \\
    &\leq \E\left[\sum_{t=t_j+1}^T \1_{\pi_t =k}\1_{\tau(B, \pi)\geq t_j}\1_{\tau(T^{1/4}+1, \Tilde{\pi})\geq t-t_j-1}\right] \\
    &= \E\left[\sum_{t=1}^{T-t_j} \1_{\pi_{t+t_j} =k}\1_{\tau(B, \pi)\geq t_j}\1_{\tau(T^{1/4}+1, \Tilde{\pi})\geq t-1}\right].
\end{align*}

\noindent This inequality being a trivial equality if $t_j = T+1$ (because the sum is a sum on an empty set) or if $t_j = \tau(B, \pi)+1$ ($0\leq 0$), we deduce that in any case,
\begin{equation*}
    (B_k) \leq \E\left[\sum_{t=1}^{T-t_j} \1_{\pi_{t+t_j} =k}\1_{\tau(B, \pi)\geq t_j}\1_{\tau(T^{1/4}+1, \Tilde{\pi})\geq t-1}\right].
\end{equation*}

\noindent Then, denoting by $s_j$ the realized value of $t_j$ and decomposing classically, we have
\begin{align*}
    (B_k) &\leq \E\left[\sum_{t=1}^{T-t_j} \1_{\pi_{t+t_j} =k}\1_{\tau(B, \pi)\geq t_j}\1_{\tau(T^{1/4}+1, \Tilde{\pi})\geq t-1}\right] \\
    &= \sum_{s_j = T^{1/4}}^T \E\left[\sum_{t=1}^{T-s_j} \1_{\pi_{t+s_j} =k}\1_{\tau(B, \pi)\geq s_j}\1_{\tau(T^{1/4}+1, \Tilde{\pi})\geq t-1}\1_{t_j = s_j}\right] \\
    &= \sum_{s_j = T^{1/4}}^T \sum_{t=1}^{T-s_j} \E\left[ \1_{\pi_{t+s_j} =k}\1_{\tau(B, \pi)\geq s_j}\1_{\tau(T^{1/4}+1, \Tilde{\pi})\geq t-1}\1_{t_j = s_j}\right] \\
    &= \sum_{s_j = T^{1/4}}^T \sum_{t=1}^{T-s_j} P\left(\tau(B, \pi)\geq s_j, \tau(T^{1/4}+1, \Tilde{\pi})\geq t-1, t_j = s_j, \pi_{t+s_j} = k\right).
\end{align*}

\noindent Let us then study the probability $P\left(\tau(B, \pi)\geq s_j, \tau(T^{1/4}+1, \Tilde{\pi})\geq t-1, t_j = s_j, \pi_{t+s_j} = k\right)$
and decompose it as
\begin{multline}\label{suboptimal_arm_proba_decomposition}
    P\left(\tau(B, \pi)\geq s_j, \tau(T^{1/4}+1, \Tilde{\pi})\geq t-1, t_j = s_j, \pi_{t+s_j} = k\right) = \\
    P\Bigg(\tau(B, \pi)\geq s_j, \tau(T^{1/4}+1, \Tilde{\pi})\geq t-1, t_j = s_j, \pi_{t+s_j} = k, \\
    \forall t\geq s_j, \sum_{s=1}^t X_s^1 \1_{\pi_s = 1}> -\frac{T^{1/4}+1}{K}\Bigg)\\
    + P\Bigg(\tau(B, \pi)\geq s_j, \tau(T^{1/4}+1, \Tilde{\pi})\geq t-1, t_j = s_j, \pi_{t+s_j} = k, \\
    \exists t\geq s_j, \sum_{s=1}^t X_s^1 \1_{\pi_s = 1}\leq -\frac{T^{1/4}+1}{K}\Bigg).
\end{multline}

\noindent The second term on the right, though, can easily be bounded as follows:
\begin{multline*}
    P\Bigg(\tau(B, \pi)\geq s_j, \tau(T^{1/4}+1, \Tilde{\pi})\geq t-1, t_j = s_j, \pi_{t+s_j} = k, \\
    \exists t\geq s_j, \sum_{s=1}^t X_s^1 \1_{\pi_s = 1}\leq -\frac{T^{1/4}+1}{K}\Bigg) \\
    \leq P\left(\exists t\geq T^{1/4}, \sum_{s=1}^t X_s^1 \1_{\pi_s = 1}\leq -\frac{T^{1/4}+1}{K}\right).
\end{multline*}

\noindent Then, let us denote by $\delta_1^{\pi}<\delta_2^{\pi}<...$ the rounds $t\geq T^{1/4}$ at which $\pi_t = 1$, in other words, denoting $\delta_0^{\pi} := T^{1/4}$,
\begin{equation*}
    \forall j\geq 1, \delta_j^{\pi} := \inf\{t\geq \delta_{j-1}^{\pi}: \pi_t = 1\}.
\end{equation*}

\noindent Then, we can bound the probability as
\begin{align*}
    &P\left(\exists t\geq T^{1/4}, \sum_{s=1}^t X_s^1 \1_{\pi_s = 1}\leq -\frac{T^{1/4}}{K}\right) \\
    &= P\left(\exists j\geq 0, \sum_{s=1}^{\delta_j^{\pi}} X_s^1 \1_{\pi_s = 1}\leq -\frac{T^{1/4}+1}{K}\right) \\
    &= \sum_{n_1 = T^{1/4}/K}^{\infty} P\left(\exists j\geq 0, \sum_{s=1}^{\delta_j^{\pi}} X_s^1 \1_{\pi_s = 1}\leq -\frac{T^{1/4}+1}{K}, \sum_{s=1}^{T^{1/4}} \1_{\pi_s = 1} = n_1\right) \\
    &\leq \sum_{n_1 = T^{1/4}/K}^{\infty} \sum_{j=0}^{\infty} P\left(\sum_{s=1}^{\delta_j^{\pi}} X_s^1 \1_{\pi_s = 1}\leq -\frac{T^{1/4}+1}{K}, \sum_{s=1}^{T^{1/4}} \1_{\pi_s = 1} = n_1\right),
\end{align*}
because the rewards are bounded in $[-1, 1]$. Then, using Hoeffding's inequality, for any $n_1\geq \frac{T^{1/4}}{K}$ and $j\geq 0$,
\begin{align*}
    &P\left(\sum_{s=1}^{\delta_j^{\pi}} X_s^1 \1_{\pi_s = 1}\leq -\frac{T^{1/4}+1}{K}, \sum_{s=1}^{T^{1/4}} \1_{\pi_s = 1} = n_1\right) \\
    &\leq P\left(\frac{\sum_{s=1}^{\delta_j^{\pi}} X_s^1 \1_{\pi_s = 1}}{\sum_{s=1}^{\delta_j^{\pi}} \1_{\pi_s = 1}} - \mu_1 \leq -\mu_1, \sum_{s=1}^{T^{1/4}} \1_{\pi_s = 1} = n_1\right) \\
    &\leq \exp\left(-\frac{(n_1+j)\mu_1^2}{2}\right).
\end{align*}

\noindent Summing over $n_1$ and $j$ gives 
\begin{align*}
    &\sum_{n_1 = T^{1/4}/K}^{\infty} \sum_{j=0}^{\infty} P\left(\sum_{s=1}^{\delta_j^{\pi}} X_s^1 \1_{\pi_s = 1}\leq -\frac{T^{1/4}+1}{K}, \sum_{s=1}^{T^{1/4}} \1_{\pi_s = 1} = n_1\right) \\
    &\leq \sum_{n_1 = T^{1/4}/K}^{\infty} \sum_{j=0}^{\infty} \exp\left(-\frac{(n_1+j)\mu_1^2}{2}\right) \\
    &= \frac{\exp\left(-\frac{T^{1/4}\mu_1^2}{2K}\right)}{\left(1 - e^{-\frac{\mu_1^2}{2}} \right)^2}.
\end{align*}

\noindent We then deduce that
\begin{multline*}
    P\Bigg(\tau(B, \pi)\geq s_j, \tau(T^{1/4}+1, \Tilde{\pi})\geq t-1, t_j = s_j, \pi_{t+s_j} = k, \\
    \exists t\geq s_j, \sum_{s=1}^t X_s^1 \1_{\pi_s = 1}\leq -\frac{T^{1/4}+1}{K}\Bigg) = O\left(\exp\left(-\frac{T^{1/4}\mu_1^2}{2K}\right)\right),
\end{multline*}
and thus, plugging it in (\ref{suboptimal_arm_proba_decomposition}),
\begin{multline}\label{suboptimal_arm_first_simplification}
    P\left(\tau(B, \pi)\geq s_j, \tau(T^{1/4}+1, \Tilde{\pi})\geq t-1, t_j = s_j, \pi_{t+s_j} = k\right) = \\
    P\Bigg(\tau(B, \pi)\geq s_j, \tau(T^{1/4}+1, \Tilde{\pi})\geq t-1, t_j = s_j, \pi_{t+s_j} = k, \\
    \forall t\geq s_j, \sum_{s=1}^t X_s^1 \1_{\pi_s = 1}> -\frac{T^{1/4}+1}{K}\Bigg) + O\left(\exp\left(-\frac{T^{1/4}\mu_1^2}{2K}\right)\right).
\end{multline}

\noindent Let us also bound the first term in the decomposition in (\ref{suboptimal_arm_proba_decomposition}):
\begin{multline*}
    P\Bigg(\tau(B, \pi)\geq s_j, \tau(T^{1/4}+1, \Tilde{\pi})\geq t-1, t_j = s_j, \pi_{t+s_j} = k, \\
    \forall t\geq s_j, \sum_{s=1}^t X_s^1 \1_{\pi_s = 1}> -\frac{T^{1/4}+1}{K}\Bigg) \\
    \leq P\left(\pi_{t+s_j} = k, t_j = s_j, C_k(t+s_j)\geq C_1(t+s_j)\right).
\end{multline*}

\noindent We can then re-write the previous term in the sum, as
\begin{align*}
    &\sum_{s_j = T^{1/4}}^T \sum_{t=1}^{T-s_j} P\left(\pi_{t+s_j} = k, t_j = s_j, C_k(t+s_j)\geq C_1(t+s_j)\right) \\
    &= \sum_{s_j = T^{1/4}}^T \sum_{t=1}^{T-s_j} \E\left[\1_{\pi_{t+s_j} = k, t_j = s_j, C_k(t+s_j)\geq C_1(t+s_j)}\right] \\
    &= \E\left[\sum_{t=1}^{T-t_j}\1_{\pi_{t+t_j} = k, C_k(t+t_j)\geq C_1(t+t_j)}\right] \\
    &\leq \sum_{t=1}^T \E\left[\1_{\pi_{t} = k, C_k(t)\geq C_1(t)}\right] \\
    &= o(T),
\end{align*}
by Lemma~\ref{classic_bandit_regret_bound}. Overall, we can replace in (\ref{suboptimal_arm_first_simplification}) and deduce that
\begin{equation*}
    \sum_{s_j = T^{1/4}}^T \sum_{t=1}^{T-s_j} P\left(\tau(B, \pi)\geq s_j, \tau(T^{1/4}+1, \Tilde{\pi})\geq t-1, t_j = s_j, \pi_{t+s_j} = k\right) = o(T),
\end{equation*}
which straightforwardly implies that
\begin{equation*}
    (B_k) = o(T).
\end{equation*}

\subsection*{Study of $\E[t_j]$}

We can decompose
\begin{align*}
    \E[t_j] &= \E\left[t_j \1_{t_j = \min(\tau(B, \pi), T) + 1}\right] + \E\left[t_j\1_{t_j \leq \min(\tau(B, \pi), T)}\right] \\
    &= (T+1) \underbrace{P(t_j = T+1)}_{(C)} + \underbrace{\E\left[(\tau(B, \pi)+1)\1_{t_j = \tau(B, \pi) + 1}\right]}_{(D)} + \underbrace{\E\left[t_j\1_{t_j \leq \min(\tau(B, \pi), T)}\right]}_{(E)}.
\end{align*}

\noindent We can first bound the term $(D)$, as
\begin{equation*}
    (D) \leq \left(\sqrt{T}+1\right) P\left(\tau(B, \pi)\geq \sqrt{T}\right) + (T+1) P\left(\sqrt{T}<\tau(B, \pi)\leq T\right).
\end{equation*}

\noindent But since $(P(\tau(B, \pi)\leq n))_{n\geq 1}$ is a sequence which converges to $P(\tau(B, \pi)<\infty)$, we deduce that
\begin{equation*}
    P\left(\sqrt{T}<\tau(B, \pi)\leq T\right) = P(\tau(B, \pi)\leq T) - P\left(\tau(B, \pi)\leq \sqrt{T}\right) \rightarrow_{T\rightarrow +\infty} 0,
\end{equation*}
and therefore,
\begin{equation*}
    (D) = o(T).
\end{equation*}

\noindent We then deduce that
\begin{equation*}
    \E[t_j] = (C) + (E) + o(T).
\end{equation*}

\noindent Then, let us study the term $(C)$ and bound it by:
\begin{align}
    (C) &\leq P\left(\tau(B, \pi)\geq T, \ \forall t\leq T, B + \sum_{s=1}^t X_s^{\pi_s} \leq T^{\frac{1}{4}}, \ t_j = T + 1\right) \nonumber \\
    &\leq P\left(\tau(B, \pi)\geq T, \ B + \sum_{s=1}^{T^{3/4}} X_s^{\pi_s} \leq T^{\frac{1}{4}}, \ t_j = T + 1\right) \nonumber \\
    &\leq P\left(\tau(B, \pi)\geq T^{\frac{3}{4}}, \ B + \sum_{s=1}^{T^{3/4}} X_s^{\pi_s} \leq T^{\frac{1}{4}}, \ t_j \geq T^{\frac{3}{4}}\right). \label{bound_c}
\end{align}

\noindent Then, let us study the term $(E)$. Actually,
\begin{align*}
    &P(t_j\geq T^{3/4}, t_j\leq \min(\tau(B, \pi), T)) \\
    &= P\left(\tau(B, \pi)\geq T^{\frac{3}{4}}, \ \forall t\leq T^{3/4}, B + \sum_{s=1}^t X_s^{\pi_s} \leq T^{1/4}, \ t_j\geq T^{\frac{3}{4}}\right) \\
    &\leq P\left(\tau(B, \pi)\geq T^{\frac{3}{4}}, \ B + \sum_{s=1}^{T^{3/4}} X_s^{\pi_s}\leq T^{1/4}, \ t_j\geq T^{\frac{3}{4}}\right).
\end{align*}

\noindent We then deduce that
\begin{align}
    (E) &\leq T P\left(t_j\geq T^{3/4}, t_j\leq \min(\tau(B, \pi), T)\right) + T^{3/4} P\left(t_j< T^{3/4}, t_j\leq \min(\tau(B, \pi), T)\right) \nonumber \\
    &= T P\left(\tau(B, \pi)\geq T^{\frac{3}{4}}, \ B + \sum_{s=1}^{T^{3/4}} X_s^{\pi_s}\leq T^{1/4}, \ t_j\geq T^{\frac{3}{4}}\right) + o(T). \label{bound_e}
\end{align}

\noindent Using \eqref{bound_c} and \eqref{bound_e}, we deduce that
\begin{equation*}
    \E[t_j] \leq 2T P\left(\tau(B, \pi)\geq T^{\frac{3}{4}}, \ B + \sum_{s=1}^{T^{3/4}} X_s^{\pi_s}\leq T^{1/4}, \ t_j\geq T^{\frac{3}{4}}\right) + o(T).
\end{equation*}

\noindent Then, consider the set of conditions $\left\{\tau(B, \pi)\geq T^{\frac{3}{4}}, \ B + \sum_{s=1}^{T^{3/4}} X_s^{\pi_s} \leq T^{\frac{1}{4}}, \ t_j \geq T^{\frac{3}{4}} \right\}$ and assume there exists an arm $k_0\in [K]$ such that $\sum_{s=1}^{T^{3/4}} X_s^{\pi_s} \1_{\pi_s = k_0}> T^{\frac{1}{3}}$. Since $T^{3/4}\leq t_j$, we know that for any $k\in [K]$,
\begin{equation*}
    \sum_{t=1}^{T^{3/4}} X_t^{\pi_t} \1_{\pi_t = k}\geq -\frac{T^{\frac{1}{4}}}{K} - 1,
\end{equation*}
and hence,
\begin{align*}
    B + \sum_{t=1}^{T^{3/4}} X_t^{\pi_t} &= B + \sum_{t=1}^{T^{3/4}} X_t^{\pi_t}\1_{\pi_t = k_0} + \sum_{k\ne k_0} \sum_{t=1}^{T^{3/4}} X_t^{\pi_t}\1_{\pi_t = k} \\
    &\geq T^{\frac{1}{3}} - \frac{K-1}{K} T^{\frac{1}{4}} - (K-1) \\
    &= \Omega(T^{\frac{1}{3}}),
\end{align*}
which contradicts the hypothesis $B + \sum_{t=1}^{T^{3/4}} X_t^{\pi_t} \leq T^{\frac{1}{4}}$. We deduce that 
\begin{align*}
    &P\left(\tau(B, \pi)\geq T^{\frac{3}{4}}, \ B + \sum_{s=1}^{T^{3/4}} X_s^{\pi_s}\leq T^{1/4}, \ t_j\geq T^{\frac{3}{4}}\right) \\
    &\leq P\left(\tau(B, \pi)\geq T^{\frac{3}{4}}, \ \forall k\in [K], \sum_{t=1}^{T^{3/4}} X_t^{\pi_t}\1_{\pi_t = k} \leq T^{\frac{1}{3}}\right) \\
    &\leq P\left(\exists k\in [K], \sum_{t=1}^{T^{3/4}} \1_{\pi_t = k}\geq \frac{T^{3/4}}{K} \text{ and } \sum_{t=1}^{T^{3/4}} X_t^{\pi_t}\1_{\pi_t = k} \leq T^{\frac{1}{3}}\right) \\
    &= P\left(\exists k\in [K]: \ -\frac{T^{1/4}}{K}-1 <\sum_{s=1}^{T^{3/4}} X_s^{\pi_s}\1_{\pi_s = k}< T^{1/3}, \sum_{s=1}^{T^{3/4}} \1_{\pi_s = k}\geq \frac{T^{3/4}}{K} \right) \\
    &\leq P\left(\exists k\in [K]: \ \left|\sum_{s=1}^{T^{3/4}} X_s^{\pi_s}\1_{\pi_s = k}\right|< T^{1/3}, \sum_{s=1}^{T^{3/4}} \1_{\pi_s = k}\geq \frac{T^{3/4}}{K} \right) \\
    &\leq \sum_{k=1}^K P\left( \left|\sum_{s=1}^{T^{3/4}} X_s^{\pi_s}\1_{\pi_s = k}\right|< T^{1/3}, \sum_{s=1}^{T^{3/4}} \1_{\pi_s = k}\geq \frac{T^{3/4}}{K} \right).
\end{align*}

\noindent Then, for any arm $k\in [K]$, there are two cases: either $\E[X_1^k]\ne 0$, or $\E[X_1^k] = 0$. In the former case, we can use Hoeffding's inequality to bound the above probability:
\begin{align*}
    &P\left( \left|\sum_{s=1}^{T^{3/4}} X_s^{\pi_s}\1_{\pi_s = k}\right|< T^{1/3}, \sum_{s=1}^{T^{3/4}} \1_{\pi_s = k}\geq \frac{T^{3/4}}{K} \right) \\
    &\leq P\left( \left|\sum_{s=1}^{T^{3/4}} X_s^{\pi_s}\1_{\pi_s = k} - \E[X_1^k]\right|< \frac{K}{T^{5/12}} - \E[X_1^k], \sum_{s=1}^{T^{3/4}} \1_{\pi_s = k}\geq \frac{T^{3/4}}{K} \right) \\
    &\leq \exp\left(\frac{T^{3/4}\left(\frac{K}{T^{5/12}} - \E[X_1^k]\right)^2}{4}\right) \\
    &= o(1).
\end{align*}

\noindent In the latter case, we can bound this probability as follows:
\begin{align*}
    &P\left( \left|\sum_{s=1}^{T^{3/4}} X_s^{\pi_s}\1_{\pi_s = k}\right|< T^{1/3}, \sum_{s=1}^{T^{3/4}} \1_{\pi_s = k}\geq \frac{T^{3/4}}{K} \right) \\
    &\leq P\left( \left|\frac{\sum_{s=1}^{T^{3/4}} X_s^{\pi_s}\1_{\pi_s = k}}{\sqrt{\sum_{s=1}^{T^{3/4}} \1_{\pi_s = k}}}\right|< \frac{K}{T^{1/24}}, \ \sum_{s=1}^{T^{3/4}} \1_{\pi_s = k}\geq \frac{T^{3/4}}{K}\right).
\end{align*}

\noindent Then, under the assumption that there is no zero arm, $\text{Var}(X_1^k)>0$ and 
\begin{equation*}
    \frac{\sum_{s=1}^{T^{3/4}} X_s^{\pi_s}\1_{\pi_s = k}}{\sqrt{\sum_{s=1}^{T^{3/4}} \1_{\pi_s = k}}} \xrightarrow{d} \mathcal{N}(0, \text{Var}(X_1^k)),
\end{equation*}
and since $\frac{1}{T^{1/24}} \xrightarrow{T\rightarrow +\infty} 0$, we deduce that
\begin{equation*}
    P\left(t_j\geq T^{3/4}\right) = o(1).
\end{equation*}

\noindent In any case, we have that
\begin{equation*}
    P\left( \left|\sum_{s=1}^{T^{3/4}} X_s^{\pi_s}\1_{\pi_s = k}\right|< T^{1/3}, \sum_{s=1}^{T^{3/4}} \1_{\pi_s = k}\geq \frac{T^{3/4}}{K} \right) = o(1),
\end{equation*}
and hence,
\begin{equation*}
    P\left(\tau(B, \pi)\geq T^{\frac{3}{4}}, \ B + \sum_{s=1}^{T^{3/4}} X_s^{\pi_s}\leq T^{1/4}, \ t_j\geq T^{\frac{3}{4}}\right) = o(1).
\end{equation*}

\noindent We then deduce that
\begin{equation*}
    \E[t_j] = o(T).
\end{equation*}

\subsection*{Study of $(A)$}

This term is the main one in the previous decomposition.
\begin{align*}
    &\E\left[\sum_{t=1}^T \1_{\tau(B, \pi)\geq t-1}\right] \\
    &= \E\left[\sum_{t=1}^T \1_{\pi_t=1}\1_{\tau(B, \pi)\geq t-1}\right] + \sum_{k=2}^K \E\left[\sum_{t=1}^T \1_{\pi_t=k}\1_{\tau(B, \pi)\geq t-1}\right] \\
    &= (A) + \sum_{k=2}^K \E\left[\sum_{t=1}^{t_j} \1_{\pi_t=k}\1_{\tau(B, \pi)\geq t-1}\right] + \sum_{k=2}^K \E\left[\sum_{t=t_j+1}^T \1_{\pi_t=k}\1_{\tau(B, \pi)\geq t-1}\right] \\
    &= (A) + \sum_{k=2}^K (B_k) + O(\E[t_j]).
\end{align*}

\noindent But then, using the previous bounds on $(B_k)$ and $\E[t_j]$, we deduce that
\begin{equation*}
    (A) = \E\left[\sum_{t=1}^T \1_{\tau(B, \pi)\geq t-1}\right] + o(T).
\end{equation*}

\noindent Then, we can simply replace the factor with the expectation by the probability of survival, as
\begin{align*}
    \E\left[ \sum_{t=1}^T \1_{\tau(B, \pi)\geq t-1}\right] &=  \sum_{t=1}^T P(\tau(B, \pi)\geq t-1) \\
    &= TP(\tau(B, \pi) = \infty) + o(T).
\end{align*}

\noindent Hence,
\begin{equation*}
    (A) = \mu_1 P(\tau(B, \pi)=\infty)T + o(T),
\end{equation*}
which concludes the proof of the proposition. 
\hfill $\blacksquare$

\section{Proof of the Pareto-optimality of EXPLOIT-UCB-DOUBLE (Theorem~\ref{teaser_theorem})}\label{final_trivial_steps_section}

In the proofs of the performance of EXPLOIT-UCB-DOUBLE, for the sake of clarity, we drop the exponent $n$ in the notation of EXPLOIT-UCB-DOUBLE, and $\pi^n$ becomes $\pi$. The main objective of this section is to prove that EXPLOIT-UCB-DOUBLE is regret-wise Pareto-optimal in the case of rewards in $\{-1, 0, 1\}$ and with parameter $n = \log T$.

The first subsection provides a preliminary lemma, useful for the proof of the Pareto-optimality exposed in the second subsection. The last subsection makes use of the preliminary lemma to derive an upper bound on the relative regret of EXPLOIT-UCB-DOUBLE in the general case.

\subsection{Preliminary Lemma}\label{preliminary_lemma_appendix}

\begin{lemma}\label{reward_expressions_lemma}
Let $\pi$ be any policy. Then, it holds that
\begin{equation}\label{reward_basic_inequality}
    \Rew(\pi) \leq P(\tau(B, \pi)\geq \sqrt{T}) \times \max_{k\in [K]} \mu_k T + o(T).
\end{equation}
Furthermore, if $\pi$ is an anytime policy, it holds that
\begin{equation*}
    \Rew(\pi) = P(\tau(B, \pi)=\infty)\E\left[\sum_{t=1}^T X_t^{\pi_t} \Big| \tau(B, \pi)\geq T\right] + o(T).
\end{equation*}
\end{lemma}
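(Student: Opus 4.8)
The plan is to reduce both statements to the elementary fact that, under any policy, conditioning on the history up to time $t-1$ replaces the observed reward $X_t^{\pi_t}$ by the mean $\mu_{\pi_t}$ of the pulled arm. Concretely, the event $\{\tau(B,\pi)\geq t-1\}$ depends only on the rewards observed strictly before step $t-1$, and the pulled arm $\pi_t$ is a function of the history up to time $t-1$; both are therefore measurable with respect to the natural filtration $\mathcal{F}_{t-1}$, while $X_t^{\pi_t}$ has conditional mean $\mu_{\pi_t}$ given $\mathcal{F}_{t-1}$. The tower property then gives
\begin{equation*}
    \E\left[X_t^{\pi_t}\1_{\tau(B,\pi)\geq t-1}\right] = \E\left[\mu_{\pi_t}\1_{\tau(B,\pi)\geq t-1}\right],
\end{equation*}
so that $\Rew(\pi) = \sum_{t=1}^T \E\left[\mu_{\pi_t}\1_{\tau(B,\pi)\geq t-1}\right]$. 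This identity is the common entry point for both parts.

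For the first inequality, set $\mu^\star := \max_{k\in[K]}\mu_k$ and split the sum at $\lceil\sqrt T\rceil$. For $t\leq \lceil\sqrt T\rceil$ each summand is bounded in absolute value by $1$ (means lie in $[-1,1]$), contributing $O(\sqrt T)=o(T)$. For $t> \lceil\sqrt T\rceil$ I would use $\mu_{\pi_t}\leq \mu^\star$ together with the non-negativity of the indicator to get $\E[\mu_{\pi_t}\1_{\tau\geq t-1}]\leq \mu^\star\, P(\tau(B,\pi)\geq t-1)$, and then exploit that $t-1\geq \sqrt T$ forces $\{\tau\geq t-1\}\subseteq\{\tau\geq \sqrt T\}$, whence $P(\tau\geq t-1)\leq P(\tau\geq \sqrt T)$. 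Summing the at most $T$ such terms yields $\mu^\star\, P(\tau(B,\pi)\geq\sqrt T)\,T$, and adding the $o(T)$ head gives \eqref{reward_basic_inequality}. The only point requiring care is the sign of $\mu^\star$: when $\mu^\star\leq 0$ all arms have nonpositive drift, so $P(\tau\geq\sqrt T)\to0$ and both sides are $o(T)$, handling that degenerate case separately.

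For the second identity I would decompose $\Rew(\pi)=\E[S_T\1_{\tau\geq T}]+\E[S_T\1_{\tau<T}]$. On $\{\tau\geq T\}$ all indicators $\1_{\tau\geq t-1}$ with $t\leq T$ equal $1$, so $S_T=\sum_{t=1}^T X_t^{\pi_t}$ there. On $\{\tau<T\}$ the partial sum is pinned down by the budget: survival through step $\tau-1$ and increments in $[-1,1]$ force $\sum_{t=1}^{\tau} X_t^{\pi_t}\in(-B-1,-B]$, so $S_T$, which adds at most one further bounded increment, is bounded by a constant independent of $T$; hence $\E[S_T\1_{\tau<T}]=O(1)=o(T)$. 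Writing the surviving term as $P(\tau\geq T)\,\E[\sum_{t=1}^T X_t^{\pi_t}\mid \tau\geq T]$ and using $|\E[\sum_{t=1}^T X_t^{\pi_t}\mid\tau\geq T]|\leq T$, I would replace $P(\tau\geq T)$ by $P(\tau=\infty)$ at the cost of $\big(P(\tau\geq T)-P(\tau=\infty)\big)T=P(T\leq\tau<\infty)\,T$; since $\pi$ is anytime the law of $\tau$ is fixed and $P(T\leq\tau<\infty)\to0$, making this correction $o(T)$.

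The main obstacle is the bookkeeping of the $o(T)$ remainders in the second part: the argument crucially relies on the structural fact that, once ruin occurs, the accumulated reward cannot drift below $-B-1$, which is what makes $\E[S_T\1_{\tau<T}]$ a genuine constant rather than something growing with $T$; and on the tail convergence $P(\tau\geq T)\to P(\tau=\infty)$, which is exactly where the anytime hypothesis is needed, so that $\tau$ refers to a single, horizon-independent process. Everything else is the routine conditional-expectation and monotonicity manipulation above.
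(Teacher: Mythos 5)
Your proof is correct and follows essentially the same route as the paper's: for the first claim, the same split of the sum at $\sqrt{T}$ with the tail bounded via the tower property by $\max_k \mu_k \cdot P(\tau(B,\pi)\geq\sqrt{T})\,T$, and for the second, the same decomposition on $\{\tau<T\}$ versus $\{\tau\geq T\}$ using that ruin pins the partial sum in $(-B-1,-B]$ and that $P(\tau(B,\pi)\geq T)\to P(\tau(B,\pi)=\infty)$ for an anytime policy. Your explicit remark on the sign of $\max_k\mu_k$ is a point the paper leaves implicit, but otherwise the two arguments coincide.
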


\begin{proof}
In order to prove the first statement of the lemma, we decompose the expected cumulative reward as follows:
\begin{align*}
    \Rew(\pi) &= \E\left[\sum_{t=1}^{\tau(B, \pi)} X_t^{\pi_t}\1_{\tau(B, \pi)<\sqrt{T}}\right] + \E\left[\sum_{t=1}^T X_t^{\pi_t}\1_{\tau(B, \pi)\geq t-1}\1_{\tau(B, \pi)\geq \sqrt{T}}\right] \\
    &\leq -B + P(\tau(B, \pi)\geq \sqrt{T}) \times \E\left[\sum_{t=1}^T X_t^{\pi_t}\1_{\tau(B, \pi)\geq t-1} \bigg| \tau(B, \pi)\geq \sqrt{T}\right] \\
    &\leq -B + \sqrt{T} + P(\tau(B, \pi)\geq \sqrt{T}) \times \E\left[\sum_{t=\sqrt{T}+1}^T X_t^{\pi_t}\1_{\tau(B, \pi)\geq t-1} \bigg| \tau(B, \pi)\geq \sqrt{T}\right] \\
    &\leq -B + \sqrt{T} + P(\tau(B, \pi)\geq \sqrt{T}) \times \max_{k\in [K]} \mu_k (T - \sqrt{T}) \\
    &\leq P(\tau(B, \pi)\geq \sqrt{T}) \times \max_{k\in [K]} \mu_k T + 2\sqrt{T},
\end{align*}
which gives the desired result. For the second statement, we start by writing the reward as
\begin{align*}
    \Rew(\pi) &= \E\left[\sum_{t=1}^T X_t^{\pi_t}\1_{\tau(B, \pi)\geq t-1}\right] \\
    &= \E\left[\sum_{t=1}^T X_t^{\pi_t}\1_{\tau(B, \pi)\geq t-1}\1_{\tau(B, \pi)<T}\right] + \E\left[\sum_{t=1}^T X_t^{\pi_t}\1_{\tau(B, \pi)\geq T}\right] \\
    &= \E\left[\sum_{t=1}^{\tau(B, \pi)} X_t^{\pi_t}\1_{\tau(B, \pi)<T}\right] + \E\left[\sum_{t=1}^T X_t^{\pi_t}\1_{\tau(B, \pi)\geq T}\right].
\end{align*}

\noindent By definition of $\tau(B, \pi)$, 
$$\E\left[\sum_{t=1}^{\tau(B, \pi)} X_t^{\pi_t}\1_{\tau(B, \pi)<T}\right] \leq -B < \E\left[\sum_{t=1}^{\tau(B, \pi) - 1} X_t^{\pi_t}\1_{\tau(B, \pi)<T}\right],$$
and since the rewards are bounded in $[-1, 1]$, we deduce that
$$-(B+1) < \E\left[\sum_{t=1}^{\tau(B, \pi)} X_t^{\pi_t}\1_{\tau(B, \pi)<T}\right] \leq -B,$$
which implies
\begin{equation*}
    \Rew(\pi) = \E\left[\sum_{t=1}^T X_t^{\pi_t}\1_{\tau(B, \pi)\geq T}\right] + o(T).
\end{equation*}

\noindent It is trivial that, for any anytime policy $\pi$,
\begin{equation}\label{manip_on_proba_T}
    P(\tau(B, \pi)\geq T) = P(\tau(B, \pi) = \infty) + o_{T\to +\infty}(1).
\end{equation}

\noindent This implies
\begin{align*}
    \Rew(\pi) &= \E\left[\sum_{t=1}^T X_t^{\pi_t}\1_{\tau(B, \pi)\geq T}\right] + o(T) \\
    &= P(\tau(B, \pi)\geq T)\E\left[\sum_{t=1}^T X_t^{\pi_t}\Bigg| \tau(B, \pi)\geq T\right] + o(T) \\
    &= P(\tau(B, \pi)=\infty)\E\left[\sum_{t=1}^T X_t^{\pi_t}\Bigg| \tau(B, \pi)\geq T\right] + o(T),
\end{align*}
which concludes the proof of the lemma.
\end{proof}

\subsection{Proof of the Pareto-optimality}

We denote by $\pi^n$ the anytime policy EXPLOIT-UCB-DOUBLE with parameter $n\geq 1$. Then, Propositions~\ref{proba_survival_exploit_dbl} and \ref{reward_exploit_dbl}, along with Lemma~\ref{reward_expressions_lemma} give the cumulative reward of EXPLOIT-UCB-DOUBLE:
\begin{equation*}
    \Rew(\pi^n) \geq \left(1 - p^{\text{EX}} - \frac{\left(p^{\text{EX}}\right)^{nB}}{1 - \left(p^{\text{EX}}\right)^{nB}}\right) \max_{k\in [K]} \mu_k T + o(T).
\end{equation*}

\noindent In particular, we deduce the reward of the (non-anytime) policy EXPLOIT-UCB-DOUBLE with parameter $n = \log T$:
\begin{equation*}
    \Rew(\pi^{\log T}) = \left(1 - p^{\text{EX}}\right) \max_{k\in [K]} \mu_k T + o(T).
\end{equation*}

\noindent Recall from Lemma~\ref{reward_expressions_lemma} that, for any policy $\tilde{\pi}$,
\begin{equation*}
    \Rew(\tilde{\pi}^T) \leq P\left(\tau(B, \tilde{\pi}^T)\geq \sqrt{T}\right) \max_{k\in [K]} \mu_k T + o(T),
\end{equation*}
and as a result,
\begin{equation*}
    \frac{\Rew(\tilde{\pi}^T) - \Rew(\pi^{\log T})}{T} \leq \left(P\left(\tau(B, \tilde{\pi}^T)\geq \sqrt{T}\right) - (1-p^{\text{EX}})\right) \max_{k\in [K]} \mu_k + o(1).
\end{equation*}

\noindent For $T\geq \frac{9B^2}{\Delta_F^2}$, it holds that
\begin{equation*}
    \frac{\Rew(\tilde{\pi}^T) - \Rew(\pi^{\log T})}{T} \leq \left(P\left(\tau(B, \tilde{\pi}^T)\geq \frac{3B}{\Delta_F}\right) - (1-p^{\text{EX}})\right) \max_{k\in [K]} \mu_k + o(1),
\end{equation*}
and taking the limit gives
\begin{equation*}
    \Reg(\pi\|\tilde{\pi}) \leq \left(p^{\text{EX}} - P\left(\tau(B, \tilde{\pi}^T)< \frac{3B}{\Delta_F}\right)\right) \max_{k\in [K]} \mu_k,
\end{equation*}
where $\pi$ denotes $(\pi^{\log T})_{T\geq 1}$ the optimally-tuned EXPLOIT-UCB-DOUBLE. Then, assume that 
\begin{equation*}
    \sup_F \Reg(\pi\|\tilde{\pi}) > 0,
\end{equation*}
which implies that there exists some arm distributions $\tilde{F}$ such that
\begin{equation*}
    p^{\text{EX}} - P_{\tilde{F}}\left(\tau(B, \tilde{\pi}^T)< \frac{3B}{\Delta_{\tilde{F}}}\right) > 0.
\end{equation*}

\noindent Then, by Theorem~\ref{non_asymptotic_lower_bound_theorem}, there exist some other arm distributions $F$ such that
\begin{equation*}
    p^{\text{EX}} - P_{F}\left(\tau(B, \tilde{\pi}^T)< \frac{3B}{\Delta_{F}}\right) < 0,
\end{equation*}
and since $\max_{k\in [K]} \mu_k >0$, this implies
\begin{equation*}
    \inf_F \Reg(\pi\|\tilde{\pi}) < 0,
\end{equation*}
proving that $(\pi^{\log T})_{T\geq 1}$ is regret-wise Pareto-optimal.

\subsection{Relative Regret of EXPLOIT-UCB-DOUBLE in the general case}\label{relative_regret-exploit_dbl_general_section}

In the general case, Propositions~\ref{proba_survival_exploit_dbl} and \ref{reward_exploit_dbl} and Lemma~\ref{reward_expressions_lemma} imply that the cumulative reward of EXPLOIT-UCB-DOUBLE $\pi^n$ with parameter $n\geq 1$ is
\begin{equation}
    \Rew(\pi^n) \geq \left(1 - p^{\text{EX}} - \frac{\left(p^{\text{EX}}\right)^{nB}}{1 - \left(p^{\text{EX}}\right)^{nB}}\right) \max_{k\in [K]} \mu_k T + o(T). \label{exploit_dbl_almost_pareto_ineq}
\end{equation}

\noindent Let $\pi'$ be any policy. The previous result implies that there exist some arm distributions $F$ such that
\begin{equation*}
    \lim_{T\to+\infty} \frac{\Rew(\pi') - \left(1 - p^{\text{EX}}\right) \max_{k\in [K]} \mu_k T}{T} < 0.
\end{equation*}

\noindent With \eqref{exploit_dbl_almost_pareto_ineq}, this implies that EXPLOIT-UCB-DOUBLE $\pi^n$ achieves, for any policy $\pi'$,
\begin{equation*}
    \inf_F \Reg(\pi^n \|\pi') = \inf_F \lim_{T\to+\infty} \frac{\Rew(\pi') - \Rew(\pi^n)}{T} < \frac{\left(p^{\text{EX}}\right)^{nB}}{1 - \left(p^{\text{EX}}\right)^{nB}} \max_{k\in [K]} \mu_k.
\end{equation*}

\section{Practical Performance of the Algorithms Introduced}\label{experiments_appendix}

In this appendix, we provide some additional experimental results on the performance of the algorithms EXPLOIT-UCB and EXPLOIT-UCB-DOUBLE introduced in this paper. More precisely, we compare their practical performance to the classic bandit algorithms
UCB (\citealp{auer}) and  Multinomial Thompson Sampling (MTS, \citealp{riou}), the latter of which is chosen for its optimality in the classic MAB of multinomial arms of given support.
We used the UCB index of form $\Bar{x}_i + \sqrt{\frac{\log t}{2 n_i}}$ for UCB, EXPLOIT-UCB and EXPLOIT-UCB-DOUBLE, where $\Bar{x}_i$ and $n_i$ are the mean reward and the number of samples from arm $i$, respectively. We will further look at the impact of the hyperparameter $n$ on the survival regret of the algorithm EXPLOIT-UCB-DOUBLE, and therefore, we will consider EXPLOIT-UCB-DOUBLE for various values of $n$, including the case where $n$ is properly tuned as $n = \lceil\log T\rceil$.

For all the experiments performed, we consider a bandit setting with $K = 3$ multinomial arms of common support $\{-1, 0, 1\}$ and distributions $F^{(i_1)}, F^{(i_2)}$ and $F^{(i_3)}$, where $i_1, i_2, i_3 \in \{1, \dots, 10\}$. The distributions $F^{(i)}$ for $i\in\{1, \dots, 10\}$ are described below:

\begin{center}
\begin{tabular}{l l l} 
 $F^{(1)} = \Mult(0.4, 0.12, 0.48)$; & $F^{(2)} = \Mult(0.04, 0.88, 0.08)$; & $F^{(3)} = \Mult(0.5, 0.1, 0.4)$; \\ 
 $F^{(4)} = \Mult(0.48, 0, 0.52)$; & $F^{(5)} = \Mult(0.04, 0.91, 0.05)$; & $F^{(6)} = \Mult(0.45, 0, 0.55)$; \\
 $F^{(7)} = \Mult(0.05, 0.85, 0.1)$; & $F^{(8)} = \Mult(0.5, 0, 0.5)$; & $F^{(9)} = \Mult(0.495, 0, 0.505)$; \\
 $F^{(10)} = \Mult(0.049, 0.9, 0.051)$. & & \\
\end{tabular}
\end{center}

\noindent We set the horizon $T$ equal to $20000$ and we further assume that the initial budget $B$ is a multiple of $3$, the number of arms. As a consequence, in this framework, EXPLOIT-UCB-DOUBLE with the parameter $n = \lceil\log T\rceil = 10$ is the theoretically recommended parameter. We will also study EXPLOIT-UCB-DOUBLE with parameters $n=1$ and $n=100$ and see if their survival regret differs much from EXPLOIT-UCB-DOUBLE with parameter $n=\lceil\log T\rceil$ in practice. The average survival time $\min\{T, \tau\}$, as well as the proportion of ruins (i.e. the proportion of simulations for which the ruin occurred) of each algorithm in every setting considered, are gathered in Tables \ref{tab:average_ruin} and \ref{tab:proportion_ruins}, where EX-UCB denotes EXPLOIT-UCB, and EX-D denotes EXPLOIT-UCB-DOUBLE. They are the averages over $200$ trials, except for the last setting with $B = 30$ and arm distributions $\{F^{(9)}, F^{(10)}, F^{(3)}\}$ whose result is the average over $500$ trials.

\begin{table}[H]
\caption{\label{tab:average_ruin} Average Survival Time.}
\begin{tabular}{|c|c||c|c|c|c|c|c|} 
 \hline
  \multicolumn{2}{|c||}{Setting} & \multicolumn{6}{|c|}{Average Survival Time} \\ [0.5ex] 
 \hline
 Budget & Arms & UCB & MTS & EX-UCB & EX-D$(\log T)$ & EX-D$(1)$ & EX-D$(100)$ \\ 
 \hline\hline 
 $B = 9$ & $\{F^{(1)}, F^{(2)}, F^{(3)}\}$ & 17128 & 13957 & 18210 & 18510 & 18712 & 18613 \\ 
 \hline
 $B = 9$ & $\{F^{(4)}, F^{(5)}, F^{(3)}\}$ & \phantom{0}8856 & \phantom{0}5218 & 12160 & 10016 & \phantom{0}9176 & \phantom{0}9680 \\
 \hline
 $B = 9$ & $\{F^{(6)}, F^{(7)}, F^{(8)}\}$ & 18211 & 16427 & 18617 & 19305 & 18408 & 18910 \\
 \hline
 $B = 30$ & $\{F^{(1)}, F^{(2)}, F^{(3)}\}$ & 20000 & 19904 & 20000 & 20000 & 20000 & 20000 \\ 
 \hline
 $B = 30$ & $\{F^{(9)}, F^{(10)}, F^{(3)}\}$ & \phantom{0}8931 & \phantom{0}6708 & 11912 & 11358 & 11136 & 11273 \\
 \hline
\end{tabular}
\end{table}

\begin{table}[H]
\caption{\label{tab:proportion_ruins} Proportion of Ruin of the Algorithms.}
\begin{tabular}{|c|c||c|c|c|c|c|c|} 
 \hline
  \multicolumn{2}{|c||}{Setting} & \multicolumn{6}{|c|}{Proportions of Ruins on $200$ Trials} \\ [0.5ex] 
 \hline
 Budget & Arms & UCB & MTS & EX-UCB & EX-D$(\log T)$ & EX-D$(1)$ & EX-D$(100)$ \\ 
 \hline\hline 
 $B = 9$ & $\{F^{(1)}, F^{(2)}, F^{(3)}\}$ & 0.15 & 0.31 & 0.09 & 0.08 & 0.07 & 0.07 \\ 
 \hline
 $B = 9$ & $\{F^{(4)}, F^{(5)}, F^{(3)}\}$ & 0.57 & 0.75 & 0.4 & 0.52 & 0.56 & 0.54 \\
 \hline
 $B = 9$ & $\{F^{(6)}, F^{(7)}, F^{(8)}\}$ & 0.09 & 0.18 & 0.07 & 0.04 & 0.08 & 0.06 \\
 \hline
 $B = 30$ & $\{F^{(1)}, F^{(2)}, F^{(3)}\}$ & 0 & 0.01 & 0 & 0 & 0 & 0 \\
 \hline
 $B = 30$ & $\{F^{(9)}, F^{(10)}, F^{(3)}\}$ & 0.68 & 0.76 & 0.5 & 0.56 & 0.57 & 0.58 \\
 \hline
\end{tabular}
\end{table}

\subsection{The Case of Small Budget}

In Figure \ref{fig:smab:long_full1}, we first consider the setting where the (multinomial) arm distributions are $\{F^{(1)}, F^{(2)},$ $F^{(3)}\}$ and the budget is set to $B = 9$. It is very clear that, in this setting, EXPLOIT-UCB-DOUBLE, whatever the value of the parameter $n\in \{1, \lceil\log T\rceil, 100\}$, outperforms UCB, MTS and EXPLOIT-UCB. The performance of EXPLOIT-UCB-DOUBLE does not seem to be so significantly affected by the value of the parameter $n$, as the proportion of ruins is very similar for those three algorithms, around $0.07$ (see Table \ref{tab:proportion_ruins}). On the other hand, UCB and MTS suffer from more frequent ruins than EXPLOIT-UCB and EXPLOIT-UCB-DOUBLE, and hence have a poor performance.

\begin{figure}[t]
\centering
\includegraphics[scale=0.8]{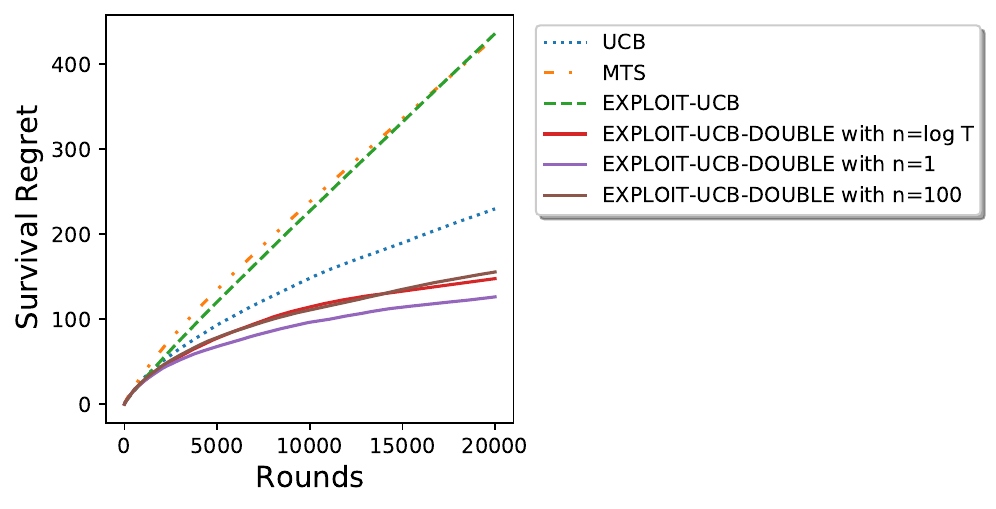}
\caption{Survival regret for $B=9$ and arms $\{F^{(1)}, F^{(2)}, F^{(3)}\}$.}
\label{fig:smab:long_full1}
\end{figure}

In the classic stochastic bandit setting, MTS is theoretically and practically optimal and its performance is much better than the one of UCB (see \citealp{riou}). On the other hand, in the S-MAB setting, its survival regret is much larger than the one of UCB, showing that the randomization factor in Thompson sampling is the source of more frequent ruins: MTS suffers a ruin for a proportion of $0.31$ of the simulations, while UCB suffers a ruin only for a proportion of $0.15$ of the simulations (see Table \ref{tab:proportion_ruins}). On the other hand, EXPLOIT-UCB has a poor performance, comparable to the one of MTS in this setting. While the risk of ruin of EXPLOIT-UCB is very low and comparable to the one of the various EXPLOIT-UCB-DOUBLE algorithms considered, it suffers from its lack of exploration.

\begin{figure}[t]
\centering
\includegraphics[scale=0.8]{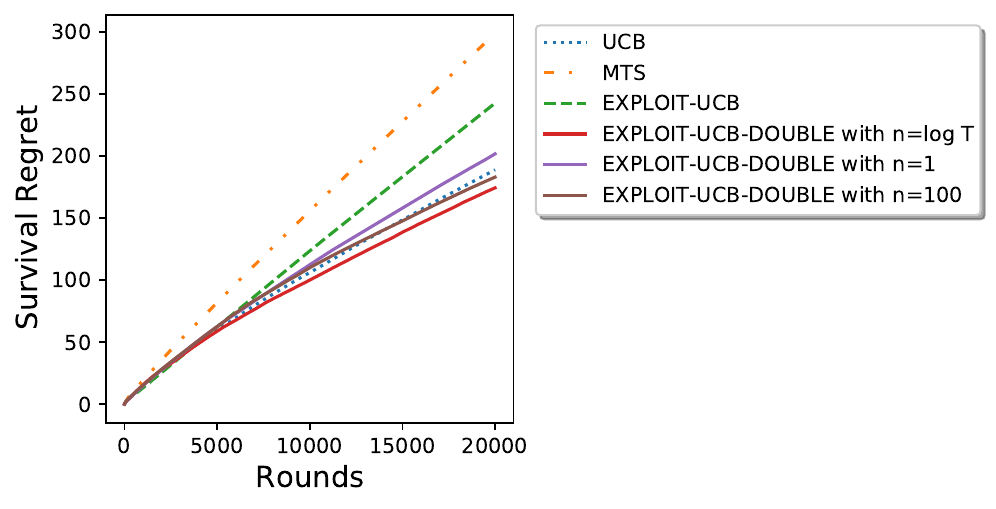}
\caption{Survival regret for $B=9$ and arms $\{F^{(4)}, F^{(5)}, F^{(3)}\}$.}
\label{fig:smab:long_full2}
\end{figure}

In Figure \ref{fig:smab:long_full2}, the budget is still set to $B = 9$ while the arm distributions are $\{F^{(4)}, F^{(5)}, F^{(3)}\}$. In this setting, the arm of distribution $F^{(3)}$ has a negative expectation $-0.1$ (and a probability of survival equal to $0$), while the arms of distributions $F^{(4)}$ and $F^{(5)}$ are hard to set apart: their expectations are both low, at $0.04$ and $0.01$ respectatively, with fairly different probabilities of survival at $0.08$ and $0.2$ respectively. Here, UCB performs very well, even at a comparable level to the three EXPLOIT-UCB-DOUBLE algorithms, because it performs a strong exploitation between arms $F^{(4)}$ and $F^{(5)}$, and avoids the worst arm $F^{(3)}$. In contrast, due to the small budget, the three EXPLOIT-UCB-DOUBLE sometimes pull arm $F^{(3)}$. However, they take advantage of their budgeted exploitation to pull more often arm $F^{(5)}$, whose probability of survival is larger than the one of arm $F^{(4)}$, catching up with UCB's regret. This explains why UCB has a smaller survival time and a larger proportion of ruins than EXPLOIT-UCB-DOUBLE algorithms for any parameter $n\in \{1, \lceil\log T\rceil, 100\}$ (see Tables \ref{tab:average_ruin} and \ref{tab:proportion_ruins}). Please note that EXPLOIT-UCB-DOUBLE with the properly tuned parameter $n = \lceil\log T\rceil$, as well as EXPLOIT-UCB-DOUBLE with $n = 100$, perform better than UCB.

On the other hand, EXPLOIT-UCB has a much larger survival regret than its doubling-tricked counterparts. Looking carefully at the average time of ruin of the algorithms however, it is clear that, as expected, EXPLOIT-UCB has a smaller risk of ruin (see Tables \ref{tab:average_ruin} and \ref{tab:proportion_ruins}). The reason behind the larger survival regret of EXPLOIT-UCB is again its lack of exploration. Indeed, recall that, in this setting, the arms expectations are quite low (resp. $0.04, 0.01$ and $-0.1$), and hence, even when the ruin does not occur, it is frequent that the arm with the largest expectation $0.04$ has exhausted its budget share $\frac{B}{K} = 3$, while the other arm of positive expectation $0.01$ has not. In such cases, EXPLOIT-UCB is falls in the trap of pulling arm $F^{(4)}$ permanently, implying a large survival regret. The closeness of the expectations of the arms in this setting also explains the low performance of MTS, which suffers frequent ruins, with a very low average time of ruin around $\tau = 5000$ and a ruin which occurs on $3/4$ of the simulations (see Tables \ref{tab:average_ruin} and \ref{tab:proportion_ruins}).

\begin{figure}[t]
\centering
\includegraphics[scale=0.8]{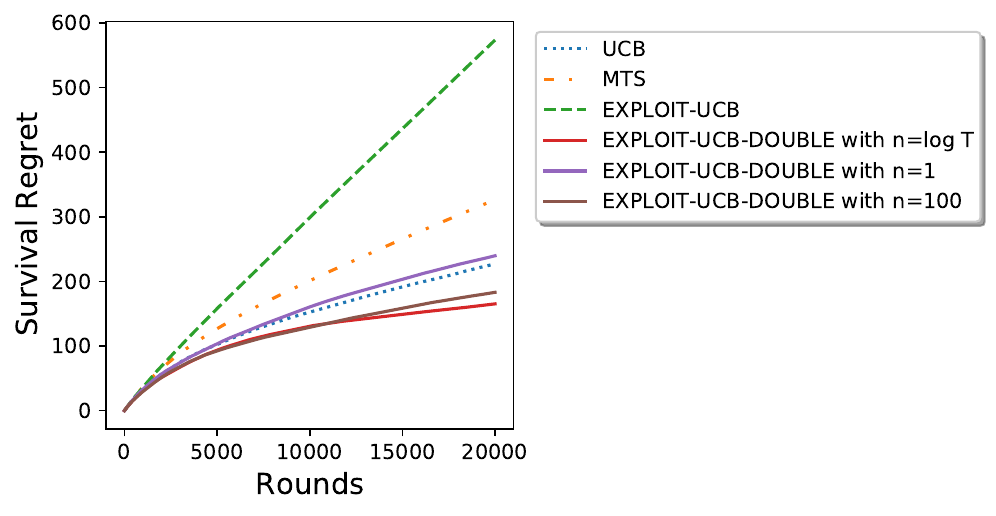}
\caption{Survival regret for $B=9$ and arms $\{F^{(6)}, F^{(7)}, F^{(8)}\}$.}
\label{fig:smab:long_full3}
\end{figure}

In Figure \ref{fig:smab:long_full3}, the budget is still set to $B = 9$ while the arm distributions are $\{F^{(6)}, F^{(7)}, F^{(8)}\}$. The arm of distribution $F^{(8)}$ has an expectation equal to $0$ and a probability of ruin equal to $0$. The arms of distributions $F^{(6)}$ and $F^{(7)}$ have relatively close expectations $0.1$ and $0.05$, but very different probabilities of survival $0.18$ and $0.5$ respectively. While the results in this case are very similar to the ones of Figure \ref{fig:smab:long_full1}, there are two notable differences we would like to point out. The first one is that in this setting, while the survival regret of UCB is close to the one of EXPLOIT-UCB-DOUBLE with parameter $n=1$, it is much larger than the one of EXPLOIT-UCB-DOUBLE with parameter $n = 100$ or $n = \lceil\log T\rceil$. Theoretically, $n$ is the exploration parameter of EXPLOIT-UCB-DOUBLE: the larger the parameter $n$, the later the exploration. Therefore, if $n$ is very small, EXPLOIT-UCB-DOUBLE will start the exploration at a very early stage, and its behavior will not be so different from the one of UCB. In this case, it will start pulling arm $F^{(6)}$ too early, instead of focusing on the safer arm $F^{(7)}$ which has a large probability of survival $0.5$. This considerably increases its risk of ruin, and hence increases its survival regret to UCB level. On the other hand, if $n$ is very large, then the exploration will be delayed and this will result in a lower risk of ruin, explaining the better performance of EXPLOIT-UCB-DOUBLE for $n = \lceil\log T\rceil$ and $n = 100$.

The second phenomenon we wish to point out is that in this setting, EXPLOIT-UCB has a survival regret which is even larger than the one of MTS, contrary to the setting of Figure \ref{fig:smab:long_full1}, where both algorithms had a similar survival regret. The reason behind this is that this setting is easier compared to the one of Figure \ref{fig:smab:long_full1}, in the sense that the arm expectations are quite different. As a result, having a stronger exploration component yields a better regret than being too conservative.

Overall, this phenomenon can be easily observed looking carefully at the three previous figures. In Figure \ref{fig:smab:long_full2}, the arm expectations are very small (respectively $0.04$ and $0.01$ for arms $1$ and $2$), making this setting very difficult in practice. As a result, algorithms with a strong exploitation component perform better and in particular, EXPLOIT-UCB performs much better than MTS. Then, the setting of Figure \ref{fig:smab:long_full1} is a little easier to handle, with arms whose expectations are slightly larger (respectively $0.08$ and $0.04$ for arms $1$ and $2$), and in this setting, EXPLOIT-UCB and MTS perform comparably. Eventually, in Figure \ref{fig:smab:long_full3}, the arms expectations are very distinct, being respectively $0.1$ and $0.05$ for arms $1$ and $2$, and in this setting, MTS outperforms EXPLOIT-UCB.

The reason underlying this phenomenon is that, while the risk of ruin of MTS is always higher than the one of EXPLOIT-UCB, the difference in the risk of ruin between the two algorithms becomes lower as the arm expectations grow: Table \ref{tab:proportion_ruins} shows that the difference between the proportion of ruins of MTS and EXPLOIT-UCB goes from $0.35$ for the setting of Figure \ref{fig:smab:long_full2}, to $0.22$ for the setting of Figure \ref{fig:smab:long_full1}, to only $0.11$ for the setting of Figure \ref{fig:smab:long_full3}. The lesson to remember from this is that, depending on the arm parameters, the strong performance of a good exploration may sometimes compensate for frequent ruins.

\subsection{The Case of Large Budget}

In Figures \ref{fig:smab:long_full4} and \ref{fig:smab:long_full55}, we eventually study if some better performance can be achieved by traditional algorithms when the budget is large, i.e. of the order of $\log T$. 

In Figure~\ref{fig:smab:long_full4}, we take the setting of Figure~\ref{fig:smab:long_full1} with arm distributions $\{F^{(1)}, F^{(2)}, F^{(3)}\}$, but we increase the budget from $B = 9$ to $B = 30$. Previously, MTS suffered from many ruins and its survival regret was the largest among all the algorithms studied. In the same setting but with such a large budget, the ruin never happens, as we can see in Table \ref{tab:proportion_ruins}. Therefore, it is natural that MTS, which has the best theoretical and practical performance in this case, outperforms the other algorithms. Nevertheless, please note that the survival regret of EXPLOIT-UCB-DOUBLE, whatever the parameter $n$, is comparable to the performance of UCB. Another fact is that, the lower the parameter $n$ of EXPLOIT-UCB-DOUBLE, the more it explores and as a consequence, EXPLOIT-UCB-DOUBLE performs better in this case when the parameter $n$ is small.

\begin{figure}[t]
\centering
\includegraphics[scale=0.8]{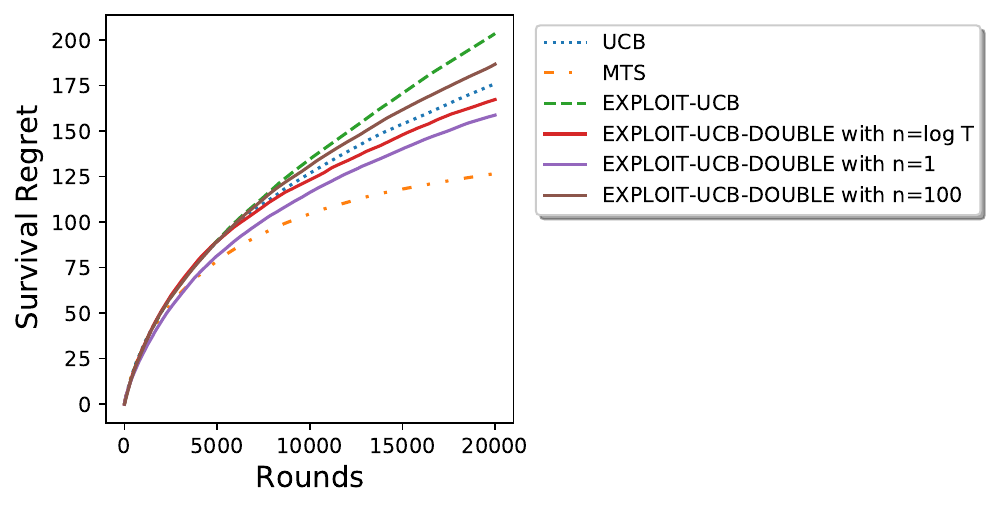}
\caption{Survival regret for $B=30$ and arms $\{F^{(1)}, F^{(2)}, F^{(3)}\}$.}
\label{fig:smab:long_full4}
\end{figure}

Please note that in some settings though, even such a large budget cannot prevent the ruin from happening. Looking carefully at Figure \ref{fig:smab:long_full55}, for which the budget is still set to $B = 30$ and the arm distributions are $\{F^{(9)}, F^{(10)}, F^{(3)}\}$, it is clear that the stronger the exploration, the larger the survival regret. In this setting, the arm expectations are very small, and therefore, a lack of exploitation leads to very frequent ruins. In this setting, EXPLOIT-UCB is the only algorithm which suffers a ruin less than for half of the simulations, while the proportion of ruins of EXPLOIT-UCB-DOUBLE is around $0.57$ (whatever the parameter $n$). This is in stark contrast to UCB and MTS, which suffer a proportion of ruins around $0.68$ and $0.76$ respectively.

\begin{figure}[t]
\centering
\includegraphics[scale=0.8]{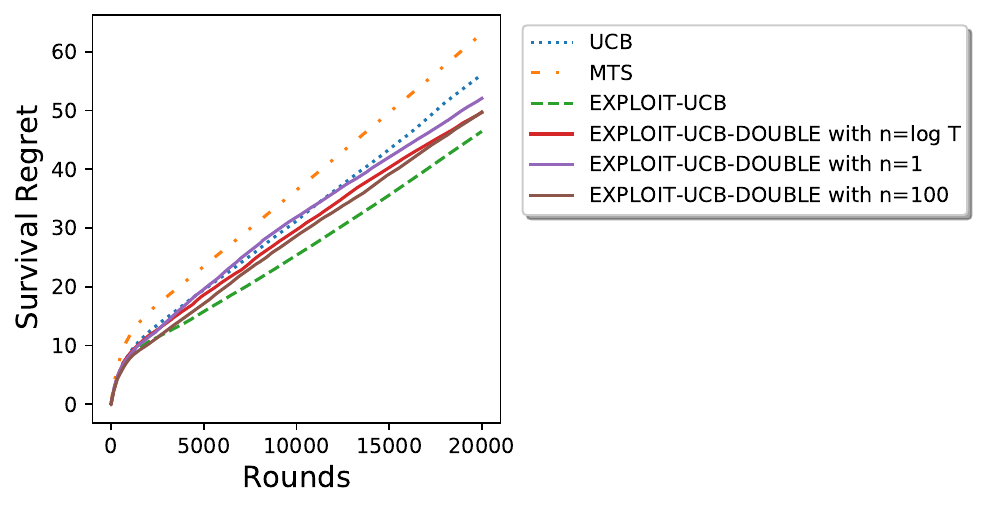}
\caption{Survival regret for $B=30$ and arms $\{F^{(9)}, F^{(10)}, F^{(3)}\}$.}
\label{fig:smab:long_full55}
\end{figure}

\end{document}